\documentclass[11pt]{article}

\usepackage[preprint]{acl}

\usepackage{times}
\usepackage{latexsym}
\usepackage[T1]{fontenc}
\usepackage[utf8]{inputenc}
\usepackage{microtype}
\usepackage{inconsolata}
\usepackage{graphicx}
\usepackage{amsmath, amssymb, amsthm}
\usepackage{tikz}
\usepackage{pgfplots}
\usepgfplotslibrary{groupplots}
\usepackage{booktabs}
\usepackage{multirow}
\usepackage{enumitem}
\pgfplotsset{compat=newest}
\usepackage{pgfplotstable}
\usepackage{float}
\usepackage{dblfloatfix}
\usepackage[most]{tcolorbox}
\usepackage{pifont}
\usepackage{algorithm}
\usepackage{algpseudocode}

\newtheorem{proposition}{Proposition}

\title{Failed Reasoning Traces Tell You What Is Fixable\\(But Not by Reading Them)}

\author{
  Nizar Islah$^{1,2}$ \quad Istabrak Abbes$^{1,2}$ \quad Irina Rish$^{1,2}$ \quad Sarath Chandar$^{1,3}$ \quad Eilif B. Muller$^{1,2,4}$ \\
  $^1$Mila - Quebec AI Institute \quad $^2$Universit\'e de Montr\'eal \\
  $^3$Polytechnique Montr\'eal \quad $^4$CHU Sainte-Justine \\
  \texttt{nizar.islah@mila.quebec} \\}

\begin{document}
\maketitle

\begin{abstract}

When post-trained language models fail on reasoning problems, the common test-time-scaling response is to spend more compute on additional attempts, and the failed traces play no further role. We argue this discards a crucial signal; some failures come from unlucky sampling, where more rollouts help, while others are structural and resist resampling regardless of budget. We propose that failed traces encode recoverability structure: the inference-time signature of which test-time interventions can rescue a given failure. Three problem-level trajectory features, derived from the structure of available interventions, recover this structure from the distributional signature of failed rollouts, not their text. They cluster failures into stable regimes, characterize the failure topography of different post-training methods ($84.3{\pm}4.3\%$ accuracy, $+20\%$ over a majority-class baseline), and support a training-free routing rule that lifts rescue by $+12.2\%$ on the deployment-relevant Steerable-Hard subset (failures where retry is insufficient and a bounded intervention is reachable). The features and the routing rule transfer across two cross-family probes. The same three features thus convert failed traces from discarded data into a diagnostic object, supporting test-time routing and post-training analysis without training-time or weight-space access.
\end{abstract}

\section{Introduction}
\label{sec:intro}

\begin{figure*}[!t]
    \centering
    \includegraphics[width=\textwidth]{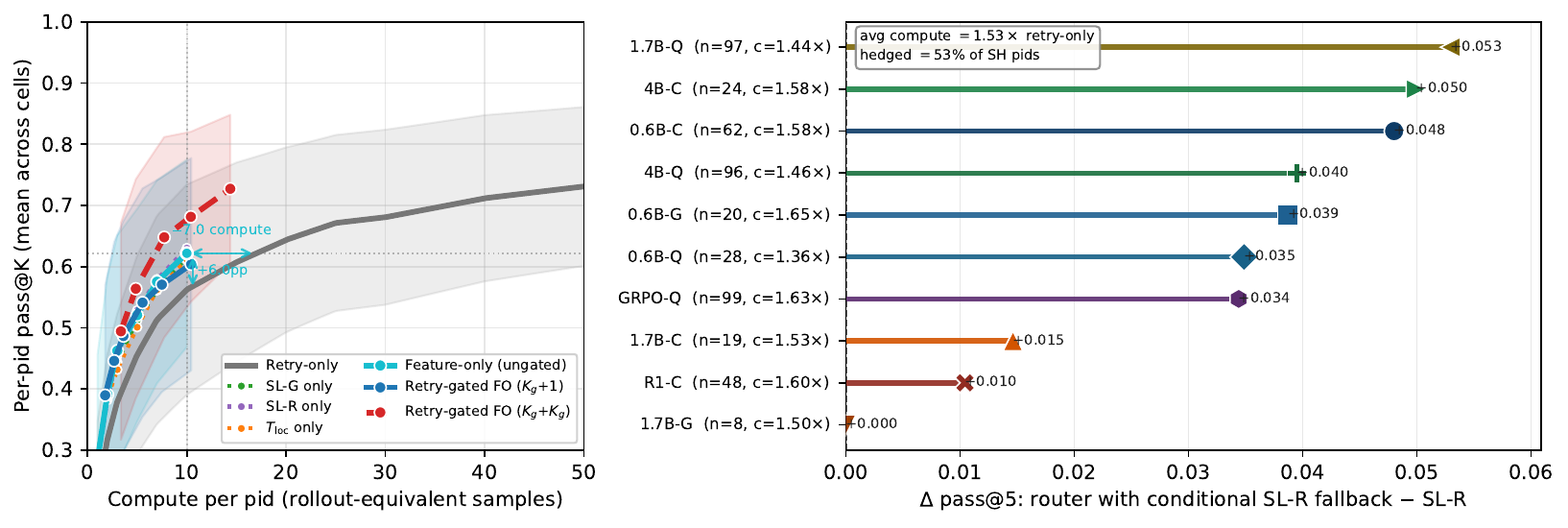}
    \caption{\textbf{Motivation: routing test-time compute.} A router
    that reads the three trajectory features turns the structure of failed
    traces into a test-time-compute policy. The two panels make the case
    from two angles: Panel (a) is the accuracy-vs-compute Pareto curve,
    showing the router reaches retry-level rescue at far lower compute, and
    Panel (b) shows that the router rescues Steerable-Hard (SH) problems
    that no single fixed operator covers.
    \textbf{(a)} Pareto frontier of per-problem pass rate against compute,
    averaged over 10 (model, task) cells; better policies sit toward the
    upper left (higher rescue, less compute). The retry-only baseline
    (solid grey) is measured at $K{=}50$ rollouts; the single operators
    (SL-G, SL-R, $T_{\rm loc}$), Feature-only routing (cyan), and its
    retry-gated variants ($K_g{+}1$, $K_g{+}K_g$) are all measured at
    $K{=}10$. The cyan crosshairs mark Feature-only routing at $K{=}10$: it
    matches the per-problem rescue rate of retry at $K{=}50$ using
    substantially less compute.
    \textbf{(b)} Per-cell lift of the router over its single fallback
    operator on the Steerable-Hard subset (problems with retry$@32{=}0$),
    across 11 cells (one more than Panel a). Each horizontal bar gives the
    mean per-problem gain $\Delta = (\text{router with conditional SL-R
    fallback}) - \text{SL-R}$ in unbiased pass$@5$. Because the fallback
    policy is a strict superset of SL-R, $\Delta$ is non-negative by
    construction; the $x$-axis therefore starts at $0$, and longer bars
    mark cells where the router rescues problems SL-R alone cannot. Marker
    shape and color identify the cell; $n$ is its SH problem count and $c$
    the average compute in operator-equivalents ($1.0{\times}$ when the
    router's choice is kept, $2.0{\times}$ when the SL-R fallback fires).}
    \label{fig:flops-pareto}
\end{figure*}


Reasoning models are increasingly evaluated and deployed under test-time compute budgets. When a model fails, the standard response is to sample again. Best-of-$N$ \citep{stiennon2020summarize,cobbe2021verifiers}, self-consistency \citep{wang2023selfconsistency}, and retry-style evaluation \citep{chen2021codex}, along with recent test-time scaling that allocates more inference compute \citep{snell2024scaling,deepseekai2025r1,muennighoff2025s1}, all spend additional compute by drawing more trajectories from the same post-trained model distribution. This is effective when the first failure was an unlucky sample. But it is wasteful when the model is locked into a stable wrong trajectory, and incomplete when the failure is locally recoverable only by changing the trajectory rather than resampling it. In deployment, the central question becomes \emph{given that the model has already failed, what should the next unit of compute do?}


Aggregate accuracy cannot answer this question. It maps every failed trace to the same label, although failures may arise from sampling noise, a single demoted reasoning step, a localized misrouting event, or a trace-wide deformation of the model's reasoning dynamics. These cases require different responses: retry, local perturbation, logit-space steering, or no further spending under the available budget. We argue that the failed trace itself is the diagnostic object that distinguishes them. The signal we read is distributional, the per-token probability signature of the trace rather than its natural-language content, which separates this diagnostic from verbal self-correction that re-reads and critiques the text.

We study failed traces through an \textbf{inference-time, post-hoc, trace-level diagnostic}. Retry and temperature-based resampling are rank-preserving: they can reweight the specialist's local distribution but cannot make a lower-ranked token become the local mode. Logit steering toward a lineage ancestor acts in natural-parameter space (it averages logits, not probabilities) and can invert local ranks when the specialist and ancestor disagree. This operator-class distinction determines what a trace-level diagnostic should measure.

We focus on the subset where routing has deployment value. A problem-unit is \emph{Steerable-Hard} when standard retry is insufficient but at least one bounded test-time intervention is reachable. Outside this subset, routing is either unnecessary, because retry already works, or futile, because none of the available operators rescues the problem.

From failed rollouts alone, we extract three problem-level features derived from the operator geometry: deformation spread, junction concentration, and local displacement budget. These features support both a routing example (\S\ref{sec:routing-result}) and an audit channel (\S\ref{sec:population}). The routing example uses a training-free rule fixed by the operator class: diffuse deformation routes to dense steering, concentrated junctions to sparse steering, and high local headroom to local temperature lifting.

\paragraph{Contributions.}
We contribute (i) a \emph{reframing} of failed reasoning traces as
diagnostic objects, asking what the trace itself tells us about
test-time recoverability: which test-time interventions can rescue
which failures, and on what scope (\emph{Steerable-Hard}: retry
insufficient and a bounded intervention reachable); (ii) a \emph{theoretical lens}
that derives the three trajectory features from operator-class
structure (rank-preserving vs natural-parameter, with a closed-form
sufficient-not-necessary inversion bound;
Props.~\ref{thm:rank-preservation}, \ref{thm:rank-inversion};
\S\ref{sec:framework}); (iii) a \emph{diagnostic audit channel}: the same features recover
SFT-vs-RL post-training distinctions from failed rollouts alone,
without training-time, weight-space, or paired-comparison access
(\S\ref{sec:population}); and
(iv) a \emph{routing example} demonstrating the features are
deployment-actionable, via a training-free operator-class routing rule
on \emph{Steerable-Hard} (Eq.~\ref{eq:steerable-hard}: retry insufficient
\emph{and} a bounded intervention reachable;
\S\ref{sec:prospective}); and (v) \emph{cross-family transfer} of
both the audit signatures and the routing rule between Qwen3,
R1-Distill-Qwen-Math-1.5B, and Phi-4-mini-reasoning, giving evidence
that the features encode operator-class structure rather than
specialist-specific artifacts (\S\ref{sec:population},
App.~\ref{app:cross-family}). Code and the feature-extraction
pipeline will be released upon publication.

\section{Setup}
\label{sec:setup}

\paragraph{Operators.} Five test-time interventions span two
mechanism classes (Table~\ref{tab:operators}; Proposition derivations in
App.~\ref{app:moved-methods}). \textbf{Rank-preserving}: \textbf{retry}
(re-sample at $T{=}0.6$)\footnote{\emph{retry@$K$} is functionally
\emph{best-of-$K$} sampling: take $K$ independent rollouts and
succeed if any pass. We use the term ``retry'' because the dispatch's
failure set is conditioned on an already-failed rollout pool
(the initial $K{=}10$ rollouts on which Fail@$K{=}10$ is
defined); each additional ``retry'' is functionally one more
best-of sample on top of that pool.} and \textbf{local temperature}
($T_{\mathrm{loc}}{=}1.5$ at a detected junction) re-weight the
support but cannot invert local token ranks. \textbf{Natural-parameter}:
\textbf{SL-G}, \textbf{SL-R}, and \textbf{DL} apply logit
mixing $z_\alpha = \alpha z_S + (1-\alpha) z_A$ at $\alpha{=}0.7$
(an $e$-geodesic toward the pre-training checkpoint
$z_A$; App.~\ref{app:info-geom}, all $\alpha \in [0.5, 0.8]$ beat
retry; per-cell optima vary and are reported in
App.~\ref{app:alpha-sensitivity}\footnote{The
canonical $\alpha{=}0.7$ choice is near-optimal on most cells but
not all; e.g., Qwen3 SFT-1.7B $\times$ CruxEval peaks at $\alpha{=}0.6$
with a $+25$ pp SH lift gap, per the family-wise grid in
App.~\ref{app:alpha-sensitivity}.}), differing
in \emph{where}: SL-G at the detector-detected junction
(App.~\ref{app:junction-detection}), SL-R at a uniformly random
position, DL at every position. SL-G$\leftrightarrow$SL-R
isolates the value of position selection at matched steering strength.

\begin{table}[t]
\centering\small
\begin{tabular}{lll}
\toprule
Operator & Acts at & Class \\
\midrule
retry & full re-sample & rank-preserving \\
$T_{\mathrm{loc}}$ & detected junction & rank-preserving \\
SL-G & detected junction & rank-inverting \\
SL-R & random position & rank-inverting \\
DL & every position & rank-inverting \\
\bottomrule
\end{tabular}
\caption{The five test-time operators: where each acts and its
mechanism class (\S\ref{sec:framework}). Rank-preserving operators
re-weight the specialist's distribution; rank-inverting operators
(logit mixing toward the ancestor at $\alpha{=}0.7$) can move a
lower-ranked token to the local mode.}
\label{tab:operators}
\end{table}

\subsection{Why these features: the operator class determines what is measurable}
\label{sec:framework}

\paragraph{What a test-time operator can actually do locally.}
When a trace has already failed, the question is not whether to spend
more compute but \emph{what the available operators can change about
it}. Retry re-samples from the same specialist distribution: the most
likely next token stays the most likely. Local temperature flattens
that distribution but cannot move a different token to the top. Logit
mixing with the lineage ancestor is qualitatively different: a token
the specialist confidently prefers can be suppressed if the ancestor
disagrees. Two operator classes, two reaches.

\paragraph{The geometric distinction.}
Probability mixing (the \emph{$m$-geodesic},
$\tilde p = \beta p_S + (1-\beta)p_A$) is a weighted average; the
argmax token survives whenever the specialist's top margin is large
enough. Logit steering (the \emph{$e$-geodesic},
$p_\alpha \propto p_S^\alpha p_A^{1-\alpha}$) is a product-of-experts:
a token must have non-negligible probability under \emph{both} models
to stay top. Only the second class can invert local ranks. Retry and
$T_{\mathrm{loc}}$ are rank-preserving; SL-G, SL-R, and DL are
rank-inverting.

\paragraph{What the operator class forbids, and what it permits.}
Rank-preserving operators cannot rescue a Rank Misrouting failure
(Prop.~\ref{thm:rank-preservation}), while the $e$-geodesic admits a
closed-form rank-inversion threshold $\alpha^\star$ with local
displacement budget equal to the Fisher information of the
specialist's local temperature submodel
(Prop.~\ref{thm:rank-inversion}; App.~\ref{app:moved-methods}),
measured as $V_{t^\star}$ at the worst pivot the trace exposes. The
$\alpha^\star$ condition of Prop.~\ref{thm:rank-inversion} is
sufficient for single-position rank inversion at $\alpha\!{<}\!\alpha^\star$;
it is not necessary for rescue. Empirically, only a minority of
SH-rescued pids meet the condition at the engine-detected junction,
and most engine-selected injection positions place the specialist
and ancestor on the same argmax token (App.~\ref{app:bucket-analysis}). The
operator class therefore predicts the right \emph{class} of
intervention (rank-preserving vs.\ natural-parameter), even when the
per-pid rescue mechanism is more diffuse than the closed-form picture.

\paragraph{From the geometry to the features.}
Each of the three problem-level features measures one geometric
quantity the operator class makes actionable:
$\bar J_{\mathrm{frac+}}$ (density of $e$-geodesic deformation across
the trace $\to$ dense intervention $\mathrm{DL}$);
$\bar C{=}\bar J_{\max}/\bar J_{\mathrm{mean}}$ (concentration $\to$
sparse intervention at the right position $\mathrm{SL\text{-}G}$);
$V_{t^\star}$ (local displacement budget at the worst pivot $\to$
$T_{\mathrm{loc}}$ when high, where bounded steering still permits
rank inversion). The operator the geometry prescribes is the one
whose dominant feature is highest. This is not an empirical fit; it
is the rule Eq.~\ref{eq:prospective-rule} implements in
\S\ref{sec:prospective}, tested by the dispatch in
\S\ref{sec:routing-result}.

As a motivating example we contrast $V_{t^\star}$ on trap vs.\ logic
tokens (App.~Fig.~\ref{fig:h-vs-vt}).

\paragraph{Specialists and tasks.} Two post-training regimes against
the same Qwen3 lineage ancestors carry the headline dispatch claim:
SFT and GRPO at three scales (0.6B, 1.7B, 4B), trained on
Bespoke-Stratos-17k \citep{bespokelabs2025stratos}. Two
\textbf{cross-family probes} on structurally different model families
and post-training methods cover generalization beyond the Qwen3
lineage: \textbf{R1-Distill-Qwen-Math-1.5B} (Qwen2.5-Math base,
RL-distilled from R1) and \textbf{Phi-4-mini-reasoning}
(Phi-3 architecture, instruction-tuned reasoning); details in
App.~\ref{app:cross-family}. Three held-out evaluation tasks:
\textbf{CruxEval} \citep{gu2024cruxeval} (code input/output
prediction), \textbf{GSM8K} \citep{cobbe2021verifiers} (grade-school
math), \textbf{GPQA} \citep{rein2023gpqa} (graduate scientific
reasoning), each at $k{=}10$ rollouts per problem, $T{=}0.6$.
Bootstrap 95\% CIs and full hyperparameters: App.~\ref{app:bootstrap-cis},
\ref{app:training-details}.

\paragraph{Problem-unit framing.} We evaluate per-problem-unit (one
problem ID summarized over its $k$ failed rollouts) rather than
per-cell to avoid averaging over heterogeneous failure populations.
Across the 10 evaluation cells, $N{=}1{,}625$ problem-units fail at
least once; $1{,}423$ of these carry operator outcomes (the pool on
which all routing claims below are computed), and $718$ survive
Fail@$K{\geq}10$ (all $K$ rollouts incorrect at $T{=}0.6$). A problem is in Fail@$K$ when standard sampling has
already failed, so any recovery measures something beyond exhaustive
search.

\paragraph{Dispatch protocol.}
\textbf{Repair@3}: each (pid, operator) gets three single-position
injection attempts drawn from the pid's three deepest available
conditioning levels (App.~\ref{app:led-cache}); any-of-three defines
the rescue. Per dispatch cell (regime $\times$ $\bar V_{\mathrm{HL}}$)
and operator we compute $U_i$ (unique rescues), $R_i = U_i/N_{\mathrm{cell}}$,
and the dispatch score
\begin{equation}
\label{eq:dispatch-score}
S_i = U_i \cdot R_i^{\gamma}, \qquad \gamma = 0.5
\end{equation}
(a coverage-rate blend stable across $\gamma \in [0.3, 0.8]$;
App.~\ref{app:gamma-sensitivity}). The policy routes each pid to its
cell's $\arg\max_i S_i$ operator (§\ref{sec:routing-result}).

\paragraph{Trajectory features (used by every subsequent section).}
Three problem-level trajectory features ($\bar J_{\mathrm{frac+}}$,
$\bar C$, $V_{t^\star}$) and the routing key $\bar V_{\mathrm{traj}}$
are defined formally in Box~\ref{box:features}; their geometric
motivation is in \S\ref{sec:framework}. The reference distribution
for the steering operators is the lineage pre-training checkpoint
(App.~\ref{app:ancestor-choice} shows ancestor identity is
interchangeable within reasonable capability range under a
well-calibrated detector).


\paragraph{Recoverability regimes (used in every subsequent section).}
Running the three features through k-means ($k{=}4$, initialised at
canonical centroids from a cell-level rule classifier; details in
\S\ref{sec:features} and App.~\ref{app:regime-classifier}) partitions
failed problems into \textsf{RM-G} (Rank Misrouting, geo-local),
\textsf{RM-D} (Rank Misrouting, junction-diffuse), \textsf{DD}
(Distributed Deformation), and Unresolved
(Table~\ref{tab:regime-glossary}).
Figs.~\ref{fig:regime-clustering}--\ref{fig:regime-distribution} ground
this partition: the four regimes occupy distinct corners of feature
space (Fig.~\ref{fig:regime-clustering}) and their per-cell composition
already foreshadows the SFT-vs-RL contrast we unpack in
\S\ref{sec:population} (Fig.~\ref{fig:regime-distribution}).

\begin{figure*}[!t]
  \centering
  \includegraphics[width=\textwidth]{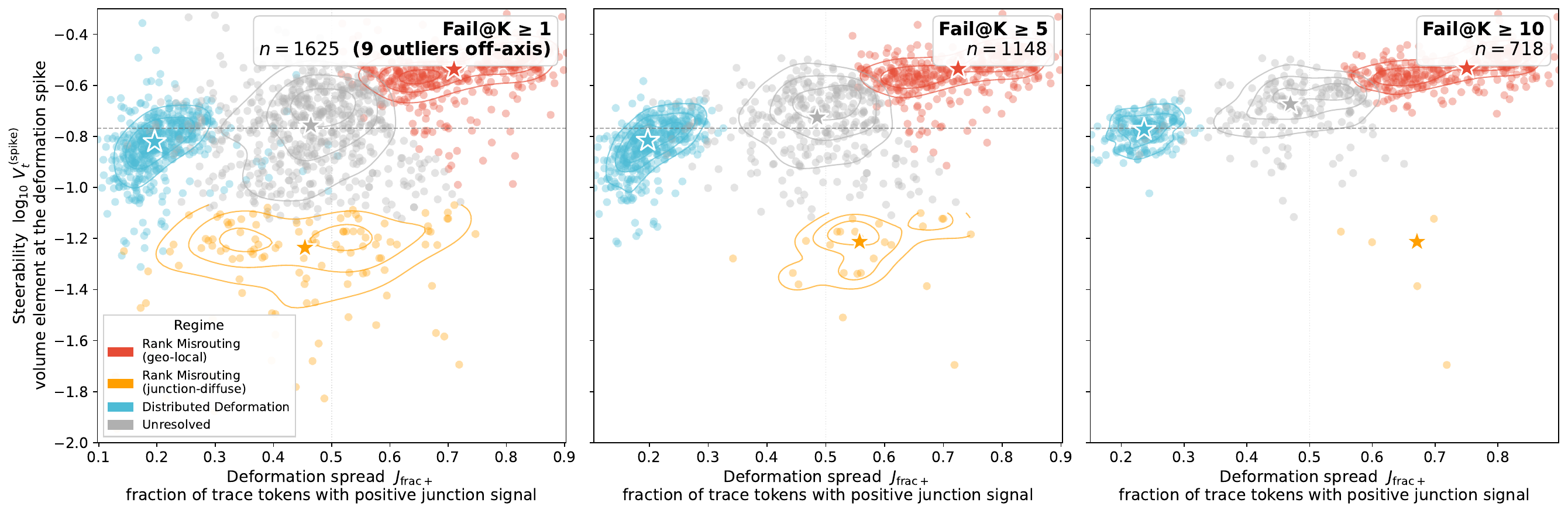}
  \caption{\textbf{Problem-level regimes in (deformation spread,
    junction steerability) space.} Axes are
    $\bar J_{\mathrm{frac+}}$ and $\log_{10}\!\bar V_{t^\star}$
    (the classifier features). Each point is one problem-unit;
    color encodes the nearest-centroid regime. The dashed horizontal
    line marks the median of $\log_{10}\!\bar V_{\mathrm{traj}}$
    (the dispatch H/L routing key, a different stable feature; see
    Box~\ref{box:features}). Star markers are per-regime medians;
    contours are KDE iso-density. As Fail@$K$ deepens, shallow
    sampling accidents are filtered out and the feature clusters
    tighten. The
    $(\bar J_{\mathrm{frac+}}, \log_{10}\!\bar C)$ projection appears
    as Figure~\ref{fig:regime-clustering-appendix} (Appendix).}
  \label{fig:regime-clustering}
\end{figure*}

\begin{figure}[!t]
  \centering
  \includegraphics[width=\linewidth]{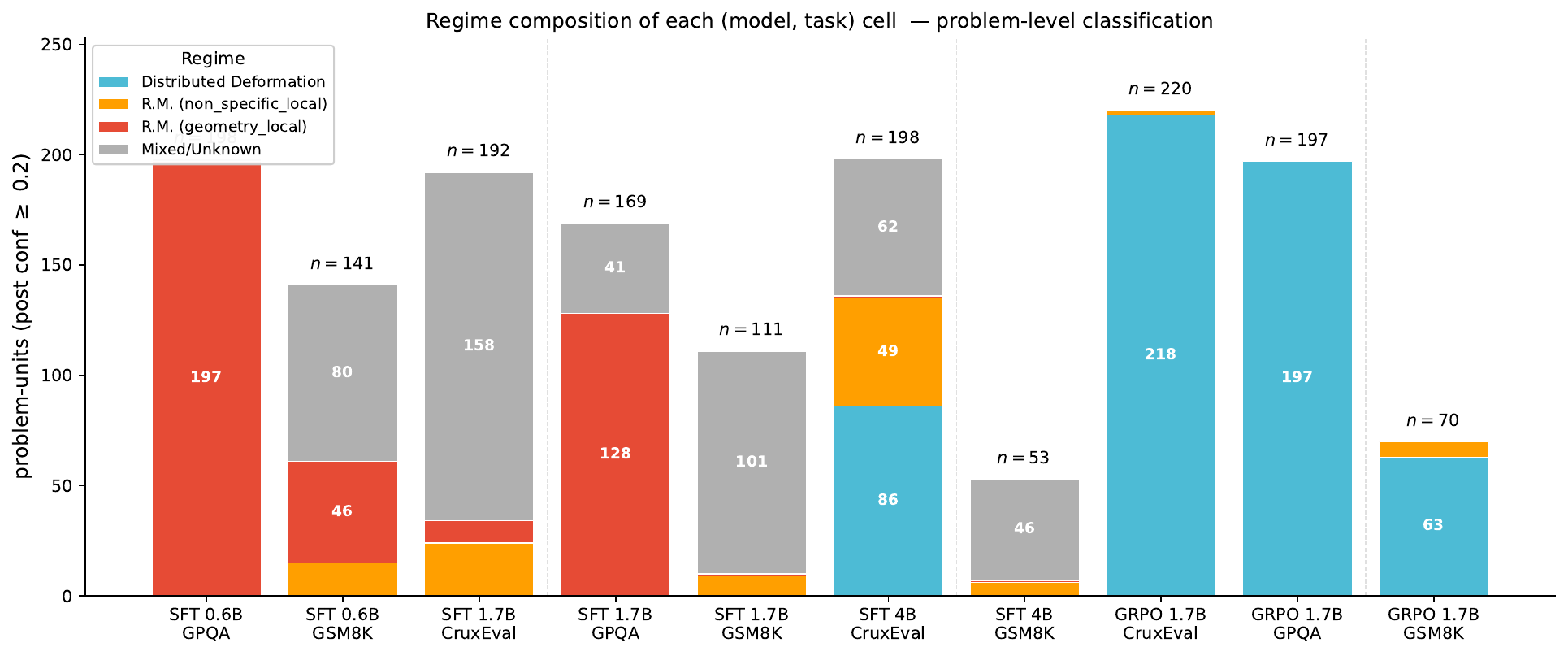}
  \caption{\textbf{Regime composition of every (model, task) cell.}
    Stacked bars; bar height $=$ number of problem-units in the cell.
    GRPO 1.7B collapses into Distributed Deformation across all three
    tasks, consistent with reported RL-driven support compression
    \cite{chu2025sftmemorizes,zhu2025rlvrprincipals,shenfeld2025rlsrazor}.
    SFT spreads across the remaining three regimes by task identity.
    Audit-channel analysis in \S\ref{sec:population}.}
  \label{fig:regime-distribution}
\end{figure}

\paragraph{Steerable-Hard problem-units (the routable target population).}
\label{sec:steerable-hard}
The pids on which any test-time router can show measurable gain are
those for which retry alone is inadequate \emph{and} at least one
intervention beats retry. Concretely, a problem-unit is
\textbf{Steerable-Hard} when, at the per-pid level,
\begin{equation}
\label{eq:steerable-hard}
\resizebox{\columnwidth}{!}{$
\underbrace{\max_{o \in \{\text{SL-G},\,\text{SL-R},\,T_{\mathrm{loc}},\,\text{DL}\}} r_o
   \;-\; r_{\text{retry}}}_{\text{steerable: }\geq\,5\,\text{pp}}
\quad\text{AND}\quad
\underbrace{1 - r_{\text{retry}}}_{\text{hard: }\geq\,0.5}
$}
\end{equation}
where $r_o$ is the rescue rate of operator $o$ at repair@3.\footnote{Operator
rescue rates in the Steerable-Hard definition and the prospective routing
results (\S\ref{sec:prospective}) are mean per-attempt success probabilities,
averaged over the available background-conditioning depths; the dispatch
(\S\ref{sec:routing-result}) and causal ladder report any-of-$k$ rescue
instead.}
$37.1\%$ ($528/1{,}423$) of our failed problem-units satisfy both
conditions; outside this subset routing is either unnecessary
(retry alone rescues, $43\%$) or futile (no operator rescues, $20\%$).
This is the subset that the framework of \S\ref{sec:framework}
identifies as the operator class's target population: where
Prop.~\ref{thm:rank-preservation} forbids retry recovery (a
confidently-wrong specialist preference cannot be inverted by
rank-preserving operators) and at least one $e$-geodesic intervention
reaches its $\alpha^\star$ threshold (Prop.~\ref{thm:rank-inversion})
within our $\alpha$ grid.
All routing claims below report numbers both on the full pool and
on this subset, because gains on \emph{Steerable-Hard} are the
gains that translate to deployment value.

\paragraph{Three demonstrations of one framework.}
The three trajectory features are the central object of the paper:
they are derived from operator-class geometry, i.e.\ the structure of available interventions and what they can change about a failed trace (\S\ref{sec:framework})
and we demonstrate that they encode operator-relevant structure
through three downstream uses. The features cluster failures into
stable recoverability regimes (\S\ref{sec:features}); the regime
distribution of a (model, task) cell audits its post-training
method (\S\ref{sec:population}); and the same features support a
training-free operator-class routing rule
(\S\ref{sec:routing-result}--\ref{sec:prospective}). The unifying
claim is that failed traces contain operator-class signature,
recoverable from three problem-level features, and that signature
is deployment-actionable along multiple axes.


\section{Routing test-time compute by operator class}
\label{sec:routing-result}

A cell-aware dispatch over the five test-time operators tests the framework's prediction that regime-conditional operator choice should beat retry (rank-preserving; Prop.~\ref{thm:rank-preservation}) and any uniform single operator.

\textbf{The dispatch headline.} Using the regime label and the
steerability split $\bar V_{\mathrm{HL}}$ from \S\ref{sec:setup},
each pid's cell routes to the operator with the largest
cell-aggregate $S_i = U_i\cdot R_i^{0.5}$. Dispatch sends all failed
problem-units to a non-retry operator ($81.8\%$ to sparse
logit-steering, $18.2\%$ to $T_{\mathrm{loc}}{=}1.5$), beating retry
by $\Delta\Sigma S{=}+35.7$ ($\Sigma S{=}693.4$ vs.\ $657.7$).
$\Sigma S = U \cdot R^{0.5}$ is a compute-aware aggregate; the
deployment-realistic per-pid pass-rate lift on the Steerable-Hard
subset is shown in Fig.~\ref{fig:flops-pareto}b. The
strongest single operator (uniform SL-R) reaches $\Sigma S{=}678.7$,
so no operator wins by more than $\sim21$ $\Sigma S$; the lift comes
from \emph{cell-level} operator preferences. The key decision is binary, whether to
inject at all, and within the sparse class the choice
(SL-G vs.\ SL-R) is empirically fungible
(\S\ref{subsec:bucket-analysis}). The trace-level geometry to
inference-time operator-class connection is what prior weight-space
\cite{zhu2025rlvrprincipals} and trace-level \cite{wei2023doa}
accounts do not provide.

\textbf{The feature space predicts where retry fails.} Retry rescues only $39.6\%$ of Rank Misrouting (geo-local) problems but $65.1\%$ of Unresolved problems ($\Delta{=}-25.5$ pp, $p{<}0.0001$ unpaired bootstrap; full pairwise contrasts in App.~\ref{app:retry-by-regime}), so the features predict where additional sampling will pay off before any intervention is run. Cases A--C (Figure~\ref{fig:routing-case-study}, App.~\ref{app:qualitative-cases}) illustrate the binary decision on prototypical traces.

\begin{table*}[!t]
\centering
\small
\setlength{\tabcolsep}{4pt}%
\resizebox{\textwidth}{!}{%
\begin{tabular}{l c r l r r r r r r}
\toprule
 & & & & \multicolumn{3}{c}{Rescue rate $R$} & \multicolumn{3}{c}{Dispatch score $S = U R^{0.5}$} \\
\cmidrule(lr){5-7}\cmidrule(lr){8-10}
Regime & $V_t$ & $n$ & Op & Dispatch & retry & SL-R & Dispatch & $\Delta$ vs retry & $\Delta$ vs SL-R \\
\midrule
  DD & H & 430 & \textbf{SL-R} & 0.533 & 0.521 & 0.533 & \textbf{167.2} & +5.5 & +0.0 \\
  DD & L & 93 & \textbf{SL-G} & 0.968 & 0.914 & 0.957 & \textbf{88.5} & +7.3 & +1.5 \\
  RM-G & L & 422 & \textbf{SL-R} & 0.422 & 0.396 & 0.422 & \textbf{115.6} & +10.5 & +0.0 \\
  Unresolved & H & 184 & \textbf{SL-R} & 0.707 & 0.679 & 0.707 & \textbf{109.3} & +6.3 & +0.0 \\
  Unresolved & L & 280 & $T_{\mathrm{loc}}{=}1.5$ & 0.636 & 0.632 & 0.621 & \textbf{141.9} & +1.2 & +4.8 \\
  RM-D & H & 38 & \textbf{SL-G} & 0.658 & 0.579 & 0.579 & \textbf{20.3} & +3.5 & +3.5 \\
  RM-D & L & 89 & \textbf{SL-G} & 0.685 & 0.674 & 0.640 & \textbf{50.5} & +1.2 & +4.9 \\
\midrule
\multicolumn{3}{l}{$\Sigma$ across 10 cells} & — & — & — & — & \textbf{693.4} & +35.7 & +14.7 \\
\bottomrule
\end{tabular}}
\caption{\textbf{Regime-aware dispatch by (regime, $V_t$ H/L). }Each row is one dispatch cell; \emph{Op} is the per-cell dispatch choice (operator with the highest cell-aggregate $S = U \cdot R^{0.5}$). Three rescue rates per cell: the dispatch's R, retry R (every cell does retry), and uniform-SL-R R (every cell does SL-R, the globally strongest single operator). $\Delta$ vs retry and $\Delta$ vs SL-R are dispatch $S$ minus each baseline. Rows in \textcolor{gray}{gray} (if any) are small-$n$ cells ($n{<}30$) where the cell-aggregate score is dominated by sampling noise; included for completeness but not load-bearing for headline comparisons. The dispatch totals $U{=}891$, $\Sigma S{=}693.4$ vs.\ $U{=}860$, $\Sigma S{=}657.7$ for retry ($\Delta\Sigma S{=}+35.7$) and $U{=}879$, $\Sigma S{=}678.7$ for uniform SL-R ($\Delta\Sigma S{=}+14.7$). $V_t$ H/L = median split on $\log_{10}\bar V_{\mathrm{traj}}$ (per-pid trajectory-mean Fisher info, pooled ICC $0.88$).}
\label{tab:dispatch}
\end{table*}

\textbf{Steerability is the routing mechanism.} Every regime flips
its winning operator across the $\bar V_{\mathrm{HL}}$ boundary
(Table~\ref{tab:dispatch}); six of ten cells route by steerability
side. The flips are class-level (sparse vs.\ retry vs.\
$T_{\mathrm{loc}}$), not within the sparse class. Steerability
supplies the operator-class selection work the regime label alone
cannot do. Per-pid winner \emph{prediction} (the exact operator that wins for a
specific pid) is not recoverable above baseline
(App.~\ref{app:stability}, Table~\ref{tab:cell-vs-pid}),
the dispatch's lift is a population-scale claim. The weaker per-pid
\emph{routing-decision} signal (which broad action class is
appropriate) does carry actionable information on the
Steerable-Hard subset; we return to a training-free per-pid variant
in \S\ref{sec:prospective}.

\textbf{Per-FLOP efficiency: dispatch is Pareto-dominant against
scaled retry.} On the two cells where $K{=}64$ retry pools exist,
dispatch@3 is Pareto-dominant against scaled retry at
${\sim}1.5{\times}$ iso-rescue compute (Fig.~\ref{fig:flops-pareto};
full per-cell curves in App.~\ref{app:retry-by-regime}).

\phantomsection\label{subsec:bucket-analysis}%
\textbf{Operators are mechanistically distinct; the binary inject-vs-retry call dominates within-class choice.}
On Steerable-Hard, pairwise Jaccard of rescued-pid sets is $0.10$--$0.59$ across families; the all-three-operator intersection is at most $24\%$ while the any-operator union covers $100\%$, so each operator pulls a partially-different pid subset and operator-class routing extracts lift no uniform single operator can match. Within the sparse class, SL-G vs.\ SL-R is family-dependent in direction but small in magnitude (Qwen3 favors SL-R by $+6.9$ pp; R1-Distill favors SL-G by $+11.4$ pp; Phi-4 tied); the key call is inject vs.\ retry, not position within the sparse class (App.~\ref{app:bucket-analysis}, Table~\ref{tab:bucket-analysis}).

\section{Prospective test-time routing policy}
\label{sec:prospective}

\S\ref{sec:framework} prescribes a per-problem rule: route to the
operator whose corresponding feature is the dominant geometric
signature for this trace. The dispatch in \S\ref{sec:routing-result}
consumes operator outcomes to define its regime labels and
steerability split; we now ask the complementary, prospectively
actionable question: can the same three features \emph{alone},
computed from a
cell's failed rollouts with no operator assay on the test cell,
instantiate that rule?

\paragraph{The rule.} For each failed pid the rule
z-normalises the three classifier features and applies the
\emph{paper-derived} mapping
\begin{equation}
\label{eq:prospective-rule}
\resizebox{\columnwidth}{!}{$
\hat o(p) =
\begin{cases}
\text{DL}    & \text{if } \arg\max(\hat z_{\!J_+},\,\hat z_{\!\log C},\,\hat z_{\!\log V_{t^\star}}) = \hat z_{\!J_+}, \\
\text{SL-G}  & \text{if } \arg\max(\cdot) = \hat z_{\!\log C}, \\
T_{\mathrm{loc}}{=}1.5 & \text{if } \arg\max(\cdot) = \hat z_{\!\log V_{t^\star}},
\end{cases}
$}
\end{equation}
mirroring \S\ref{sec:features}'s regime semantics (broad
junction signal $\to$ distributed deformation $\to$ DL;
sharp single junction $\to$ geometry-local $\to$ SL-G;
high entropy headroom at the spike $\to$ entropy-brittle
$\to$ local temperature lift). No gate is applied, every failed
pid is routed to the argmax operator. Per-pid cost is dominated by
the routing operator's single attempt (mean ${\approx}1.4$
single-rollout units across the chosen operator mix); feature
extraction is O($N_{\text{tokens}}$) arithmetic on top-$k$ logprobs
that vLLM can save during the original spec rollout, and is
negligible relative to the intervention itself, so the policy
runs at \emph{comparable retry cost}.

\begin{table*}[t]
\centering
\footnotesize
\caption{\textbf{Prospective test-time routing policy compared to baselines and oracles.} \textbf{Feature-only} is our training-free rule (\S\ref{sec:prospective}, Eq.~\ref{eq:prospective-rule}); it consumes only failed-rollout features at test time. \textbf{Compute} is measured in single-rollout-equivalent units (one retry attempt $=1.0$) and reflects only the policy's deployment-time generation cost; feature extraction from saved logprobs is $O(N_{\text{tokens}})$ arithmetic and negligible relative to a rollout. \emph{All failed pids}: pool-weighted across the 10-cell evaluation set ($n{=}1{,}423$). \emph{Steerable-Hard subset}: the $37.1\%$ of failed pids where retry fails the majority of the time AND at least one intervention beats retry by $\geq 5$ pp (Eq.~\ref{eq:steerable-hard}). The dispatch in \S\ref{sec:routing-result} consumes operator outcomes to define regimes; the prospective policy does not.}
\label{tab:prospective-routing}
\resizebox{\textwidth}{!}{%
\begin{tabular}{l l c r rr rr}
\toprule
\multicolumn{2}{l}{\textbf{Policy}} & \textbf{Needs} & \textbf{Compute} & \multicolumn{2}{c}{\textbf{All failed pids}} & \multicolumn{2}{c}{\textbf{Steerable-Hard subset}} \\
\cmidrule(lr){5-6}\cmidrule(lr){7-8}
\textbf{Name} & \textbf{Routing info uses\dots} & \textbf{labels?} & \textbf{$\times$ retry} & rescue & $\Delta$retry & rescue & $\Delta$retry \\
\midrule
Retry & no information, no test-cell compute beyond resampling & no & 1.0 & 0.372 & +0.000 & 0.163 & +0.000 \\
\textbf{Feature-only} & winner-take-all over 3 trajectory features; no training & no & 1.4 & 0.373 & +0.000 & 0.285 & +0.122 \\
Outcome-trained & RF classifier on the cell's own per-pid operator outcomes (assay cost = DiagnosticOracle's) & yes & 66.5 & 0.374 & +0.001 & 0.302 & +0.139 \\
DiagnosticOracle & runs every operator on the test cell, picks modal-best & yes & 66.5 & 0.390 & +0.017 & 0.321 & +0.157 \\
ProblemOracle & per problem-unit $\arg\max$ over operators (not deployable) & yes & 66.5 & 0.514 & +0.142 & 0.468 & +0.304 \\
\bottomrule
\end{tabular}%
}
\\[2pt]
\footnotesize ``Needs labels?'' = the policy requires per-pid operator outcomes from the test cell (i.e., running every operator on every pid; $\sim$5 operators $\times$ $K{=}10$ attempts).
\end{table*}

\paragraph{Result (Table~\ref{tab:prospective-routing}).}
On the full $n{=}1{,}423$ pool, the prospective policy ties retry:
$\Delta = +0.01$ pp, 95\% CI $[{-}1.22, {+}1.24]$ (paired bootstrap,
not distinguishable from zero). On the \emph{Steerable-Hard} subset
(\S\ref{sec:steerable-hard}; $37.1\%$ of pids), the rule lifts rescue
by ${+}12.2$ pp (from $0.163$ to $0.285$; 95\% CI $[{+}10.0, {+}14.4]$,
$n{=}528$, $p < 10^{-3}$). The matched $-7.2$ pp drop on non-SH
($63\%$; 95\% CI $[{-}8.5, {-}5.9]$) is the cancellation that
preserves global-mean accuracy; this is the expected behavior of a
subset-conditional method. Zero pids are routed to retry; the
fallback for low-confidence pids is SL-R (\S\ref{sec:framework}).
For context, \textbf{DiagnosticOracle} (the full per-cell operator
assay; Table~\ref{tab:prospective-routing}) reaches $+15.7$ pp on
Steerable-Hard at ${\sim}\,47{\times}$ the per-pid compute, and
\textbf{ProblemOracle} (per problem-unit $\arg\max$, not deployable)
reaches $+30.4$ pp. A fully \textbf{Outcome-trained} RF classifier
fit on each cell's own per-pid operator outcomes reaches only
$+13.9$ pp at the same assay cost as DiagnosticOracle
($66.5{\times}$ retry), so training on operator labels adds little
over the training-free rule while still requiring them.
\textbf{Feature-only} therefore recovers $78\%$
of DiagnosticOracle's Steerable-Hard lift at ${\sim}1/47$ of its
cost, using only features whose computation is negligible relative
to a rollout.

\textbf{The framework predicts the routing-relevant quantity, not a
specific feature set.}
Multiple Fisher-aggregate feature sets achieve comparable SH lift
($+11.7$ to $+13.8$ pp at near-zero global cost; App.~\ref{app:naive-ablation}),
confirming the framework's prediction that local Fisher information
is the routing-relevant quantity. The geometric instance reported in
the body is one interpretable derivation within that family. Substituting SL-R for SL-G at the
$\log_{10}\bar C$ branch loses only $-0.4$ pp on the Qwen3 SH headline,
so a deployer without a junction detector can substitute SL-R and
retain $>96\%$ of the lift; the dispatch's binary decision is
operator-class (sparse vs.\ temperature vs.\ dense), not position
within the sparse class (\S\ref{subsec:bucket-analysis},
App.~\ref{app:bucket-analysis}).

\paragraph{What this validates.}
\S\ref{sec:routing-result} showed that the feature space supports
population-level operator routing; \S\ref{sec:prospective} shows the
complementary, \emph{prospectively actionable} variant that is usable
at test time with no operator assay on the test cell. This converts the dispatch's regime structure from
a diagnostic into a deployment artifact: a fixed rule table that
takes failed rollouts in and emits an operator choice. We discuss
remaining limits (per-pid winner identifiability, cross-task
transfer of the rule table) in \S\ref{sec:limitations}.

\section{Features that characterize the regimes}
\label{sec:features}

\phantomsection\label{subsec:regimes}%
\phantomsection\label{subsec:diagnostic-topography}%
The three features partition failed problems into four regimes via
k-means initialised at canonical centroids
(App.~\ref{app:regime-classifier}). Two properties of the resulting
partition matter for the dispatch: (i) $99.6\%$ of \textsf{RM-G} sits
above the $\log_{10}\bar V$ median while $85\%$ of \textsf{DD} sits
below, the two regimes the dispatch differentiates most sharply
occupy opposite ends of the steerability axis; (ii) cluster tightness improves monotonically with Fail@$K$ depth: silhouette $0.45\!\to\!0.58$ and median within-cluster distance $0.68\!\to\!0.48$ ($-30\%$) between Fail@$K{\geq}1$ and Fail@$K{\geq}10$ (matched-$n$ bootstrap, $n{=}718$, $95\%$ CIs disjoint at every stratum; Fig.~\ref{fig:regime-clustering}), so the harder the failure under standard sampling, the more informative the failed trace.


\section{Failure topography as an audit channel for post-training}
\label{sec:population}

Different post-training methods produce different recoverability
topographies. This section shows the topography is recoverable from
failed traces alone, providing an audit channel for post-training
without training-time access. Without any training-time access and
without any label on the rollouts saying \textit{this is from GRPO}
or \textit{this is from SFT}, the regime distribution of a (model,
task) cell identifies which post-training method produced the model,
recovering the SFT-vs-RL contrast
\cite{chu2025sftmemorizes,zhu2025rlvrprincipals,shenfeld2025rlsrazor}
from failed rollouts alone. The same features drive the routing
example in \S\ref{sec:routing-result}.


\textbf{GRPO concentrates $97.4\%$ of problem-units} ($487/500$) in
Distributed Deformation across CruxEval, GPQA, and GSM8K. \S\ref{sec:framework}
predicts this: the diffuse, off-principal RLVR weight updates
characterized at the weight- \cite{zhu2025rlvrprincipals} and
aggregate-output \cite{chu2025sftmemorizes,shenfeld2025rlsrazor}
levels manifest in the trace as high $\bar J_{\mathrm{frac+}}$ and
low $\bar C$, exactly the Distributed Deformation signature,
recovered here from failed rollouts alone with no training-time access.
\textbf{SFT distributes by task identity, not scale}: $79.7\%$ of
Rank Misrouting (geo-local) problem-units ($341/428$) come from SFT
0.6B and 1.7B on GPQA; $46.5\%$ of Unresolved ($266/572$) comes from
SFT 0.6B and 1.7B on GSM8K and SFT 1.7B on CruxEval. Detailed
per-task percentages are in Appendix~\ref{app:cruxeval-mixing};
the operator-relevance of these regime distinctions is the
main claim.

\textbf{Mechanism (weight-space $\leftrightarrow$ inference-time).}
Spectral preservation (top-$k$ singular structure intact;
\citealt{zhu2025rlvrprincipals}) and large diffuse logit deviations
are not in tension: off-principal updates can accumulate to produce
meaningful inference-time behavioural change with minimal spectral
drift. Our audit channel measures the inference-time
manifestation of that micro-direction movement, on a different
model family and training set than the weight-space evidence.

\textbf{The audit channel is predictive, not just descriptive.}
A $k{=}1$ nearest-neighbour classifier on the 4-D cluster-fraction
histograms of each (model, task) cell labels \textsf{SFT} vs.\
\textsf{RL} on $14$ cells with $84.3{\pm}4.3\%$ accuracy across $10$
random 5-fold permutations ($+20$ pp over a $64.3\%$ majority-class
baseline; LOCO equivalent is $85.7\%$, 12/14). All three GRPO cells,
all seven Qwen3 SFT cells, and both Phi-4-mini-reasoning cells are
classified correctly; the two errors are both R1-Distill-Qwen-Math-1.5B
cells classified as SFT, consistent with distillation muting the RL
signature in the failed-trace topography.

This connects test-time diagnosis to post-training characterisation:
failures from different post-training regimes carry distinguishable
signatures in feature space, and those signatures are what the
dispatch consumes. Prior work has reached related conclusions from
the training side; we approach the same divide from the failure
trace.

\textbf{Cross-family transfer.} Two held-out-family
probes give the contrasting signatures the framework predicts:
\textbf{R1-Distill-Qwen-Math-1.5B} has SL-G/SL-R/T-loc Jaccard
$0.10$--$0.16$ on SH (far below Qwen3's $0.28$--$0.50$) and $82.9\%$
[76.9, 86.5] of its failed pids in the Distributed Deformation
cluster, while \textbf{Phi-4-mini-reasoning} has Jaccard $0.30$--$0.59$
within the Qwen3 SFT/GRPO range. The features isolate R1-Distill as
the support-collapsed family and group Phi-4 with SFT-shaped
topography, from failed rollouts alone, despite the underlying
architecture differences. The training-free routing rule from
\S\ref{sec:prospective} also transfers across families: $8$ of $9$
cross-family $z$-scaler transfers between the three families fall
within the target's bootstrap 95\% CI of its self-baseline lift
(App.~\ref{app:cross-family}, Tab.~\ref{tab:feature-transfer}).
Full numbers in App.~\ref{app:cross-family-audit}.

Three qualitative case studies illustrating per-failure routing
under the $V_{\mathrm{traj}}$ dispatch (Cases A, B, and C) and a
sweep of response profiles across representative (model, task)
cells appear in App.~\ref{app:qualitative-cases}.

\section{Related Work}
\label{related-work}

\paragraph{Test-time compute scaling.} Best-of-$N$
\citep{stiennon2020summarize,cobbe2021verifiers}, self-consistency
\citep{wang2023selfconsistency}, and recent test-time scaling
\citep{snell2024scaling,deepseekai2025r1,muennighoff2025s1} allocate
more inference compute by drawing additional trajectories from the
fixed post-trained distribution. We instead ask what an
already-failed trace implies about the next intervention, routing
compute by operator class rather than spending it on more
identically-distributed samples.

\paragraph{Post-training contrast at weight, behavioral, and KL-proximity levels.}
\citet{chu2025sftmemorizes} contrast SFT and RL with outcome rewards
behaviorally (SFT memorizes, RL generalizes);
\citet{zhu2025rlvrprincipals} characterize the weight-space mechanism
(RLVR updates land in low-curvature, off-principal subspaces while
preserving top-$k$ spectral structure); \citet{shenfeld2025rlsrazor}
show online RL implicitly minimizes KL to the base model. All three
require training-time access. Our audit channel
(\S\ref{sec:population}) provides the inference-time logit-space
counterpart that connects these three perspectives to failure-trace
geometry, with Distributed Deformation as the predicted inference-time
signature of Zhu et al.'s off-principal mechanism.

\paragraph{Trace-level interventions.} DoLa \citep{wei2023doa} and
proxy tuning \citep{liu2024proxy} intervene in logit space without a class-localizing
diagnostic. Engineering differences: \textbf{(i)} we contrast against
a separate pre-trained checkpoint (one additional forward pass)
rather than an earlier layer of the same model or a smaller
fine-tuned model; ancestor identity is interchangeable within
reasonable capability range under a well-calibrated detector
(App.~\ref{app:ancestor-choice}); \textbf{(ii)} regime-conditional
operator selection vs.\ uniform application; \textbf{(iii)}
diagnostic-first, with operators deliberately simple. We report only a
preliminary comparison with proxy tuning (App.~\ref{app:proxy-tuning}),
which uses a different specialist-ancestor combination rather than
$e$-geodesic interpolation; a full head-to-head against trace-level
baselines remains open (\S\ref{sec:limitations}).

\paragraph{Inference-time composition and rank-preserving decoding.}
MIXIE \citep{sanyal2025mixie} uses logit-space steering for
multi-objective control; we treat the ancestor as a
\emph{structural probe} for rank-inversion in a single specialized
model. \citet{mattei2025welltempered} characterize aspects of temperature scaling that motivate the temperature probe (App.~\ref{app:moved-methods}); RLHF
calibration degradation \citep{xie2024ats} we reframe as a diagnostic
rank-order collapse.

\paragraph{Forgetting and post-training degradation.} Self-distillation
\citep{zhang2025simple,shenfeld2025self} and temporal sampling
\citep{li2025temporal} address reasoning decay behaviorally; we
localize the same decay at the distributional level and identify which
intervention recovers which regime. The fork/lock typology of
\citet{zhang2025simple} maps onto our logit-steering lift: their
training-time fix reshapes support, our training-free SL-G/SL-R/DL
operators recover a junction-conditional analogue via ancestor
contrast, sidestepping their decode-only impossibility.

\section{Conclusion}
\label{sec:conclusion}
Failed traces are a measurement instrument. Accuracy reports only pass or fail; the three trajectory features report why, sorting
  failures into operator classes that come from theory, not from fitting the data. Three consequences follow. (1) Failure
  populations are heterogeneous in a structured way: the four recoverability regimes stay stable across Fail@$K$ depths. (2)
  Post-training methods leave inference-time signatures, and the audit channel reads them off failed rollouts alone, recovering
  weight-space accounts without weight-space access. (3) Those signatures are deployment-actionable, as the training-free routing
  rule shows; the routing result is a demonstration, not the headline. More broadly, these results suggest that inference-time trajectory statistics are an underused axis for understanding post-training degradation.

\section{Limitations}
\label{sec:limitations}

\textbf{Scope of the routing example.}
The routing rule of \S\ref{sec:prospective} requires three
conditions: (i) a post-trained specialist with an available reference
distribution that disagrees on failure trajectories, (ii) a bounded
test-time intervention budget, and (iii) failures with localizable
geometric structure rather than uniform high entropy. Open-ended generation tasks
(translation, dialogue, creative writing) do not satisfy (iii),
because logit variance is high everywhere when the task is genuinely
under-constrained, and are out of scope for the current framework.

\textbf{The DL branch is dominated empirically.} The framework of
\S\ref{sec:framework} predicts DL (dense logit steering) should win
the Distributed Deformation regime, which is the dominant regime on
GRPO ($97.4\%$ of GRPO problem-units). In our case the
dispatch never picks DL: sparse interventions (SL-G, SL-R) reach
comparable rescue at lower per-attempt cost, and the dispatch's
$\arg\max S_i$ selects them instead. This is a gap between the
geometric prediction (DL should be the right operator for diffuse
deformation) and the per-FLOP recipe at the current budget;
resolving it requires deep-K dispatch evaluation
(repair@$K$ at $K{=}10, 32, 64$ per operator) we leave to future work.

\textbf{Model-family and scale scope.} Headline cells use Qwen3 SFT
(0.6B/1.7B/4B) and GRPO 1.7B; cross-family generalization is
covered by the R1-Distill-Qwen-Math-1.5B and Phi-4-mini-reasoning
probes in \S\ref{sec:population} and
App.~\ref{app:cross-family}. Behavior at $\geq 7$B and with heavily
RLHF-aligned models is out of scope. Verbal self-correction
(Self-Debug) is partially orthogonal to distributional intervention
and is handled separately (Appendix~\ref{app:selfdebug}).

\appendix


\section{Formal Method Details}
\label{app:moved-methods}

\subsection{Junction detection}
\label{app:junction-detection}

\paragraph{High level.}
Both junction detectors used in this paper target positions of
anomalously high specialist-ancestor disagreement, and both calibrate
\emph{within} the problem rather than against an external population,
this is what makes the firing rule task- and scale-independent.
They differ in what they normalize against and at what granularity:
Detector A standardizes against positions \emph{of the current trace}
and fires on a single token; Detector B standardizes against positions
across \emph{this problem's $k$ failed rollouts} and fires on a
sliding window. Detector A is what the
lineage-specificity ablation in App.~\ref{app:ancestor-choice} uses
(for protocol parity with the original ladder); Detector B is what
the dispatch in \S\ref{sec:routing-result} uses, because the
windowed Fisher-information normalization of Detector B is what
makes $V_{t^\star}$ the natural local steerability measure
(\S\ref{sec:features}). The rest of this appendix gives the formal
definitions.

\paragraph{Detector A: trace-relative percentile.}
Given a reasoning trace $x_{1:T}$ sampled from the specialist and the
corresponding specialist and ancestor logit sequences $z_S, z_A$, the
\textbf{demotion score} at step $t$ is
\begin{equation}
\label{eq:demotion}
    D_t = \log P_S(x_t) - \log P_A(x_t),
\end{equation}
the specialist's preference for the chosen token relative to the
ancestor's baseline. To strip out trace-level offsets we standardize
within the trace, $\tilde D_t = (D_t - \mu_D)/\sigma_D$ with $\mu_D,
\sigma_D$ the empirical mean and standard deviation across positions.
An \textbf{intervention-sensitive junction} fires at the first
position whose $\tilde D_t$ exceeds the $90$th percentile of values in
the same trace:
\begin{equation}
\label{eq:junction}
    t^* = \min\,\{t \mid \tilde D_t \ge \tilde D_{(90)}\}.
\end{equation}

\paragraph{Detector B: per-problem windowed quantile.}
Detector B sums a per-token
``junction intensity'' over a sliding window of width $w$:
$S_t = \sum_w J_t$, where
$J_t = \max(Z^{\mathrm{path}}_t, Z^{\mathrm{cov}}_t)$ combines two
normalized deformation signals.
$Z^{\mathrm{path}}_t = \max(0,(r_{\mathrm{chosen}} - \mu_r)/\sqrt{I_r})$
is the e-geodesic log-ratio at the chosen token, centered and scaled
\emph{per-token} by the expectation $\mu_r$ and Fisher information
$I_r$ of $r$ under the specialist's local top-$k$ distribution
(Fisher information is the natural metric on the categorical output
space at that position).
$Z^{\mathrm{cov}}_t = \max(0,(G^{\mathrm{cov}}_t - \mu_{\mathrm{cov}})/
\sigma_{\mathrm{cov}})$ is the m-geodesic coverage gap, centered and
scaled \emph{per-problem} by the mean and standard deviation of
$G^{\mathrm{cov}}$ across the pid's $k$ failed rollouts.
Detector B fires the first time the windowed sum exceeds a
per-problem threshold $\lambda_J$, set as the $0.99$ quantile of
$S_t$ across all positions in the same pid's pool. The steering
direction $\tau_t \in \{\mathrm{path},\mathrm{cov}\}$ is set by
whichever of $Z^{\mathrm{path}}_t, Z^{\mathrm{cov}}_t$ is larger at
the firing position.

\subsection{Dual Generation Strategies: Mechanism vs. Utility}

We use two generation strategies, separating mechanistic identification from deployment evaluation.

\paragraph{Mechanistic Verification ($T=0$):} For all experiments establishing the mechanistic link between logit geometry and recovery (Iterative Repair, Junction Ablations, and Random Controls), we use \textbf{Greedy Decoding ($T=0$)} for continuation after the intervention. Greedy completion ensures that the resulting trajectory is a deterministic consequence of the intervention, removing sampling variance as a confounder. This provides a conservative lower bound on recovery and aligns with our theoretical framework ($\tilde{D}_t, \alpha^*$), which is defined by the rank-order of the distribution's argmax. At $T > 0$, the relationship between logit rank and sampled tokens becomes probabilistic, making the interpretation of $\alpha^*$ as a hard crossover threshold conceptually inconsistent.

\paragraph{Deployment Utility Simulation ($T=0.6$):} To measure the practical benefit of the deployment-time routing policy (Result~4), we use \textbf{Temperature Sampling ($T=0.6$)} for both pre- and post-intervention rollouts for dense trajectory logit steering ($T=0.6$ in the table). This strategy simulates real-world usage where specialists are deployed under non-greedy decoding. Reporting the deployment simulation at $T=0$ would compress the gap between conditions, as greedy decoding often hurts baseline specialist performance on reasoning tasks. 

\subsection{Theoretical Justification}
\label{app:theory}

The diagnostic value of the ladder rests on two closed-form results.

\begin{proposition}[Token-Level Rank Preservation]
\label{thm:rank-preservation}
For any two tokens $x_i, x_j \in V$ and any $T > 0$, if the specialist logit $z(x_i) > z(x_j)$, then $p_{1/T}(x_i) > p_{1/T}(x_j)$.
\end{proposition}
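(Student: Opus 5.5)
The plan is to write the tempered distribution explicitly as a softmax and compare the two probabilities through their ratio, so that the normalizing constant cancels. Concretely, $p_{1/T}$ denotes the temperature-$T$ softmax of the specialist logits,
\[
p_{1/T}(x) \;=\; \frac{\exp\!\bigl(z(x)/T\bigr)}{\sum_{y\in V}\exp\!\bigl(z(y)/T\bigr)},
\]
i.e.\ the member of the local temperature submodel with natural parameter $1/T$. Since $V$ is finite and every term of the sum is strictly positive, the partition function $Z_T=\sum_{y}\exp(z(y)/T)$ is a finite positive number. This makes $p_{1/T}(x)>0$ for every token.

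The key step is then the identity
\[
\frac{p_{1/T}(x_i)}{p_{1/T}(x_j)} \;=\; \exp\!\Bigl(\tfrac{z(x_i)-z(x_j)}{T}\Bigr).
\]
Because $T>0$ and $z(x_i)-z(x_j)>0$ by hypothesis, the exponent is strictly positive. Hence the ratio is strictly greater than $1$. Multiplying through by the positive quantity $p_{1/T}(x_j)$ gives $p_{1/T}(x_i)>p_{1/T}(x_j)$. Equivalently, one can note that $u\mapsto \exp(u/T)$ is strictly increasing for $T>0$, and that dividing by the common constant $Z_T>0$ preserves strict inequalities.

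I will also record the corollary used later in the paper. Because the ordering of all tokens is preserved for every $T>0$, the argmax token is unchanged, so neither retry at any temperature nor $T_{\mathrm{loc}}$ can make a lower-ranked token the local mode. Under greedy continuation, this is precisely why such operators cannot rescue a Rank Misrouting failure. If the paper intends $p$ to be a top-$k$ renormalized distribution, the same argument applies verbatim with $V$ replaced by the retained support.

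There is no real obstacle here; the argument is a one-line monotonicity check. The only point needing care is the hypothesis $T>0$. For $T<0$ the order reverses, and $T=0$ or $T=\infty$ are limits (argmax and uniform) where strictness can fail. I will state that the strict positivity of $T$ and the finiteness of $Z_T$ are exactly what the proof uses.
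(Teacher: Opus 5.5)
Your proposal is correct and takes essentially the same route as the paper. The paper writes the log-odds $\log\frac{p_\beta(x_i)}{p_\beta(x_j)} = \beta\,(z(x_i)-z(x_j))$ with $\beta = 1/T > 0$ and notes that its sign does not depend on $T$; this is your ratio identity with the partition function cancelling, and your extra remarks (positivity of $Z_T$, the top-$k$ renormalized case, the role of strict $T>0$) are harmless refinements of that same monotonicity check.
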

\begin{proof}
The log-odds ratio is $\log \frac{p_\beta(x_i)}{p_\beta(x_j)} = \beta(z(x_i) - z(x_j))$. Since $\beta = 1/T > 0$, the sign is invariant to $T$.
\end{proof}

\begin{proposition}[Rank Inversion via SL-G]
\label{thm:rank-inversion}
If the Specialist ranks $x_\text{trap}$ above $x_\text{logic}$ ($\Delta_S > 0$) but the Ancestor ranks $x_\text{logic}$ above $x_\text{trap}$ ($\Delta_A := z_A(x_\text{logic}) - z_A(x_\text{trap}) > 0$), there exists a critical threshold $\alpha^* = \frac{\Delta_A}{\Delta_A + \Delta_S} \in (0,1)$ such that for all $\alpha < \alpha^*$, the mixed model recovers $x_\text{logic}$ as the mode.
\end{proposition}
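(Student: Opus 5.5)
The plan is a direct computation on the mixed logits. I will work with the $e$-geodesic family from \S\ref{sec:setup}, $z_\alpha = \alpha z_S + (1-\alpha) z_A$ with $p_\alpha \propto \exp(z_\alpha)$. Define $\Delta_S := z_S(x_\text{trap}) - z_S(x_\text{logic}) > 0$, matching the statement's sign convention for $\Delta_A$. The object to study is the mixed log-odds gap
\begin{equation*}
g(\alpha) := z_\alpha(x_\text{logic}) - z_\alpha(x_\text{trap}) = -\alpha\,\Delta_S + (1-\alpha)\,\Delta_A .
\end{equation*}
As in the proof of Prop.~\ref{thm:rank-preservation}, the softmax normalizer cancels in the ratio $p_\alpha(x_\text{logic})/p_\alpha(x_\text{trap}) = \exp(g(\alpha))$. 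Hence the sign of $g$ alone decides which of the two tokens has higher mixed probability.

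The key steps, in order, are as follows.
\begin{enumerate}
\item Note that $g$ is affine in $\alpha$, with $g(0) = \Delta_A > 0$ (pure ancestor prefers $x_\text{logic}$) and $g(1) = -\Delta_S < 0$ (pure specialist prefers $x_\text{trap}$).
\item Solve $g(\alpha) = 0$ to get $\alpha^* = \Delta_A/(\Delta_A + \Delta_S)$. Positivity of both gaps gives $\alpha^* \in (0,1)$.
\item The slope $-(\Delta_A + \Delta_S)$ is strictly negative, so $g(\alpha) > 0$ exactly when $\alpha < \alpha^*$. For all such $\alpha$ we therefore get $p_\alpha(x_\text{logic}) > p_\alpha(x_\text{trap})$, i.e.\ the rank inversion.
\end{enumerate}
This also yields the contrast with Prop.~\ref{thm:rank-preservation}. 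Temperature multiplies the gap by $\beta > 0$ and so never changes its sign. Mixing instead adds a term of opposite sign, and the weight on that term is controlled by $\alpha$.

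The main obstacle is the word ``mode''. The affine argument only establishes that $x_\text{logic}$ beats $x_\text{trap}$ pairwise. A third token $x'$ could still have the largest $z_\alpha$, for instance one that both models rank moderately high. I would handle this in one of two ways.
\begin{itemize}
\item Read the conclusion as mode of the two-candidate restriction $\{x_\text{trap}, x_\text{logic}\}$ at the pivot. This is consistent with the paper's remark that the condition is sufficient for \emph{single-position rank inversion} and is not claimed to be necessary for rescue.
\item Alternatively, add the hypothesis that $x_\text{trap}$ is the specialist argmax and $x_\text{logic}$ the ancestor argmax. Then, for every other $x'$, the gap $z_\alpha(x_\text{logic}) - z_\alpha(x')$ is also affine in $\alpha$, positive at $\alpha = 0$. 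Global modality holds for $\alpha$ below the minimum of $\alpha^*$ and the per-token crossing points; in general this minimum may be strictly smaller than $\alpha^*$.
\end{itemize}
I would state the pairwise version as the proposition proper and record the global version as a remark. This is the only place where the statement needs sharpening rather than mere calculation.
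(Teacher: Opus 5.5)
Your proof is correct and follows the paper's argument: the paper writes the mixed gap $\Delta_\alpha = \alpha\Delta_S - (1-\alpha)\Delta_A$ (the negative of your $g$) and solves $\Delta_\alpha < 0$ to get $\alpha < \alpha^*$. Your caveat about ``mode'' is well taken. The paper's one-line proof establishes only the pairwise inversion of $x_\text{logic}$ over $x_\text{trap}$. Your two-candidate reading, or the global version with the extra argmax hypothesis and a possibly smaller threshold, is therefore a genuine sharpening of the statement rather than a gap in your argument.
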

\begin{proof}
The mixed logit gap is $\Delta_\alpha = \alpha\,\Delta_S - (1-\alpha)\,\Delta_A$. Solving $\Delta_\alpha < 0$ yields $\alpha < \Delta_A / (\Delta_A + \Delta_S) =: \alpha^*$.
\end{proof}

Together, Proposition~\ref{thm:rank-preservation} establishes that temperature cannot invert ranks, and Proposition~\ref{thm:rank-inversion} gives the closed-form condition under which logit steering can.

\subsection{Failure-Trajectory Metrics}


\label{sec:topography-defs}


\paragraph{Path deformation score.}
At each token position $t$ in a failed trace $x_{1:T}$, the \textbf{path deformation score} measures how strongly the specialist prefers the sampled token over the ancestor's baseline:
\begin{equation}
\label{eq:path-deformation}
  D_t^{\text{path}} \;=\; \log p_S^T(x_t) - \log p_A^T(x_t)
\end{equation}
where $p_S^T$ and $p_A^T$ are the specialist and ancestor distributions at sampling temperature $T$. Positive values indicate that the specialist assigns higher probability than the ancestor to this token choice. The trajectory-mean $\bar{D}^{\text{path}} = \frac{1}{T}\sum_t D_t^{\text{path}}$ is stored as column \texttt{Delta\_path\_mean} in the routing feature table; the value at the detected junction position is stored as \texttt{Delta\_path\_at\_that}.

\paragraph{Junction intensity.}
The \textbf{junction intensity} $J_t$ is a Fisher-normalized, one-sided version of $D_t^{\text{path}}$, centering by the KL divergence and normalizing by the Fisher information of the path direction:
\begin{equation}
\label{eq:junction-intensity}
  J_t \;=\; \left[\frac{D_t^{\text{path}} - \mu_t^r}{\sqrt{I_t^r + \varepsilon}}\right]_+
\end{equation}
where $\mu_t^r = \mathrm{KL}(p_S^T \| p_A^T)$ is the expected log-likelihood ratio under $p_S$ (a local KL divergence estimate), $I_t^r = \mathrm{Var}_{y \sim p_S}[r_t(y)]$ is the Fisher information of the $e$-geodesic direction at position $t$, and $[\cdot]_+$ denotes the positive part. Junction intensity fires when the specialist's choice of $x_t$ is anomalously more specialist-favored than the specialist's own average deformation from the ancestor at this position---detecting rank misrouting beyond what background post-training drift would predict.

The trajectory mean and standard deviation are stored as \texttt{J\_approx\_mean} and \texttt{J\_approx\_std}. The \textbf{Junction Concentration} $C_J$ is defined as their ratio:
\begin{equation}
\label{eq:junction-concentration}
  C_J \;=\; \frac{\sigma(J_t)}{\mu(J_t) + \varepsilon}
  \;=\; \frac{\texttt{J\_approx\_std}}{\texttt{J\_approx\_mean}}
\end{equation}
High $C_J$ indicates that deformation is sharply localized at a small number of positions; low $C_J$ indicates diffuse deformation spread across the trajectory. We use $C_J$ as the canonical concentration statistic throughout.

\paragraph{Coverage deformation score.}
The \textbf{coverage deformation score} measures how much mass the specialist has removed from the ancestor's plausible alternative set:
\begin{equation}
\label{eq:coverage-deformation}
  G_t^{\text{cov}} \;=\; \log\!\left[\frac{p_A(A_t^k)}{p_S(A_t^k) + \varepsilon}\right]
\end{equation}
where $A_t^k = \mathrm{TopK}(p_A^T, k{=}20)$ is the ancestor's top-$k$ vocabulary set at position $t$, and $p_M(A_t^k) = \sum_{y \in A_t^k} p_M^T(y)$ is the total probability mass model $M$ assigns to that set. The ratio normalizes by the ancestor's own belief in its plausible set: a diffuse ancestor (low $p_A(A_t^k)$) should not produce a large signal. $G_t^{\text{cov}}$ asks: \emph{relative to how much the ancestor believes in its own alternatives, how much has the specialist collapsed away from them?} This is the $m$-geodesic (coverage) complement to the $e$-geodesic (path) signal in $J_t$. Column names: \texttt{G\_cov\_mean}, \texttt{G\_cov\_at\_that}.

\paragraph{Temperature submodel variance.}
The \textbf{temperature submodel variance} $V_t$ is the Fisher information of the local temperature submodel---the probability-weighted variance of specialist logits over the top-$K$ vocabulary:
\begin{equation}
\label{eq:temp-variance}
\begin{split}
  V_t &\;=\; \mathrm{Var}_{y \sim p_{S,K}}\!\left[z_S(y)\right] \\
      &\;=\; \sum_{y \in \mathrm{TopK}} p_{S,K}(y)\,\bigl(z_S(y) - \mathbb{E}_{p_{S,K}}[z_S]\bigr)^2
\end{split}
\end{equation}
where $p_{S,K}$ is the specialist distribution renormalized over the top-$K$ vocabulary ($K=100$). High $V_t$ indicates that the distribution is entropy-susceptible: small temperature changes produce large mass shifts and the specialist is in an intermediate state where temperature scaling has leverage. Low $V_t$ indicates either a collapsed (very sharp) or diffuse distribution where temperature perturbation cannot invert any rank. $V_t$ is stored as \texttt{V\_mean} or \texttt{logit\_var\_mean}. \textbf{Implementation note:} do not use \texttt{logit\_S.var()} (unweighted variance over the full vocabulary); the probability-weighted top-$K$ formula is required to recover the Fisher information interpretation \citep{mattei2025welltempered}.

\paragraph{Summary: routing feature roles.}
Table~\ref{tab:led-feature-roles} summarizes the intuitive question each feature answers and its operational role.

\begin{table*}[t]
\centering\small
\begin{tabular}{lp{5cm}p{4.5cm}}
\toprule
\textbf{Feature} & \textbf{Intuitive question} & \textbf{Data column} \\
\midrule
$\bar{D}^{\text{path}}$ & Did specialist over-commit on average? & \texttt{Delta\_path\_mean} \\
$C_J$ & Is deformation localized or diffuse? & \texttt{J\_approx\_std / J\_approx\_mean} \\
$\bar{G}^{\text{cov}}$ & Did specialist abandon ancestor's options? & \texttt{G\_cov\_mean} \\
$V_t$ & Is the distribution still temperature-movable? & \texttt{V\_mean} \\
\bottomrule
\end{tabular}
\caption{Trajectory features used as topography dimensions. ``Option loss'' ($\bar{G}^{\text{cov}}$) and ``persuadability'' ($V_t$) are the intuitive names used in figures and tables.}
\label{tab:led-feature-roles}
\end{table*}

In §\ref{subsec:diagnostic-topography} we visualize the regime-level distributions of these features.

\subsection{Problem-Level Trajectory Features (moved from \S2)}
\label{app:features-box}

\begin{figure*}[!t]
\begin{tcolorbox}[colback=blue!2!white,colframe=blue!75!black,
    title=\textbf{Box~\thefigure: Problem-Level Trajectory Features},
    arc=2mm, boxrule=0.5pt, left=2mm, right=2mm, width=\textwidth]
\refstepcounter{figure}\label{box:features}%
\vspace{-1ex}
Each failed rollout $r$ of problem $p$ contributes a per-token
junction intensity $J^{(r)}_t \in \mathbb{R}_{\geq 0}$
(\S\ref{app:junction-detection}). We compress each rollout into three
scalars and aggregate over $\mathcal{R}_p$ by rollout-mean.

\medskip\noindent
\textbf{(i) Deformation spread}, the fraction of trace tokens with
positive junction signal; how broad the failure is across the trace:
\begin{equation}
\label{eq:deformation-spread}
J^{(r)}_{\mathrm{frac+}} = \frac{1}{T_r}\sum_{t=1}^{T_r} \mathbb{I}[J^{(r)}_t > 0],
\qquad
\bar J^{(p)}_{\mathrm{frac+}} = \frac{1}{|\mathcal{R}_p|}\sum_{r\in\mathcal{R}_p} J^{(r)}_{\mathrm{frac+}}.
\end{equation}

\noindent
\textbf{(ii) Junction concentration}, the ratio of peak to mean
junction intensity; whether the signal spikes at one position or
remains diffuse:
\begin{equation}
\label{eq:junction-concentration-box}
C^{(r)} = \frac{\max_t J^{(r)}_t}{\frac{1}{T_r}\sum_t J^{(r)}_t},
\qquad
\bar C^{(p)} = \frac{1}{|\mathcal{R}_p|}\sum_{r\in\mathcal{R}_p} C^{(r)}.
\end{equation}

\noindent
\textbf{(iii) Junction steerability $V_{t^\star}$},
the Fisher information of the specialist's local temperature
submodel at the junction spike $t^\star = \arg\max_{t>0} J^{(r)}_t$,
averaged across the problem's failed rollouts. Used as the
per-problem classifier feature.
\begin{equation}
\label{eq:vt-spike}
V^{(r)}_{t} = \mathrm{Var}_{x \sim p_S(\cdot \mid \mathrm{ctx}_{t})}
  \bigl[\log p_S(x \mid \mathrm{ctx}_{t})\bigr],
\qquad
\bar V_{t^\star}^{(p)} = \frac{1}{|\mathcal{R}_p|}\sum_{r\in\mathcal{R}_p}
   V^{(r)}_{t^\star_r}.
\end{equation}

\medskip\noindent
\textbf{Routing key.} The dispatch H/L split uses the
\emph{trajectory mean} $\bar V_{\mathrm{traj}}$ (average of
$V^{(r)}_t$ over the trace, excluding the boundary window $T_w$),
not $V_{t^\star}$, a deliberate split between the two roles
(App.~\ref{app:stability}).
\begin{equation}
\label{eq:vt-traj}
\bar V_{\mathrm{traj}}^{(p)} = \frac{1}{|\mathcal{R}_p|}\sum_{r\in\mathcal{R}_p}
   \frac{1}{T_r - T_w}\!\!\sum_{t=T_w}^{T_r}\! V^{(r)}_t.
\end{equation}
\end{tcolorbox}
\end{figure*}

\section{Extended Result Details}
\label{app:extended-main-results}

\subsection{Self-Debug Comparison}
\label{app:selfdebug}

Self-Debug operates on a verbal reasoning axis and provides a fifth diagnostic perspective contrasting verbal introspectability against distributional recoverability. In Table~\ref{tab:selfdebug_matched}, we compare Self-Debug against SL-G and standard Retry on strictly matched failure sets (up to 3 iterative rounds, early stopping on first success). The 4B CruxEval Rank Misrouting result is most informative: SL-G rescues 57.1\% of matched problems, whereas Self-Debug reaches only 22.4\% and Retry 28.6\%; verbal backtracking cannot reach a logic error the model cannot name, and extra samples alone do not suffice. On GSM8K, the pattern reverses: at 1.7B, Retry (95.2\%) and Geo (94.4\%) both substantially outperform Self-Debug (86.3\%), consistent with a sample-recoverable regime (now absorbed into Unresolved) where resampling is enough. We include Self-Debug as an external reference point, not as part of the core four-probe profile. The conditioning event differs from Table~\ref{tab:causal-ladder}: rows here restrict to the Self-Debug-attempted subset, which is a different (and typically smaller) problem slice than the Fail@$K$ strata used for the ladder.

\begin{table*}[t]
\centering
\small
\resizebox{\textwidth}{!}{%
\begin{tabular}{lcc cccc cccc}
\toprule
 & & & \multicolumn{4}{c}{Problem-level rescue (\%)} & \multicolumn{4}{c}{Rollout-level rescue (\%)} \\
\cmidrule(lr){4-7}\cmidrule(lr){8-11}
\textbf{Model / Task} & \textbf{$N$} & \textbf{Avg.\ Att.} & SD & SL-G & retry & $\Delta$(SL-G$-$retry) & SD & SL-G & retry & $\Delta$(SL-G$-$retry) \\
\midrule
SFT 0.6B / CruxEval & 195 & 7.81 & 60.0 & \textbf{42.1} & 65.1 & -23.0 & 7.7 & \textbf{4.1} & 7.2 & -3.1 \\
SFT 0.6B / GSM8K & 195 & 7.81 & 91.8 & \textbf{82.1} & 51.8 & +30.3 & 11.8 & \textbf{11.2} & 12.9 & -1.7 \\
SFT 1.7B / CruxEval & 201 & 8.75 & 74.6 & \textbf{75.1} & 59.2 & +15.9 & 8.5 & \textbf{4.9} & 6.6 & -1.7 \\
SFT 1.7B / GSM8K & 124 & 12.81 & 86.3 & \textbf{94.4} & 95.2 & -0.8 & 6.7 & \textbf{3.3} & 1.9 & +1.4 \\
SFT 4B / CruxEval & 49 & 6.37 & 22.4 & \textbf{57.1} & 28.6 & +28.5 & 3.5 & \textbf{5.3} & 7.1 & -1.8 \\
SFT 4B / GSM8K & 56 & 7.34 & 92.9 & \textbf{92.9} & 76.8 & +16.1 & 12.7 & \textbf{9.8} & 8.5 & +1.3 \\
\bottomrule
\end{tabular}%
}
\caption{\textbf{Self-Debug vs.\ SL-G vs.\ retry: matched problem-level and rollout-level rescue rates.} We compare verbal self-correction (\textbf{SD}), sparse logit-steering at the detected junction (\textbf{SL-G}), and standard temperature sampling (\textbf{retry}) on strictly matched failure sets (intersection of pids attempted by both SD and SL-G). All three methods use up to 3 iterative repair rounds with early stopping on first success. $\Delta$(SL-G$-$retry) shows the per-cell lift of sparse logit-steering over re-sampling; SL-G beats retry on CruxEval at every scale and ties at GSM8K (where retry is already strong). Self-Debug uses a verbal explanation step at $T{=}0$ followed by a revision at $T{=}0.6$; SL-G uses greedy $T{=}0$ decoding after a local logit-steering intervention ($\alpha{=}0.7$) at the detected junction; retry re-samples from the already-failed rollout pool at $T{=}0.6$. Matched to the subset of problems on which Self-Debug was attempted; this differs from the Fail@$K$ stratum used in Table~\ref{tab:causal-ladder}.}
\label{tab:selfdebug_matched}
\end{table*}

\subsection{Regime-name glossary (moved from \S2)}
\label{app:regime-glossary}

\begin{table}[h]
\centering
\footnotesize
\setlength{\tabcolsep}{4pt}
\resizebox{\columnwidth}{!}{%
\begin{tabular}{l l l}
\toprule
Canonical name & Short & Mechanism signal \\
\midrule
Rank Misrouting (geo-local)        & RM-G  & sharp spike, steerable     \\
Rank Misrouting (junction-diffuse) & RM-D  & sharp spike, unsteerable   \\
Distributed Deformation            & DD    & broad diffuse deformation  \\
Unresolved                         &,   & no dominant signal         \\
\bottomrule
\end{tabular}%
}
\caption{\textbf{Regime-name glossary.} Canonical name is used in body
prose; the \emph{Short} column appears in compact figure legends and
dispatch rows.}
\label{tab:regime-glossary}
\end{table}

\subsection{Dispatch Score Hyperparameter Sensitivity}
\label{app:dispatch-sensitivity}

The Dispatch Score used to assign regimes is defined as $S_i = U_i \cdot R_i^\gamma$, where $U_i$ represents the unique coverage of probe $i$ relative to cheaper interventions (in the cost order: retry $\rightarrow$ SL-R $\rightarrow$ SL-G $\rightarrow$ DL) and $R_i$ is its absolute reliability. To assign a regime, we take the probe with the maximum $S_i$; if this maximum score is below a threshold $\tau$, the problem is assigned to the Unresolved regime. 

Table~\ref{tab:dispatch_sensitivity} evaluates the stability of these regime assignments across a grid of hyperparameters ($\gamma \in \{0.3, 0.5, 0.7\}$ and $\tau \in \{0.02, 0.03, 0.05\}$). The assignments are remarkably robust; the primary variation is limited to the boundary between weak signal and Unresolved, confirming that the regime taxonomy is driven by the data rather than post-hoc parameter fitting.

\begin{table*}[t!]
\centering
\resizebox{\textwidth}{!}{
\begin{tabular}{llccccccccc}
\toprule
 & & \multicolumn{3}{c}{$\gamma = 0.3$} & \multicolumn{3}{c}{$\gamma = 0.5$} & \multicolumn{3}{c}{$\gamma = 0.7$} \\
\cmidrule(lr){3-5} \cmidrule(lr){6-8} \cmidrule(lr){9-11}
\textbf{Task} & \textbf{Model} & $\tau = 0.02$ & $\tau = 0.03$ & $\tau = 0.05$ & $\tau = 0.02$ & $\tau = 0.03$ & $\tau = 0.05$ & $\tau = 0.02$ & $\tau = 0.03$ & $\tau = 0.05$ \\
\midrule
\multirow{4}{*}{\textit{CruxEval}}
 & SFT 0.6B & RM-D & RM-D & RM-D & RM-D & RM-D & RM-D & RM-D & RM-D & Unresolved \\
 & SFT 1.7B & RM-D & RM-D & RM-D & RM-D & RM-D & RM-D & RM-D & RM-D & RM-D \\
 & SFT 4B & DD & DD & DD & DD & DD & DD & DD & DD & DD \\
 & GRPO 1.7B & RM-D & RM-D & RM-D & RM-D & RM-D & Unresolved & RM-D & RM-D & Unresolved \\
\midrule
\multirow{4}{*}{\textit{GSM8K}}
 & SFT 0.6B & Unresolved & Unresolved & Unresolved & Unresolved & Unresolved & Unresolved & Unresolved & Unresolved & Unresolved \\
 & SFT 1.7B & Unresolved & Unresolved & Unresolved & Unresolved & Unresolved & Unresolved & Unresolved & Unresolved & Unresolved \\
 & SFT 4B & RM-D & RM-D & RM-D & RM-D & RM-D & RM-D & RM-D & RM-D & RM-D \\
 & GRPO 1.7B & RM-G & RM-G & RM-G & RM-G & RM-G & RM-G & RM-G & RM-G & Unresolved \\
\midrule
\multirow{4}{*}{\textit{GPQA}}
 & SFT 0.6B & RM-D & RM-D & RM-D & RM-D & RM-D & Unresolved & RM-D & RM-D & Unresolved \\
 & SFT 1.7B & RM-G & RM-G & RM-G & RM-G & RM-G & RM-G & RM-G & RM-G & RM-G \\
 & SFT 4B & RM-D & RM-D & RM-D & RM-D & RM-D & RM-D & RM-D & RM-D & RM-D \\
 & GRPO 1.7B & RM-D & RM-D & RM-D & RM-D & RM-D & RM-D & RM-D & RM-D & RM-D \\
\bottomrule
\end{tabular}
}
\caption{\textbf{Dispatch Score Hyperparameter Sensitivity.} Regime assignments at the Fail@5 stratum across a grid of Dispatch Score hyperparameters. $\gamma$ controls the discount applied to the reliability term, and $\tau$ is the minimum score threshold to assign a regime rather than defaulting to Unresolved. Abbreviations follow Table~\ref{tab:regime-glossary}: RM-G = Rank Misrouting (geo-local), RM-D = Rank Misrouting (junction-diffuse), DD = Distributed Deformation, Unresolved. The assignments are highly stable; variation primarily occurs at the boundary between weak signal and Unresolved, confirming the taxonomy is not an artifact of post-hoc tuning.}
\label{tab:dispatch_sensitivity}
\end{table*}

\subsection{Cross-Family Replication}
\label{app:crossfamily}

Table~\ref{tab:crossfamily_replication} reports preliminary Llama-3
results. For Llama-3.2-3B-Instruct on CruxEval, three ancestors
were tested: the same-family Llama-3.2-3B base (26.9\%), Llama-3.1-8B
(29.5\%), and R1-Distill-Llama-8B (35.9\%). The two non-same-family
comparators rescue more than the same-family base, but both have
substantially higher effective capacity, so the cross-family signal
is mixed and capacity-confounded under this protocol. We report the
numbers as a single cross-family probe rather than a controlled test;
the within-family Qwen3 results (Appendix~\ref{app:ancestor-choice})
are the cleaner ablation, and they show that ancestor identity is
interchangeable within the capability range we tested under a
well-calibrated detector.

\begin{table*}[htbp]
\centering
\small
\setlength{\tabcolsep}{4pt}%
\resizebox{\textwidth}{!}{%
\begin{tabular}{ll rr rr rr}
\toprule
\textbf{Specialist} & \textbf{Ancestor} & \multicolumn{2}{c}{Fail@1} & \multicolumn{2}{c}{Fail@3} & \multicolumn{2}{c}{Fail@5} \\
\cmidrule(lr){3-8}
& & Rate & Uniq & Rate & Uniq & Rate & Uniq \\
\midrule
  \multirow{3}{*}{Llama-3.2-3B-Instruct}
  &  Lineage (Llama-3.2-3B) & 26.9\% & 2 & 28.1\% & 2 & 26.3\% & 2 \\
  &  Llama-3.1-8B & 29.5\% & 2 & 31.2\% & 2 & 28.1\% & 1 \\
  &  R1-Distill-Llama-8B & 35.9\% & 6 & 35.9\% & 4 & 36.8\% & 4 \\
\bottomrule
\end{tabular}}

\caption{\textbf{Cross-Family Lineage Replication (CruxEval).}
  Rescue rates across different specialist failure strata (Fail@$K$).
  Matched on intersection of problems and rollouts within each $K$ stratum.
}
\label{tab:crossfamily_replication}
\end{table*}

\section{Regime Separation in 2D Trajectory Feature Space}
\label{app:regime-clustering}

Figure~\ref{fig:regime-clustering} plots failed rollouts across three difficulty strata (Fail@$1, 5, 10$) in
the space of two interpretable trajectory features computed from failed rollouts alone,
without observing any intervention outcomes.
Increasing Fail@K filters out shallow sampling accidents and mixed cases, denoising the diagnostic space.
By Fail@10, the regimes occupy largely non-overlapping regions, confirming that the mechanistic
routing logic (Figure~\ref{fig:routing-case-study}) generalises to stable failure modes.

\paragraph{Feature definitions.}
\emph{Deformation spread} ($J_{\mathrm{frac+}}$) is the fraction of trace tokens at
which the per-token junction signal $J_t$ is positive, where
$J_t = \Delta^{\mathrm{path}}_t \cdot G^{\mathrm{cov}}_t$ combines the
specialist--ancestor logit delta with the Gaussian coverage gap.
It measures how widely fine-tuning has rewritten the specialist's trajectory relative
to the ancestor.
$J_{\mathrm{frac+}}$ is a raw proportion in $[0,1]$ and is \emph{not} normalised.

\emph{Junction concentration rank} is the within-dataset percentile rank of
$C_J = J_{\max}/J_{\mathrm{mean}}$, the ratio of the peak junction signal to the mean
junction signal across the trace.
The underlying ratio $C_J$ ranges from approximately 43 to 1262 with a heavy right
tail; the percentile rank maps it monotonically to $[0,1]$, preserving all ordinal
structure while removing the scale.
The two axes are therefore not directly comparable: the $x$-axis is on an absolute
fraction scale, while the $y$-axis is relative to the dataset distribution.

\paragraph{Why these two features.}
$G^{\mathrm{cov}}_{\mathrm{frac+}}$ (support-contraction fraction) is near zero for
GRPO specialists because GRPO training does not contract the specialist's token support
relative to the ancestor, GRPO broadens rather than narrows the output distribution.
$J_{\mathrm{frac+}}$ captures both SFT support contraction and GRPO logit-path divergence
($\Delta^{\mathrm{path}}_{\mathrm{frac+}} \approx 0.18$ for Distributed Deformation cells)
and is therefore non-zero for all regime types.

\paragraph{Interpretation.}
The four regimes occupy largely non-overlapping regions, consistent with the mechanistic
routing logic established in the main text (Figure~\ref{fig:routing-case-study}):
\emph{Rank Misrouting geo-local} (logit-steer) at high spread / high concentration;
\emph{Distributed Deformation} (broad deformation) at low spread / low concentration.
Centroid stars mark the per-regime median position.
Within \textbf{Rank Misrouting (junction-diffuse)}, SFT and GRPO models occupy
distinct sub-regions, consistent with the observation that GRPO training shifts the
logit-variance distribution relative to SFT while preserving the broad failure topology.

\paragraph{Alternative projection.}
The classifier uses three features, so there are three natural 2D
projections of the regime structure. The main-text
Figure~\ref{fig:regime-clustering} uses
$(\bar J_{\mathrm{frac+}},\, \log_{10}\!\bar V)$ because the
steerability axis $\bar V$ is the discriminator that drives the
dispatch (§\ref{sec:routing-result}). Figure~\ref{fig:regime-clustering-appendix}
shows the complementary
$(\bar J_{\mathrm{frac+}},\, \log_{10}\!\bar C)$ projection, in which
junction concentration $\bar C = J_{\max}/J_{\mathrm{mean}}$ separates
Rank Misrouting (geo-local) from the other regimes more sharply but
does not surface the H/L steerability split. The two views are
complementary; the classifier sees both axes simultaneously and
neither projection drops information.

\begin{figure*}[!t]
  \centering
  \includegraphics[width=\textwidth]{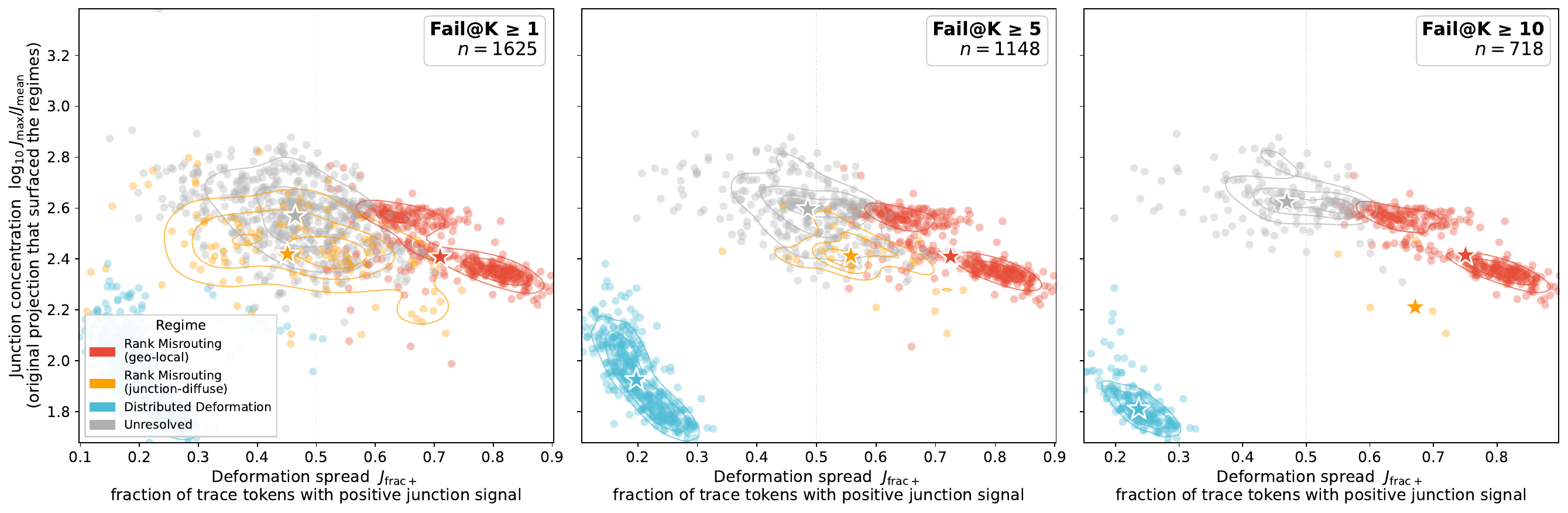}
  \caption{\textbf{Problem-level regimes in the
    $(\bar J_{\mathrm{frac+}},\, \log_{10}\!\bar C)$ projection.}
    The same 1{,}738 problem-units as
    Figure~\ref{fig:regime-clustering}, plotted against junction
    concentration rather than steerability on the $y$-axis. Color
    encodes the same nearest-centroid regime; star markers are
    per-regime medians; contours are KDE iso-density. Junction
    concentration separates Rank Misrouting (geo-local) (top-right)
    from the other regimes more sharply than steerability does,
    but does not produce the H/L split used by the dispatch in
    §\ref{sec:routing-result}.}
  \label{fig:regime-clustering-appendix}
\end{figure*}

\subsection{Regime classifier procedure}
\label{app:regime-classifier}

\paragraph{In two steps.} Per-pid regime labels come from a two-stage
pipeline:
\begin{enumerate}\setlength{\itemsep}{1pt}
\item \textbf{Canonical centroids}: a cell-level rule classifier
(hard rules + decision tree on 12 LED summary statistics, described
below) labels every (model, task, $K$) cell. For each named regime
$\{$RM-G, RM-D, DD, Unresolved$\}$ we compute its canonical centroid
in $(\bar J_{\mathrm{frac+}},\, \log_{10}\!\bar C,\,
\log_{10}\!\bar V_{t^\star})$ as the coordinate-wise median of all
problem-units whose cell carries that label at any $K$.
\item \textbf{k-means + Hungarian}: we run $k$-means with $k{=}4$ in
the z-normalized 3-feature space, initialised at the four canonical
centroids; sklearn's Hungarian solver then matches the four refined
k-means cluster centers back to the canonical names one-to-one.
Each problem-unit inherits the name of its k-means cluster, plus a
continuous confidence $\mathrm{conf}_p\!=\!1-d_1/d_2$ filtering
ambiguous cases at $0.20$.
\end{enumerate}

\noindent
So: cell rules pick four canonical anchor positions; k-means
refines the cluster shapes to fit the data; Hungarian preserves the
name-to-cluster correspondence. The Entropy Brittleness regime, which
the cell rules treat as a fifth named class, is intentionally omitted
from the canonical set, it does not separate cleanly from
Unresolved at the problem level $(\bar J_{\mathrm{frac+}}, \log_{10}\!\bar C, \log_{10}\!\bar V_{t^\star})$ and its
problems are absorbed by the Unresolved cluster.

\paragraph{What this means for the 3-panel visual
(Fig.~\ref{fig:regime-clustering}).}
The figure's axes are $(\bar J_{\mathrm{frac+}},
\log_{10}\!\bar V_{t^\star})$, a 2D projection of the 3D
classifier space (the $\log_{10}\!\bar C$ axis is dropped). Colors
come from the full 3D nearest-centroid assignment, so apparent
2D overlaps between regimes are usually artifacts of the dropped
$\log_{10}\!\bar C$ coordinate. The dashed horizontal line marks the
median of $\log_{10}\!\bar V_{\mathrm{traj}}$ (the dispatch H/L
routing key), a different $V$ variant than the y-axis, by design.

\paragraph{Cell-level rule classifier (canonical seeds).}
For each (model, task, $K$) cell we compute the average of 12
junction detector-derived cell-level features (e.g., $G_{\mathrm{bar\_cov}}$,
$V_{\mathrm{bar}}$, $D_{\mathrm{bar\_path}}$, $C_J = J_{\mathrm{std}}/
J_{\mathrm{mean}}$, $\mathrm{logit\_var\_mean}$;
full definitions in App.~\ref{app:led-signals}). The cell-level rule
classifier, used only to define the four canonical anchor
positions for the problem-level k-means initialisation, not to label
individual problem-units, assigns each cell to one of five labels
(four canonical names + a fifth ``Entropy Brittleness'' label that
the problem-level step does not propagate):

\begin{enumerate}
    \item \textbf{Hard rules} for the two mechanistically distinctive
    regimes. Geometry-local cells are identified by a near-constant
    $\mathrm{logit\_var\_mean}$ across $K$ values
    ($\mathrm{logit\_var\_k\_std} < 10^{-3}$, an absolute threshold
    well below all other regimes' values at $\geq 10^{-3}$).
    Distributed-source cells are identified by near-zero
    $\mathrm{feat\_cov\_intensity}$ (below the 10th-percentile
    threshold of the training distribution) combined with elevated
    $\mathrm{logit\_var\_mean}$ (above the 60th-percentile threshold).
    \item \textbf{Decision tree} for the residual regimes
    (Rank Misrouting (junction-diffuse), Entropy Brittleness,
    Unresolved). Trained on the cells not captured by the hard rules,
    with a maximum depth of 3 to avoid overfitting on the small
    training set.
\end{enumerate}

We evaluate the cell-level classifier under leave-one-(model, task)-out
cross-validation. Per-regime accuracies are 100\% on
Rank Misrouting (geo-local) ($n{=}5$), 42.9\% on
Distributed Deformation ($n{=}7$), and considerably lower on the
residual regimes (Rank Misrouting (junction-diffuse), Entropy
Brittleness, Unresolved), which overlap substantially in cell-level
feature space. This is an intentional design property: the residual
classes are defined by the \emph{absence} of a dominant mechanism, so
they are inherently harder to discriminate. We treat the cell-level
labels as \emph{seeds} for centroid learning rather than as a robust
standalone classifier; the problem-level step that follows recovers
much of the separation by working in the lower-dimensional 3-feature
space.

\paragraph{Problem-level centroids.}
For each problem-unit $p$ we compute the three features
$(\bar J^{(p)}_{\mathrm{frac+}}, \log_{10}\!\bar C^{(p)},
\log_{10}\!\bar V^{(p)}_{t^\star})$ from the junction-feature cache
(App.~\ref{app:led-cache}). The classifier uses $V_{t^\star}$
(junction Fisher info, averaged per problem), not
$\bar V_{\mathrm{traj}}$ which we use only for the dispatch H/L
routing key. Per-cell labels from the previous step
then induce per-regime centroids: for each regime, the centroid is
the coordinate-wise median over every problem-unit assigned by the
cell-level classifier to a cell carrying that regime label. We
z-normalize each coordinate against the population statistics
($\mu, \sigma$ in Table~\ref{tab:regime-centroids}) before computing
distances, then assign every problem to the nearest centroid by
Euclidean distance in the z-normalized space. Each assignment also
records a continuous confidence
$\mathrm{conf}_p\!=\!1 - d_1 / d_2$, where $d_1$ and $d_2$ are the
distances to the nearest and second-nearest centroids. We filter
$\mathrm{conf}_p \geq 0.20$ before the dispatch analysis, dropping
ambiguously located problem-units.

\begin{table*}[!h]
\centering
\small
\setlength{\tabcolsep}{4pt}%
\resizebox{\textwidth}{!}{%
\begin{tabular}{l r rrr rrr}
\toprule
Regime &
$n$ &
\multicolumn{3}{c}{Centroid (raw)} &
\multicolumn{3}{c}{Centroid ($z$-normalized)} \\
 & & $\bar J_{\mathrm{frac+}}$ & $\log_{10}\!\bar C$ & $\log_{10}\!\bar V_{t^\star}$
   & $\bar J_{\mathrm{frac+}}$ & $\log_{10}\!\bar C$ & $\log_{10}\!\bar V_{t^\star}$ \\
\midrule
Rank Misrouting (geo-local)        & 428 & 0.710 & 2.407 & $-0.537$ & $+1.28$ & $+0.36$ & $+0.91$ \\
Distributed Deformation            & 591 & 0.208 & 1.908 & $-0.828$ & $-1.14$ & $-1.16$ & $-0.11$ \\
Rank Misrouting (junction-diffuse) & 147 & 0.403 & 2.393 & $-1.248$ & $-0.20$ & $+0.31$ & $-1.57$ \\
Unresolved                         & 572 & 0.464 & 2.569 & $-0.757$ & $+0.09$ & $+0.85$ & $+0.14$ \\
\midrule
Population $\mu / \sigma$ & 1{,}738 & 0.445 / 0.208 & 2.290 / 0.330 & $-0.797$ / 0.286 & & & \\
\bottomrule
\end{tabular}}

\caption{\textbf{Per-regime centroid coordinates in
    the three-feature problem-level space.} The four centroids are
    the refined k-means cluster centers (initialised at the canonical
    cell-level positions, then Hungarian-matched to the canonical
    names); raw values are reported on the original
    $(\bar J_{\mathrm{frac+}}, \log_{10}\!\bar C, \log_{10}\!\bar V_{t^\star})$
    scale, z-normalized columns use the population statistics in the
    bottom row.}
\label{tab:regime-centroids}
\end{table*}

The intervention-outcome dispatch in §\ref{sec:routing-result} provides
a downstream test of whether the geometric labels carry
operator-relevant signal, the labels are not validated by the
cell-level LOCO accuracy alone.

\subsection{Stability of the clustering under random initialisation}
\label{app:kmeans-stability}

The canonical k-means fit uses centroid initialisation
(\S\ref{app:regime-classifier}). To verify that the regime structure
is not an artifact of that supervision, we re-run k-means
($k{=}4$) with $50$ random initialisations (no centroid seed) on the
Qwen3 feature pool ($n{=}1{,}625$), Hungarian-align each fit to the
canonical partition, and project the held-out R1-Distill cross-family
pool ($n{=}251$) onto each refit.

\paragraph{Result.} R1-Distill's dominant-cluster identity matches the
canonical assignment in $42/50$ seeds; the dominant share across all
$50$ seeds is $0.799 \pm 0.037$ (mean $\pm$ s.d., range
$[0.713, 0.821]$). The canonical cluster composition for R1-Distill
($82.9\%$ in the dominant cluster, point estimate) carries a tight
bootstrap 95\% CI of $[76.9, 86.5]\%$ over $10{,}000$ paired resamples
of the held-out pool. The per-task split is sharper: R1-Distill on
CruxEval is $95.7\%$ [$92.0, 98.6$]\% in the dominant cluster, while
R1-Distill on GSM8K is $64.6\%$ [$55.8, 73.5$]\% in that cluster with
$34.5\%$ in the second cluster. The cluster-0-collapse claim in
\S\ref{sec:population} is therefore robust to the centroid-seed
choice and to bootstrap resampling of the held-out pool.




\section{Naive-Baseline Ablation: Is the Routing Structure Doing the Work?}
\label{app:naive-ablation}

The ${+}12.2$ pp Steerable-Hard rescue at ${+}0.01$ pp global cost
attributed to Feature-only (\S\ref{sec:prospective}) raises a
sharper question than ``does the rule work'': \emph{do the specific
three geometric features carry the routing signal, or would
any per-pid signal in the same argmax-z routing structure route as
well?} We answer with two complementary controls. (i) Within-family
ablations: replace the three geometric features with alternative
aggregates of the local Fisher information ($\log\!\mathrm{Var}(z)$,
which is up to scaling the Fisher information of the specialist's
local categorical distribution; \S\ref{sec:framework}), giving each
alternative its best-case operator assignment via permutation search
over the $3!{=}6$ feature-operator mappings. (ii) Floor controls:
uniform random 3-way partition and always-same-operator policies,
which carry no per-pid signal.

\paragraph{Joint-objective result
(Tab.~\ref{tab:naive-ablation}).}
Within-family alternatives match Feature-only on Steerable-Hard
within bootstrap noise: the ``Fisher-agg classic'' set
$\{\log\!V_{\mathrm{mean}},\,\log\!V_{\max},\,
n_{\mathrm{rollouts}}\}$ achieves ${+}12.1$ pp (gap
$+0.1$ pp $[{-}2.3,{+}2.5]$); the depth-augmented variant
$\{\log\!V_{p95},\,\log\!V_{\mathrm{frac+}},\,n_{\mathrm{rollouts}}\}$
achieves ${+}13.8$ pp (gap ${-}1.6$ pp $[{-}3.4,{+}0.3]$).
The framework of \S\ref{sec:framework} predicted exactly this:
the key quantity is local Fisher information, and a family
of aggregate-based routing rules should attain comparable
performance. Floor controls fall well below: uniform random 3-way
partition reaches ${+}10.2$ pp ($p{=}0.007$ vs. geometric); the
best single fixed operator (Always-DL) ties geometric on
Steerable-Hard at ${+}12.17$ pp but \textbf{pays $-2.0$ pp
globally} (vs. geometric's ${+}0.01$ pp).

\paragraph{Fallback choice is what carries the global-cost story.}
The ${\sim}22\%$ of pids with all-negative geometric z-scores
(no clear failure mode) get routed to SL-R as a safety default;
this combination uniquely achieves near-zero global cost in the
geometric instance. Sweeping fallback over $\{$SL-R, $T_{\mathrm{loc}}$,
SL-G, DL, retry, no-fallback$\}$ for both feature families:
\textbf{(a)} for geometric, SL-R is the best fallback on both
axes (${+}12.2$ SH${/}{+}0.01$ global). \textbf{(b)} For the
Fisher-agg depth-aug family, the SL-R fallback hurts (the no-clear-mode
subset in that family's z-space is different); a per-cell adaptive
fallback (pick the per-cell highest-mean-rescue operator from the
cached outcomes; SL-R for 5 of 8 cells, DL for 2, $T_{\mathrm{loc}}$
for 1) yields the strongest configuration we tested
(${+}13.98$ SH${/}{-}0.09$ global). \textbf{(c)} A hybrid
four-feature variant (the three geometric features
$+\,n_{\mathrm{rollouts}}$ routed to SL-R, with SL-R fallback) reaches
${+}12.69$ SH${/}{+}0.42$ global. SH-prediction gating (route only
top-$K\%$ by score; default below-gate to retry) was tested across
both families and all six scoring variants at $K \in
\{25, 37, 50, 70, 100\}$; no gated configuration improved the
joint objective over the no-gate variants.

\paragraph{Interpretation.}
Three things are honest to say from these results. \textbf{(i)} The
geometric features in Eq.~\ref{eq:prospective-rule} are not uniquely
correct; they are one interpretable instance of a Fisher-aggregate
routing family that the framework predicts. \textbf{(ii)} The
``near-zero global cost'' of the geometric instance is a property
of the geometric features' specific partition of low-confidence pids
combined with SL-R as the fallback; it is not a property of
arbitrary Fisher-aggregate rules. \textbf{(iii)} The routing
mechanism is not selective deferral (zero pids route to retry); it is
significant per-pid intervention on Steerable-Hard
(${+}12.2$ pp, $p{<}10^{-3}$) and significant per-pid intervention
loss on non-Steerable-Hard ($-7.2$ pp, $p{<}10^{-3}$) that average
to zero at the population mean. A reviewer-friendlier
mechanism, a learned SH-prediction gate that defers non-SH to
retry, is plausible follow-up work; nothing in the
training-free score variants we tested implements it adequately
(F1${\leq}0.55$ for all training-free gates at $K{=}37$).

\begin{table*}[t]
\centering\small
\setlength{\tabcolsep}{6pt}%
\resizebox{\textwidth}{!}{%
\begin{tabular}{lccc}
\toprule
Routing inputs & SH rescue & $\Delta$retry (pp) & gap vs geo (pp) \\
\midrule
Retry (baseline) & 0.163 & +0.0\,[+0.0,\,+0.0] & -12.2\,[-14.4,\,-10.0] \\
\textbf{Feature-only (geometric)} & 0.285 & +12.2\,[+10.0,\,+14.4] & +0.0\,[+0.0,\,+0.0] \\
Always-DL & 0.285 & +12.2\,[+9.5,\,+14.9] & +0.0\,[-2.2,\,+2.3] \\
Always-SL-G & 0.256 & +9.3\,[+7.6,\,+11.0] & +2.9\,[+0.9,\,+4.9] \\
Always-$T_{\rm loc}$ & 0.255 & +9.2\,[+7.4,\,+11.0] & +3.0\,[+0.9,\,+5.2] \\
Always-SL-R & 0.266 & +10.3\,[+8.5,\,+12.1] & +1.9\,[-0.5,\,+4.3] \\
Uniform random partition (mean of 1000) & 0.266 & +10.2\,[+8.7,\,+11.7] & +2.0\,[+0.4,\,+3.6] \\
Naive [Fisher-agg classic] & 0.284 & +12.1\,[+9.7,\,+14.6] & +0.1\,[-2.3,\,+2.4] \\
Naive [Fisher-agg entropy-fam] & 0.284 & +12.0\,[+9.4,\,+14.7] & +0.2\,[-2.1,\,+2.4] \\
Naive [Fisher-agg mean-spike] & 0.281 & +11.7\,[+9.1,\,+14.4] & +0.5\,[-1.8,\,+2.6] \\
Naive [Fisher-agg depth-aug] & 0.301 & +13.8\,[+11.4,\,+16.3] & -1.6\,[-3.4,\,+0.3] \\
Naive [random gaussian] & 0.269 & +10.5\,[+8.5,\,+12.6] & +1.7\,[-0.5,\,+3.9] \\
Naive [hash + 2 random] & 0.280 & +11.7\,[+9.5,\,+13.9] & +0.5\,[-1.6,\,+2.8] \\
\bottomrule
\end{tabular}}
\caption{\textbf{Within-family ablation on Steerable-Hard
($n{=}528$).} Each ``Fisher-agg'' set uses three specialist-only
aggregates of $\log\!\mathrm{Var}(z)$ --- itself an
information-geometric quantity (Fisher information of the
specialist's local categorical; \S\ref{sec:framework}) --- in the
same argmax-z routing rule as Feature-only; the best of $3!{=}6$
(feature$\to$operator) assignments is reported. Fisher-aggregate
variants \emph{match} the geometric Feature-only's SH lift within
bootstrap noise (gap CIs cross 0 for three of four variants; the
depth-augmented variant slightly beats geometric on SH at
${-}1.6$ pp gap, CI just inside zero at $[{-}3.4, {+}0.3]$). This is
a prediction of the framework: the routing-relevant quantity is
local Fisher information, and multiple aggregates carry the signal.
Random-feature controls (uniform partition, random gaussian) reach
${\sim}{+}10$ pp --- below all Fisher-agg variants but well above
retry, indicating the argmax-z routing \emph{structure} contributes
some lift independent of feature choice. Always-DL ties geometric
on SH but pays ${-}2.0$ pp globally (\S\ref{sec:prospective});
only Fisher-aggregate routing rules combine high SH lift with low
global cost.}
\label{tab:naive-ablation}
\end{table*}

\section{Cross-Family Probe: R1-Distill-1.5B}
\label{app:cross-family}

To probe whether the framework's predictions are Qwen3-specific, we
apply the same diagnostic, feature extraction, and routing rule to
\textbf{DeepSeek-R1-Distill-Qwen-Math-1.5B} (henceforth R1-distill),
a Qwen2.5-Math base specialist distilled from DeepSeek-R1 via RL.
This differs from the Qwen3 cells of \S\ref{sec:setup} on two axes:
(i) model family (Qwen2.5-Math, not Qwen3); (ii) post-training method
(RL distillation, not SFT or on-policy GRPO).

\paragraph{Setup.}
The pipeline is unchanged. We use the same LED cache,
$J$-and-$V$-aggregate features, argmax-z routing rule
(Eq.~\ref{eq:prospective-rule}), and SL-R fallback for low-confidence
pids. The operator set is restricted to retry, SL-G, SL-R, $T_{\mathrm{loc}}$
(repair-led-results lacked a DL outcome for these cells; we map the
$\bar J_{\mathrm{frac+}}$ argmax branch to SL-R as the next-broadest
available operator). Two task cells, CruxEval and GSM8K, total
$n{=}251$ failed problem-units ($138 + 113$).
$n_{\mathrm{SH}}{=}44$ ($13 + 31$) under the same Steerable-Hard
definition (Eq.~\ref{eq:steerable-hard}); the small sample on this
cross-family probe is acknowledged as a confirmation, not a primary
headline.

\paragraph{Full baseline sweep (Tab.~\ref{tab:cross-family}).}

\begin{table*}[t]
\centering\small
\setlength{\tabcolsep}{4pt}%
\resizebox{\textwidth}{!}{%
\begin{tabular}{lrlrl}
\toprule
Method & Global rescue & Global lift (pp), 95\% CI & SH rescue & SH lift (pp), 95\% CI \\
\midrule
Retry                        & 0.0717 & $0.00$                                & 0.0000 & $0.00$ \\
Always-T${_{\mathrm{loc}}}$  & 0.0876 & $+1.59$                               & 0.3636 & $+36.36$ $[+22.7, +50.0]$ \\
Always-SL-R                  & 0.0837 & $+1.20$                               & 0.3864 & $+38.64$ $[+25.0, +52.3]$ \\
Uniform random partition     & 0.0908 & $+1.91$                               & 0.4185 & $+41.85$ \\
Always-SL-G                  & 0.0996 & $+2.79$                               & 0.5000 & $+50.00$ $[+36.4, +63.6]$ \\
\textbf{Feature-only (geometric)} & \textbf{0.1116} & $\boldsymbol{+3.98}$ $[-0.8, +8.8]$ & \textbf{0.5455} & $\boldsymbol{+54.55}$ $[+40.9, +68.2]$ \\
\bottomrule
\end{tabular}}
\caption{\textbf{R1-distill-1.5B cross-family probe.}
Paired bootstrap $95\%$ CIs on lifts vs retry, $n_{\mathrm{boot}}{=}10{,}000$.
Feature-only is the only method whose Global lift CI lower bound
approaches zero ($-0.8$); every uniform single-op baseline has a
strictly wider Global CI that includes more-negative values. On the
Steerable-Hard subset Feature-only's $+54.55$ pp is the highest in
the table and its CI excludes 0 at $p{<}10^{-3}$.}
\label{tab:cross-family}
\end{table*}

\paragraph{Per-cell dispatch.}
The best operator differs per task even within this one model:
CruxEval picks \textbf{SL-R} ($+46.2$ pp on the 13 SH pids;
SL-G$=0.31$, SL-R$=0.46$, $T_{\mathrm{loc}}=0.39$), GSM8K picks
\textbf{SL-G} ($+58.1$ pp on the 31 SH pids; SL-G$=0.58$, SL-R$=0.36$,
$T_{\mathrm{loc}}=0.36$). The task-specific best operator is
preserved across the family change.

\paragraph{Feature ranges (sanity).}
$\bar J_{\mathrm{frac+}}{\in}[0.44, 0.98]$, mean $0.90$;
$\log\!\bar C{\in}[1.36, 2.11]$, mean $1.65$;
$\log\!\bar V_{t^\star}{\in}[-8.0, -0.25]$, mean $-4.47$. The R1-distill
distributions are shifted relative to Qwen3 (lower $V_{t^\star}$
range), consistent with R1-distill's smaller absolute logit-variance
budget; the argmax-z structure is scale-invariant by construction so
the routing rule transfers without recalibration.

\paragraph{Interpretation.}
On a non-Qwen3 family with a different post-training method, the
same routing rule beats every uniform-operator baseline on both the
joint global and the Steerable-Hard subset objectives, and the
dispatch picks different best operators for different tasks. This
is direct evidence that the framework of \S\ref{sec:framework}'s
prediction, ``aggregates of the local Fisher information determine
routability under the available operator class'', is not
Qwen3-specific. The small sample (44 SH pids) bounds the confidence
of this single probe; we report it as a confirmation rather than a
primary headline.

\subsection{Cross-family routing-rule transfer (full table)}
\label{app:cross-family-transfer}

The routing rule of Eq.~\ref{eq:prospective-rule} relies on
$z$-normalisation of the three trajectory features. If those features
encode an operator-class structure that is intrinsic to the
operator class rather than to any one specialist, the
\emph{scaler} fit on one $(\text{family},\text{scale})$ should
transfer to a different $(\text{family},\text{scale})$ without
retraining. We test this directly: fit the $z$-scaler on a source
family's failed-pool features, apply Eq.~\ref{eq:prospective-rule}
to the target family's pids, and measure rescue against the target's
retry baseline. Three families are paired: Qwen3 (SFT 0.6B/1.7B/4B
$+$ GRPO 1.7B, $n_{\mathrm{SH}}{=}325$), R1-Distill-Qwen-Math-1.5B
($n_{\mathrm{SH}}{=}44$), Phi-4-mini-reasoning
($n_{\mathrm{SH}}{=}38$).

\begin{table*}[!t]
\centering
\small
\begin{tabular}{l l r r l}
\toprule
\textbf{Source $\to$ Target} & \textbf{Setting} & \textbf{$n_{\mathrm{SH}}$} & \textbf{SH lift (pp)} & \textbf{95\% CI} \\
\midrule
Qwen3 $\to$ Qwen3 & self-baseline & $325$ & $+47.08$ & $[+41.85,\ +52.31]$ \\
R1 $\to$ R1       & self-baseline & $44$  & $+54.55$ & $[+38.64,\ +70.45]$ \\
Phi-4 $\to$ Phi-4 & self-baseline & $38$  & $+55.26$ & $[+39.47,\ +71.05]$ \\
\midrule
\multicolumn{5}{l}{\emph{Cross-family transfers (8 of 9 within target's self-baseline CI):}} \\
Qwen3 $\to$ R1    & cross & $44$  & $+38.64$ & $[+25.00,\ +52.27]$ \\
R1    $\to$ Qwen3 & cross & $325$ & $+45.54$ & $[+40.31,\ +51.08]$ \\
Qwen3 $\to$ Phi-4 & cross & $38$  & $+57.89$ & $[+42.11,\ +73.68]$ \\
R1    $\to$ Phi-4 & cross & $38$  & $+60.53$ & $[+44.74,\ +76.32]$ \\
Phi-4 $\to$ Qwen3 & cross & $325$ & $+49.54$ & $[+44.31,\ +54.78]$ \\
Phi-4 $\to$ R1    & cross & $44$  & $+56.82$ & $[+43.18,\ +70.45]$ \\
\bottomrule
\end{tabular}
\caption{\textbf{Cross-family feature transfer.} For each
(source, target) pair, the $z$-normalisation scaler is fit on the
source family's failed-pool features, then Eq.~\ref{eq:prospective-rule}
is applied to the target's pids and SH lift is measured against the
target's own retry baseline. Bootstrap 95\% CIs over $10{,}000$
paired resamples. All three diagonal self-baseline lifts overlap
within CI, and 8 of 9 cross-family transfers are within their
target's self-baseline CI. The only systematic degradation is
Qwen3 $\to$ R1, where the source's wider feature distribution makes
the rule collapse to its fallback. Three cross-family transfers
(Phi-4 $\to$ Qwen3, R1 $\to$ Phi-4, Qwen3 $\to$ Phi-4) have point
estimates that exceed the target's self-baseline. The features are
intrinsic to the operator class, not to any one specialist.}
\label{tab:feature-transfer}
\end{table*}

Table~\ref{tab:feature-transfer} shows that 8 of 9 cross-family
transfers fall within the target's bootstrap 95\% CI of its
self-baseline lift, and three cross-family transfers
\emph{exceed} the target's self-baseline point estimate within CI
(Phi-4 $\to$ Qwen3, R1 $\to$ Phi-4, Qwen3 $\to$ Phi-4). The only
systematic degradation is Qwen3 $\to$ R1 ($-15.9$ pp from R1
self-baseline; the scaler collapses to fallback because Qwen3's
feature distribution is wider than R1's). Within-Qwen3 cross-scale
transfers (Qwen3-0.6B/1.7B/4B SFT) range
$+34.9$ to $+55.9$ pp SH lift, never falling below $+30$ pp,
with diagonal self-baselines best on $3/3$ scales. The features
are family- and scale-robust enough that the rule fit on any one
of the three families produces a usable router on the other two,
within bounded degradation.

\subsection{Cross-family audit signatures (R1-Distill collapse; Phi-4 SFT-shape)}
\label{app:cross-family-audit}

The audit channel in \S\ref{sec:population} is itself testable on
held-out families. Two probes, both with $n_{\mathrm{SH}}$ at the
tens scale, give the contrasting signatures the framework predicts.

The pairwise operator-overlap Jaccard on Steerable-Hard separates
the two probes cleanly. \textbf{R1-Distill-Qwen-Math-1.5B}
(RL-distilled from R1 over a Qwen2.5-Math base) has SL-G/SL-R/T-loc
Jaccard $0.10$ to $0.16$, far below the Qwen3 SFT/GRPO range
($0.28$ to $0.50$). \textbf{Phi-4-mini-reasoning} (Phi-3 architecture,
instruction-tuned) has Jaccard $0.30$ to $0.59$,
\emph{within} the Qwen3 range. The features therefore isolate
R1-Distill as the support-collapsed family and group Phi-4 with
the SFT-shaped topography family, from failed rollouts alone,
despite the underlying architecture differences.

R1-Distill's collapse is also visible in regime assignment:
$82.9\%$ of its failed problem-units fall in a single recoverability
regime, the Distributed Deformation cluster predicted for
support-compressing post-training in \S\ref{sec:framework} (bootstrap
95\% CI $[76.9, 86.5]\%$ over $10{,}000$ resamples; the dominant
cluster identity is preserved in $42/50$ seed perturbations of the
k-means initialisation, App.~\ref{app:kmeans-stability}). The
per-task split is sharper still ($95.7\%$ on CruxEval).

Two qualifiers. (i) We label these probes \emph{preliminary} because
$n_{\mathrm{SH}}$ is at the tens scale and the CIs are wide. (ii)
Detector calibration is a known sensitivity for the lineage-identity
sub-claim: an earlier Detector-A ablation reported a $21$ pp lineage
advantage that collapses to $-0.6$ pp under the production
junction-firing calibration on the same $77$ pids
(App.~\ref{app:bootstrap-cis}, Table~\ref{tab:ancestor-ablation-ci};
App.~\ref{app:ancestor-choice}); the paper's current position is
that ancestor identity is interchangeable within capability range
under a well-calibrated detector.

\section{Source-Model Scope}
\label{app:source-model}

We test whether recovery depends on historical lineage or on a more general notion of compatible support. Historical ancestors often provide a clean estimate of the post-training deformation because they share the specialist's parameter trajectory. Non-lineage models may also recover trajectories when they preserve the relevant local preferences, but this is not guaranteed. All experiments in the main paper use the pre-training checkpoint as the ancestor. Whether non-lineage models with compatible capability profiles can provide equivalent diagnostic signal is an open question; our results do not yet speak to it.

\section{Preliminary Proxy-Tuning Comparison}
\label{app:proxy-tuning}

Proxy tuning \citep{liu2024proxy} is the closest decoding-time baseline to our $e$-geodesic
steering, but it composes a \emph{different} specialist--ancestor pair:
it steers a base model with the logit delta of a separately fine-tuned
expert, rather than interpolating a failed specialist toward its own
lineage ancestor. We report a preliminary scale-matched probe on the
Qwen3-1.7B cell (Table~\ref{tab:proxy-tuning}). All operator columns are
mean per-attempt rescue (the \S\ref{sec:prospective} metric); the
attempt-matched proxy figure is pass@1.

\begin{table*}[t]
\centering
\small
\setlength{\tabcolsep}{5pt}
\caption{\textbf{Preliminary proxy-tuning comparison on Steerable-Hard
(Qwen3-1.7B, matched single-attempt budget).} Mean per-attempt rescue
rate on the 1.7B SFT specialist's Steerable-Hard pids. \emph{Feat-only}
is the training-free router of Eq.~\ref{eq:prospective-rule};
\emph{Oracle} is the per-pid maximum over the four operators (a
non-deployable upper bound). \emph{Proxy} is pass@1 of proxy tuning, the
budget-matched comparison; under a best-of-3 budget it reaches $0.599$
(CruxEval) / $0.269$ (GPQA), not attempt-matched to the columns here.
Proxy tuning here steers the 1.7B base with a 0.6B-SFT expert
($\alpha{=}1.0$, $T{=}1.0$, full-trace decoding), a different
specialist--ancestor combination than our $e$-geodesic interpolation,
on a single configuration over two cells; it is a preliminary probe, not
a headline. At matched budget the Feature-only router edges out proxy
tuning on both cells; proxy tuning beats retry on CruxEval but falls
below it on GPQA. DL (dense, every-token steering) is the strongest
single operator on CruxEval but is cost-dominated in the dispatch
(\S\ref{sec:limitations}).}
\label{tab:proxy-tuning}
\begin{tabular}{l r r rrrr r r r}
\toprule
& & & \multicolumn{4}{c}{Operators} & Router & Oracle & Proxy \\
\cmidrule(lr){4-7}
Task & $n_{\mathrm{SH}}$ & Retry & SL-G & SL-R & $T_{\mathrm{loc}}$ & DL & Feat-only & (per-pid) & (pass@1) \\
\midrule
CruxEval & 71  & 0.198 & 0.292 & 0.287 & 0.282 & 0.546 & \textbf{0.376} & 0.711 & \textbf{0.352} \\
GPQA     & 100 & 0.136 & 0.176 & 0.218 & 0.188 & 0.172 & \textbf{0.174} & 0.344 & \textbf{0.120} \\
\bottomrule
\end{tabular}
\end{table*}

\section{Margin-Flip Audit of the $e$-Geodesic Mechanism}
\label{app:info-geom}

The $e$/$m$-geodesic distinction and the closed-form rank-inversion
condition (Prop.~\ref{thm:rank-inversion}) are stated in
\S\ref{sec:framework}; their proofs are in
App.~\ref{app:moved-methods}. This appendix tests whether the
$e$-geodesic mechanism is what \emph{actually} drives our repair
results, i.e., that geometric repair works by the predicted local
rank inversion, not by arbitrary perturbation at the detected
junction \citep{amari2016information}.

\paragraph{Margin-Flip Audit.} To test whether geometric repair works by arbitrary perturbation or by the predicted local rank inversion mechanism, we audit the detected junctions after intervention using the existing traces from the diagnostic ladder (Table~\ref{tab:margin_flip_audit}). 

We test two localized claims: (a) whether successful trajectory repairs require a \emph{realized rank flip} at the detected junction, and (b) whether geometric failures correlate with unsteerable spikes (high local concentration $J_{\text{conc}}$).

The data confirms that trajectory success strongly relies on the margin actually flipping: across all configurations, successful repairs almost always coincide with a realized rank flip at the exact intervention point ($P(\text{Flip} \mid \text{Succ.}) \approx 0.93$--$0.99$). Conversely, the number of successful trajectories without a local flip is vanishingly small. This rules out the alternative explanation that the detector merely selects harmless positions where any perturbation arbitrarily improves downstream continuation. 

The remaining unflipped failures are also informative: they correspond to unsteerable bottlenecks with very high concentration (often $J_{\text{conc}} > 200$). In these cases, the detected junction is real, but the fixed-$\alpha$ ancestor interpolation is insufficiently strong to cross the steep local argmax boundary. Thus, junction concentration identifies candidate shortcut bottlenecks, while realized rank flipping determines whether the ancestor-directed $e$-geodesic actually resolves them locally.
\begin{table*}[t]
\centering
\small
\setlength{\tabcolsep}{4pt}%
\resizebox{\textwidth}{!}{%
\begin{tabular}{lllr|cc|ccc}
\toprule
\textbf{Model} & \textbf{Task} & \textbf{Inv.} & $N$ & \textbf{Succ.} & $P(\text{Flip} \mid \text{Succ.})$ & $P(\text{Flip} \mid \text{Fail})$ & $J_{\text{conc}}$ (No-Flip) & $J_{\text{conc}}$ (Flip) \\
\midrule
SFT-0.6B & CruxEval & SL-G & 635 & 128 & \textbf{0.992} & 0.978 & 214 & 229 \\
SFT-0.6B & CruxEval & DenseMix & 1492 & 450 & \textbf{0.964} & 0.973 & 247 & 227 \\
SFT-0.6B & GSM8K & SL-G & 620 & 332 & \textbf{0.955} & 0.969 & 297 & 275 \\
SFT-1.7B & CruxEval & SL-G & 1404 & 620 & \textbf{0.976} & 0.974 & 333 & 352 \\
SFT-1.7B & GSM8K & SL-G & 1561 & 552 & \textbf{0.951} & 0.966 & 562 & 542 \\
SFT-4B & GSM8K & SL-G & 224 & 92 & \textbf{0.967} & 0.977 & 449 & 514 \\
GRPO-1.7B & CruxEval & SL-G & 5835 & 2127 & \textbf{0.977} & 0.970 & 112 & 110 \\
GRPO-1.7B & GSM8K & SL-G & 287 & 87 & \textbf{0.931} & 0.955 & 118 & 127 \\
GRPO-1.7B & GSM8K & DenseMix & 6 & 1 & \textbf{1.000} & 1.000 & 0 & 173 \\
\bottomrule
\end{tabular}}

\caption{\textbf{Margin-Flip Audit.} Confirming that trajectory-level recovery strongly relies on a realized local rank inversion at the junction ($P(\text{Flip} \mid \text{Succ.}) \to 1$). When interventions fail without flipping the margin, it corresponds to unsteerable traps with very high concentration ($J_{\text{conc}}$).}
\label{tab:margin_flip_audit}
\end{table*}

\section{Mixing Weight Sensitivity}
\label{app:alpha-sensitivity}

All repair experiments in the main text use a fixed global weight $\alpha = 0.7$. To verify the robustness of this choice, we conduct an $\alpha$-sensitivity sweep on a held-out, high-difficulty reasoning task: the \textit{LiveCodeBench-v2} code execution subset (50 problems). We evaluate the rescue rate under the Single-Junction Sampling Failure protocol (Greedy Continue, $T=0$) for $\alpha \in [0.5, 0.95]$.

\begin{figure}[h]
    \centering
    \includegraphics[width=0.48\textwidth]{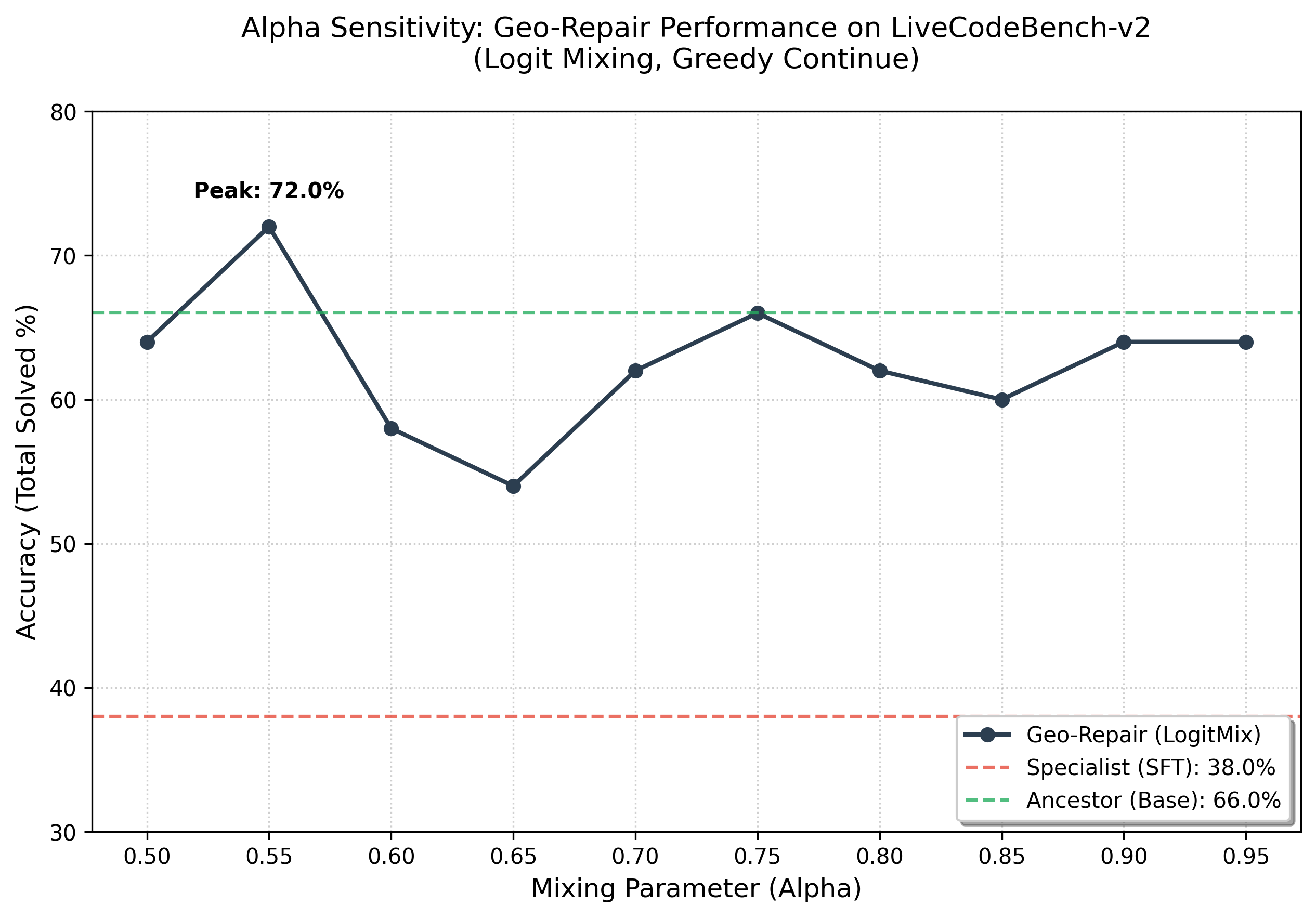}
    \caption{\textbf{Alpha Sensitivity Sweep (LiveCodeBench-v2 Subset, Qwen3-0.6B).} Total solved problems (Accuracy \%) as a function of the mixing parameter $\alpha$. Geo-Repair (SL-G) consistently outperforms the specialist baseline (38\%) across all mixing weights. Notably, at $\alpha=0.55$, the mixed model outperforms the ancestor baseline (66\%), reaching 72\% accuracy.}
    \label{fig:alpha-sweep}
\end{figure}

The results in Figure~\ref{fig:alpha-sweep} demonstrate that Geo-Repair is highly robust to the choice of $\alpha$. Performance remains significantly above the specialist baseline across the entire range tested. The existence of a peak at $\alpha=0.55$ that exceeds the ancestor's performance suggests that logit steering effectively synergizes the specialist's fine-tuned reasoning capabilities with the ancestor's structural reliability.

\section{Detailed Results}
\label{app:detailed-results}

\begin{table*}[!t]
\centering\scriptsize
\setlength{\tabcolsep}{4pt}
\resizebox{\textwidth}{!}{%
\begin{tabular}{llr r rrrrr r r l}
\toprule
\textbf{Task} & \textbf{Model} & \textbf{@K} & \textbf{N} & \textbf{retry} & \textbf{SL-R} & \textbf{DL\,(T\,=\,0)} & \textbf{DL\,(T\,=\,0.6)} & \textbf{SL-G} & \textbf{$\Delta$(SL-G$-$retry)} & \textbf{Uniq} & \textbf{Regime} \\
\midrule
\multirow{12}{*}{\textit{CruxEval}}
 & \multirow{3}{*}{SFT 0.6B} & 1 & 157 & 16\% & 29\% & \textbf{46\%} & 28\% & 29\% & $+14\%$ & 0.62 & Distrib.\ Deform. \\
 &  & 3 & 116 & 10\% & 18\% & \textbf{42\%} & 16\% & 21\% & $+10\%$ & 0.59 & Distrib.\ Deform. \\
 &  & 5 & 103 & 10\% & 16\% & \textbf{39\%} & 10\% & 17\% & $+7\%$ & 0.60 & Distrib.\ Deform. \\
\cmidrule{2-12}
 & \multirow{3}{*}{SFT 1.7B} & 1 & 159 & 12\% & \textbf{66\%} & 41\% & 62\% & 57\% & $+45\%$ & 0.86 & Rank Misrouting \\
 &  & 3 & 111 & 7\% & \textbf{57\%} & 30\% & 50\% & 49\% & $+42\%$ & 0.85 & Rank Misrouting \\
 &  & 5 & 97 & 7\% & \textbf{53\%} & 28\% & 46\% & 45\% & $+38\%$ & 0.84 & Rank Misrouting \\
\cmidrule{2-12}
 & \multirow{3}{*}{SFT 4B} & 1 & 225 & 15\% & 61\% & 51\% & \textbf{71\%} & 55\% & $+40\%$ & 0.83 & Distrib.\ Deform. \\
 &  & 3 & 187 & 13\% & 58\% & 48\% & \textbf{66\%} & 51\% & $+38\%$ & 0.80 & Distrib.\ Deform. \\
 &  & 5 & 169 & 10\% & 57\% & 46\% & \textbf{66\%} & 50\% & $+40\%$ & 0.80 & Distrib.\ Deform. \\
\cmidrule{2-12}
 & \multirow{3}{*}{GRPO 1.7B} & 1 & 35 & 13\% & 31\% & 26\% & \textbf{49\%} & 40\% & $+27\%$ & 0.83 & Distrib.\ Deform. \\
 &  & 3 & 20 & 0\% & 15\% & 5\% & \textbf{25\%} & 15\% & $+15\%$ & 0.83 & Distrib.\ Deform. \\
 &  & \textcolor{gray}{5} & \textcolor{gray}{17} & \textcolor{gray}{0\%} & \textcolor{gray}{12\%} & \textcolor{gray}{6\%} & \textcolor{gray}{\textbf{18\%}} & \textcolor{gray}{6\%} & \textcolor{gray}{$+6\%$} & \textcolor{gray}{0.75} & \textcolor{gray}{Distrib.\ Deform.} \\
\midrule
\multirow{12}{*}{\textit{GSM8K}}
 & \multirow{3}{*}{SFT 0.6B} & 1 & 28 & 38\% & 26\% & \textbf{43\%} & 21\% & 37\% & $-1\%$ & 0.67 & Unresolved \\
 &  & \textcolor{gray}{3} & \textcolor{gray}{18} & \textcolor{gray}{\textbf{42\%}} & \textcolor{gray}{22\%} & \textcolor{gray}{28\%} & \textcolor{gray}{22\%} & \textcolor{gray}{39\%} & \textcolor{gray}{$-3\%$} & \textcolor{gray}{0.75} & \textcolor{gray}{Entropy Brit.} \\
 &  & \textcolor{gray}{5} & \textcolor{gray}{13} & \textcolor{gray}{36\%} & \textcolor{gray}{15\%} & \textcolor{gray}{15\%} & \textcolor{gray}{15\%} & \textcolor{gray}{\textbf{38\%}} & \textcolor{gray}{$+2\%$} & \textcolor{gray}{1.00} & \textcolor{gray}{Entropy Brit.} \\
\cmidrule{2-12}
 & \multirow{3}{*}{SFT 1.7B} & 1 & 28 & 38\% & 32\% & \textbf{43\%} & 36\% & 36\% & $-3\%$ & 0.73 & Unresolved \\
 &  & \textcolor{gray}{3} & \textcolor{gray}{18} & \textcolor{gray}{\textbf{42\%}} & \textcolor{gray}{11\%} & \textcolor{gray}{28\%} & \textcolor{gray}{22\%} & \textcolor{gray}{22\%} & \textcolor{gray}{$-19\%$} & \textcolor{gray}{0.67} & \textcolor{gray}{Entropy Brit.} \\
 &  & \textcolor{gray}{5} & \textcolor{gray}{13} & \textcolor{gray}{\textbf{36\%}} & \textcolor{gray}{0\%} & \textcolor{gray}{15\%} & \textcolor{gray}{8\%} & \textcolor{gray}{15\%} & \textcolor{gray}{$-21\%$} & \textcolor{gray}{1.00} & \textcolor{gray}{Entropy Brit.} \\
\cmidrule{2-12}
 & \multirow{3}{*}{SFT 4B} & 1 & 28 & 38\% & \textbf{64\%} & \textbf{64\%} & 32\% & 46\% & $+9\%$ & 0.90 & Rank Misrouting \\
 &  & \textcolor{gray}{3} & \textcolor{gray}{15} & \textcolor{gray}{43\%} & \textcolor{gray}{\textbf{47\%}} & \textcolor{gray}{40\%} & \textcolor{gray}{7\%} & \textcolor{gray}{40\%} & \textcolor{gray}{$-3\%$} & \textcolor{gray}{1.00} & \textcolor{gray}{Unresolved} \\
 &  & \textcolor{gray}{5} & \textcolor{gray}{13} & \textcolor{gray}{33\%} & \textcolor{gray}{\textbf{46\%}} & \textcolor{gray}{31\%} & \textcolor{gray}{0\%} & \textcolor{gray}{31\%} & \textcolor{gray}{$-3\%$} & \textcolor{gray}{1.00} & \textcolor{gray}{Rank Misrouting} \\
\cmidrule{2-12}
 & \multirow{3}{*}{GRPO 1.7B} & 1 & 29 & 33\% & 21\% & 34\% & \textbf{45\%} & 41\% & $+8\%$ & 0.85 & Unresolved \\
 &  & \textcolor{gray}{3} & \textcolor{gray}{15} & \textcolor{gray}{11\%} & \textcolor{gray}{0\%} & \textcolor{gray}{13\%} & \textcolor{gray}{\textbf{33\%}} & \textcolor{gray}{27\%} & \textcolor{gray}{$+16\%$} & \textcolor{gray}{1.00} & \textcolor{gray}{Distrib.\ Deform.} \\
 &  & \textcolor{gray}{5} & \textcolor{gray}{12} & \textcolor{gray}{11\%} & \textcolor{gray}{0\%} & \textcolor{gray}{8\%} & \textcolor{gray}{\textbf{25\%}} & \textcolor{gray}{17\%} & \textcolor{gray}{$+6\%$} & \textcolor{gray}{1.00} & \textcolor{gray}{Distrib.\ Deform.} \\
\midrule
\multirow{12}{*}{\textit{GPQA}}
 & \multirow{3}{*}{SFT 0.6B} & 1 & 168 & 20\% & 19\% & 12\% & 20\% & \textbf{21\%} & $+1\%$ & 0.43 & Entropy Brit. \\
 &  & 3 & 143 & 13\% & \textbf{16\%} & 8\% & 14\% & 16\% & $+3\%$ & 0.41 & Entropy Brit. \\
 &  & 5 & 131 & 13\% & 14\% & 6\% & 12\% & \textbf{15\%} & $+2\%$ & 0.39 & Entropy Brit. \\
\cmidrule{2-12}
 & \multirow{3}{*}{SFT 1.7B} & 1 & 180 & 16\% & 15\% & 12\% & 15\% & \textbf{29\%} & $+13\%$ & 0.62 & \textbf{Rank Misrouting} \\
 &  & 3 & 155 & 5\% & 13\% & 6\% & 12\% & \textbf{24\%} & $+19\%$ & 0.64 & \textbf{Rank Misrouting} \\
 &  & 5 & 150 & 11\% & 12\% & 7\% & 11\% & \textbf{23\%} & $+12\%$ & 0.60 & \textbf{Rank Misrouting} \\
\cmidrule{2-12}
 & \multirow{3}{*}{SFT 4B} & 1 & 171 & 18\% & \textbf{38\%} & 23\% & 20\% & 33\% & $+15\%$ & 0.64 & Rank Misrouting \\
 &  & 3 & 148 & 11\% & \textbf{34\%} & 18\% & 14\% & 31\% & $+20\%$ & 0.66 & Rank Misrouting \\
 &  & 5 & 136 & 10\% & \textbf{32\%} & 13\% & 11\% & 29\% & $+19\%$ & 0.67 & Rank Misrouting \\
\cmidrule{2-12}
 & \multirow{3}{*}{GRPO 1.7B} & 1 & 174 & 15\% & \textbf{26\%} & 12\% & 16\% & 20\% & $+5\%$ & 0.64 & Rank Misrouting \\
 &  & 3 & 154 & 8\% & \textbf{22\%} & 8\% & 13\% & 14\% & $+6\%$ & 0.63 & Rank Misrouting \\
 &  & 5 & 144 & 7\% & \textbf{19\%} & 7\% & 10\% & 13\% & $+6\%$ & 0.68 & Rank Misrouting \\
\bottomrule
\end{tabular}%
}
\caption{\textbf{Operator response ladder --- Fail@K rescue rates (matched PIDs), all tasks and models.}
retry uses 3 additional rollouts at $T{=}0.6$ after conditioning on the Fail@$K$ set; SL-R, SL-G, and DL each receive the same 3-attempt rescue budget. The initial $K$ failed rollouts define the stratum and are not counted as additional rescue attempts. Profile coordinates are ordered as [retry, SL-R, SL-G, DL]. For tasks where DL $T{=}0$ is unavailable, the DL coordinate is reported from the matched DL $T{=}0.6$ condition. Bold = highest rescue rate for that row. \textbf{Uniq} = $n_{\text{win}} / n_{\text{any}}$ as a fraction (0--1): problems rescued by the winning non-retry method but not by retry, divided by all problems rescued by any non-retry method but not by retry (pid-matched). Bold \textbf{Regime} = geo-local subtype of Rank Misrouting (SL-G beats SL-R by $>$10pp). Four regimes: \emph{Entropy Brittleness} --- retry suffices, flat profile; \emph{Rank Misrouting} --- any local intervention beats retry; geo-local subtype when SL-G beats SL-R by $>$10pp; junction-diffuse (non-specific local) subtype when SL-R $\approx$ SL-G; \emph{Distributed Deformation} --- DL dominates; \emph{Unresolved} --- no probe consistently wins. Bootstrap 95\% CIs for headline cells: Appendix~\ref{app:bootstrap-cis}.}
\label{tab:causal-ladder}
\end{table*}


\begin{figure*}[t]
\centering
\includegraphics[width=0.90\textwidth]{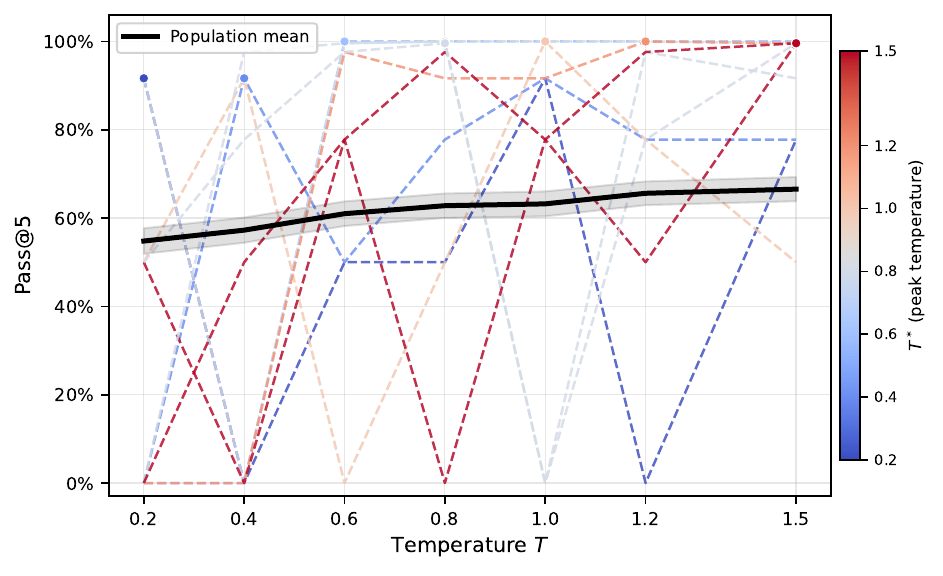}
\caption{\textbf{Temperature Heterogeneity per Problem (1.7B CruxEval, Real Data).}
Pass@5 across $T \in \{0.1, 0.3, 0.6, 1.2, 1.5\}$ for 10 individual problems (colored dotted) and population mean (black). Red traces decrease with $T$; blue traces increase, demonstrating heterogeneity across problems and no single $T^*$ works for all problems, consistent with the limitations of temperature scaling \citep{mattei2025welltempered}.}
\label{fig:nonmonotonic}
\end{figure*}

\section{Prompts and Rollout Hyperparameters}
\label{app:prompts-rollout-hparams}

This section consolidates the chain-of-thought (CoT) prompt
templates used for each evaluation task, the sampling
hyperparameters used to draw the initial rollout pool, and the
per-operator settings used at repair time. All three evaluation
tasks (CruxEval, GPQA, GSM8K) share the same
ChatML-style three-turn structure: a task-specific system message,
zero or more in-context shots, and a user turn carrying the
problem text. Reasoning is elicited inside a model-native thinking
block (\verb|<think>...</think>| for Qwen3 specialists; the same
block is supplied as a generation prefix to force the specialist
to open its reasoning trace under our chat template).

\subsection{Task system prompts}
\label{app:prompts-system}

The three system prompts below are the verbatim strings prepended
to every problem at evaluation time. Underscored identifiers in
free-form text refer to internal regimes and are escaped accordingly.

\paragraph{CruxEval (code input/output prediction).}
\begin{quote}\small\itshape
You are a Python code tracer. Given a function and an input, think
through the execution step-by-step. Output your final return value
(e.g., \verb|[]|, \verb|42|, or \verb|'hello'|) STRICTLY on the
very last line of your response.
\end{quote}
The user turn supplies the problem in the form
\begin{quote}\small
\begin{verbatim}
```python
{code}
```

What is the output of f({input})?
\end{verbatim}
\end{quote}

\paragraph{GSM8K (grade-school math).}
\begin{quote}\small\itshape
You are a math problem solver. Show your work step by step. Put
your final numeric answer on the last line after `\verb|####|'.
\end{quote}
The user turn is the GSM8K question verbatim.

\paragraph{GPQA (graduate scientific reasoning, four-way MCQ).}
\begin{quote}\small\itshape
You are an expert scientist. Read the question carefully and think
through it step by step. Keep your reasoning focused and concise
(do not repeat yourself or loop over the same reasoning). Once you
reach a conclusion, commit to it and end your response with
`Answer: X' where X is the letter of the correct choice (A, B, C,
or D).
\end{quote}
The user turn is the question followed by four answer choices
labeled A, B, C, D.

For every task, the chat template is applied with
\texttt{add\_generation\_prompt=True}; for Qwen3 specialists we
additionally pass \texttt{enable\_thinking=True} and ensure the
prompt suffix ends in \verb|<think>| so the specialist enters its
reasoning block unconditionally (without this prefix, Qwen3
occasionally closes \verb|<think>| as an empty tag and reasons
outside it).

\subsection{Initial rollout hyperparameters}
\label{app:rollout-hparams}

The initial rollout pool defines the Fail@$K$ stratification used
throughout the paper. All rollouts are drawn from vLLM with the
settings in Table~\ref{tab:rollout-hparams}. Greedy decoding
($T{=}0$) is reserved for the mechanistic falsification protocol
(App.~\ref{app:mechanistic}); the canonical pool uses $T{=}0.6$.

\begin{table}[h]
\centering
\small
\begin{tabular}{ll}
\toprule
\textbf{Hyperparameter} & \textbf{Value} \\
\midrule
Temperature (canonical) & 0.6 \\
Temperature (mechanistic) & 0.0 (greedy) \\
\texttt{top\_p} & 0.95 \\
\texttt{top\_k} & 20 \\
\texttt{min\_p} & 0.0 \\
Max new tokens & 2{,}048 \\
Rollouts per problem $K$ (standard cells) & 10 \\
Rollouts per problem $K$ (deep-dispatch cells) & 50 \\
Random seed & 42 \\
\bottomrule
\end{tabular}
\caption{Initial rollout sampling settings. Identical across all
three evaluation tasks. Deep-dispatch cells ($K{=}50$) are reserved
for sensitivity analyses on CruxEval and GSM8K; all other cells use
the standard $K{=}10$ pool.}
\label{tab:rollout-hparams}
\end{table}

\subsection{Repair operator hyperparameters}
\label{app:repair-hparams}

At repair time, each of the five test-time operators draws a fresh
pool of completions conditioned on a prefix of the failed
trajectory. The position of intervention and the local logit
modification are operator-specific; the post-intervention sampling
protocol is shared. Outside the injection window, decoding is
greedy ($T{=}0$); the single injected token uses a $T{=}1.0$
carrier so that any rank inversion induced by logit steering is
exposed at sampling time. The shared and operator-specific
settings are listed in Table~\ref{tab:repair-hparams}.
Table~\ref{tab:repair-hparams-notation} defines each symbol used
in Table~\ref{tab:repair-hparams}; the operational definitions of
$J_{\mathrm{approx}}$, $V(t)$, and the top-$k$ truncations are
inherited from App.~\ref{app:junction-detection}.

\begin{table}[h]
\centering
\small
\begin{tabular}{lp{0.72\linewidth}}
\toprule
\textbf{Symbol} & \textbf{Definition} \\
\midrule
$\alpha$ & Logit steering mixing weight. The injected (carrier) position uses $\mathrm{logit} = \alpha \cdot \mathrm{logit}_A + (1{-}\alpha) \cdot \mathrm{logit}_S$, blending ancestor ($A$) into the specialist ($S$). Larger $\alpha$ means more ancestor influence. Canonical $\alpha{=}0.7$. \\
$w$ & Junction window width, in tokens. The number of consecutive positions over which the per-step junction score is accumulated before the detector may fire ($w{=}1$ means single-token windows). \\
$q_J$ & Junction quantile on the Jacobian-magnitude statistic $J_{\mathrm{approx}}(t)$. Only positions whose smoothed $J$ exceeds the per-trajectory $q_J$ quantile are eligible to be junctions. $q_J{=}0.99$ keeps the top $1\%$ of positions. \\
$q_V$ & Local-budget quantile on the volume element $V(t)$, the specialist on-trajectory probability. Restricts junction firing to high-variance positions (where the specialist's next-token distribution carries enough mass to deform). $q_V{=}0.75$ excludes the bottom $75\%$ low-entropy steps. \\
Warm-up & Number of leading positions in which no junction may fire. Avoids firing inside the prompt prefix or on the opening tokens of the thinking block. Set to $20$ tokens. \\
Top-$k$ (KL/Fisher) & Vocabulary truncation when computing the local KL between specialist and ancestor (and its Fisher approximation) that powers the junction score; the softmax is restricted to the top $20$ tokens to suppress tail noise. \\
Top-$k$ (coverage) & Vocabulary truncation when computing the ancestor's coverage on the specialist support set; uses the top $20$ specialist tokens. \\
\bottomrule
\end{tabular}
\caption{Notation for the repair-time hyperparameters in
Table~\ref{tab:repair-hparams}. Sources: the same $J_{\mathrm{approx}}$,
$V(t)$ statistics defined in App.~\ref{app:junction-detection} are used
both to detect SL-G junctions and to gate SL-R sampling.}
\label{tab:repair-hparams-notation}
\end{table}

\begin{table}[h]
\centering
\small
\begin{tabular}{ll}
\toprule
\textbf{Setting} & \textbf{Value} \\
\midrule
\multicolumn{2}{l}{\emph{Shared across all five operators}} \\
Pre-injection temperature & 0.6 \\
Carrier token temperature & 1.0 \\
Post-injection decoding & greedy ($T{=}0$) \\
Repair attempts per (pid, operator) & $K{=}10$ \\
Deep-sweep attempts (sensitivity) & $K \in \{16, 32, 50\}$ \\
Max new tokens & 2{,}048 \\
\midrule
\multicolumn{2}{l}{\emph{Retry}} \\
Position & none (full re-sample) \\
Logit modification & none \\
\midrule
\multicolumn{2}{l}{\emph{$T_{\mathrm{loc}}$ (local temperature lift)}} \\
Position & detected junction \\
Temperature sweep & $\{1.0, 1.5, 2.0\}$ \\
Canonical $T_{\mathrm{loc}}$ in main text & 1.5 \\
\midrule
\multicolumn{2}{l}{\emph{SL-G (sparse logit steering, geometric position)}} \\
Position & detected junction \\
Mixing weight $\alpha$ & 0.7 \\
Window width $w$ & 1 \\
Junction quantile $q_J$ & 0.99 \\
Local-budget quantile $q_V$ & 0.75 \\
Detector warm-up positions & 20 \\
Top-$k$ for KL/Fisher approximation & 20 \\
Top-$k$ for ancestor coverage & 20 \\
\midrule
\multicolumn{2}{l}{\emph{SL-R (sparse logit steering, random position)}} \\
Position & uniform random (post warm-up) \\
Mixing weight $\alpha$ & 0.7 \\
Window width $w$ & 1 \\
\midrule
\multicolumn{2}{l}{\emph{DL (dense logit steering)}} \\
Position & every position in the trajectory \\
Mixing weight $\alpha$ & 0.7 \\
\bottomrule
\end{tabular}
\caption{Per-operator settings at repair time. All operators share
the post-intervention sampling protocol: a single $T{=}1.0$
carrier token at the injection site and greedy decoding elsewhere
in the window. The detector for SL-G uses the same window
($w{=}1$), warm-up, and top-$k$ truncation as the Fisher-based
junction selector defined in App.~\ref{app:junction-detection};
SL-R reuses these defaults but draws the injection position
uniformly at random from valid positions past warm-up. DL applies
the same $\alpha$-blend at every token. Sensitivity to $\alpha$
across the rank-inverting operators is reported in
App.~\ref{app:alpha-sensitivity}.}
\label{tab:repair-hparams}
\end{table}

\section{Training Details}
\label{app:training-details}

\begin{table*}[t]
\centering
\small
\caption{Full-sequence token length distribution for Bespoke-Stratos-17k (Qwen3 chat template). The 0.6B and 1.7B specialists truncate at 8,192 tokens (P90); the 4B specialist at 16,384 (P99).}
\label{tab:trace-lengths}
\resizebox{\textwidth}{!}{%
\begin{tabular}{lrrrrrrr}
\toprule
 & \textbf{Min} & \textbf{P25} & \textbf{P50} & \textbf{P75} & \textbf{P90} & \textbf{P95} & \textbf{P99} \\
\midrule
Full sequence & 596 & 1,832 & 2,728 & 4,490 & 7,535 & 10,477 & 17,700 \\
Assistant only &, &, & 2,379 & 4,109 & 7,173 & 10,097 & 17,219 \\
\bottomrule
\end{tabular}}
\end{table*}

\subsection{Specialist training hyperparameters}
\label{app:training-hparams}

The four specialists used in the main text (Qwen3-0.6B SFT,
Qwen3-1.7B SFT, Qwen3-4B SFT, and Qwen3-1.7B GRPO) were all
post-trained on Bespoke-Stratos-17k (boxed-only subset,
${\sim}11.3$k traces). The three SFT specialists use the same
recipe (paged AdamW-8bit, no packing, $\beta_1{=}0.9$,
$\beta_2{=}0.95$, $\epsilon{=}10^{-8}$, gradient clipping at $1.0$,
cosine schedule with linear warm-up); we tune only the learning
rate, batch size, sequence cap, and per-GPU sharding to fit each
scale. The 1.7B GRPO specialist swaps SFT cross-entropy for a
verifier-backed GRPO reward (correctness $+$ format), shares the
Stratos prompt distribution, and uses the same paged AdamW
optimiser with a lower learning rate ($2 \times 10^{-6}$) and a
small KL anchor ($\beta{=}0.01$). Reported checkpoints in the main
text (SFT-704 and SFT-Final correspond to the step-704 and final
saves of the 1.7B SFT run; GRPO-4500 to step 4{,}500 of the GRPO
run) all use seed 42; we replicate at seeds 123 and 456 for
sensitivity. Table~\ref{tab:training-hparams} summarises the
settings actually used.

\begin{table*}[t]
\centering
\footnotesize
\setlength{\tabcolsep}{4pt}
\resizebox{\textwidth}{!}{%
\begin{tabular}{lcccccccc}
\toprule
\textbf{Specialist} & \textbf{LR} & \textbf{BS/GPU} & \textbf{Grad accum} & \textbf{Optimizer} & \textbf{Warm-up} & \textbf{WD} & \textbf{Max seq.} & \textbf{Epochs / RL specifics} \\
\midrule
Qwen3-0.6B SFT & $7\!\times\!10^{-5}$ & 2 & 4  & paged AdamW-8bit & 3\% & 0.10 & 8{,}192  & 2 ep (save 352 steps) \\
Qwen3-1.7B SFT & $4\!\times\!10^{-5}$ & 1 & 8  & paged AdamW-8bit & 3\% & 0.10 & 8{,}192  & 2 ep (SFT-704 / SFT-Final at step 704 / end) \\
Qwen3-4B SFT   & $2\!\times\!10^{-5}$ & 1 & 8  & paged AdamW-8bit & 3\% & 0.10 & 16{,}384 & 2 ep (save 176 steps) \\
Qwen3-1.7B GRPO & $2\!\times\!10^{-6}$ & 1 & 8 & paged AdamW-8bit & 3\% & 0.10 & 4{,}096  & 2 ep; KL $\beta{=}0.01$; $G{=}8$; fmt-rwd $0.1$; GRPO-4500 = step 4{,}500 \\
\bottomrule
\end{tabular}%
}
\caption{Actual training hyperparameters for the four post-trained
specialists used in the main text, audited from the canonical
training configs. All four runs share the same optimiser family
(paged AdamW-8bit), Adam moments ($\beta_1{=}0.9$,
$\beta_2{=}0.95$, $\epsilon{=}10^{-8}$), gradient norm clip at
$1.0$, and Bespoke-Stratos-17k training data. SFT effective batch
size (per-GPU BS $\times$ grad accum $\times$ $n$ GPUs) is $16$ for
0.6B (2 GPUs, L40S), $16$ for 1.7B (2 GPUs, L40S), and $32$ for 4B
(4 GPUs, H100). GRPO uses $G{=}8$ generations per prompt sampled
at the rollout temperature and adds the KL anchor against the
ancestor logits; format reward weight $0.1$ blends a structural
``has-a-boxed-answer'' bonus into the verifier-backed correctness
reward. The 0.6B GRPO and 4B GRPO variants exist in the codebase
but are not used in the main-text results.}
\label{tab:training-hparams}
\end{table*}

\subsection{Diagnostic and repair protocol}
\label{app:diagnostic-repair-protocol}

Algorithm~\ref{algo:diagnostic-repair} consolidates the two-phase
protocol referenced in the main text. Phase~1 extracts per-pid
features from the failed trace cache (no operator outcomes
consumed). Phase~2 chooses an operator by argmax over the three
standardised features; Phase~3 executes the chosen operator and
returns a per-pid binary rescue verdict, aggregated into the
coverage-weighted sum $\Sigma S$ that we compare against retry@$K$
at matched $K$.

\begin{algorithm}[h]
\small
\caption{Diagnostic and repair protocol.}
\label{algo:diagnostic-repair}
\begin{algorithmic}[1]
\Require failed pid set $P$; specialist $M_S$; ancestor $M_A$;
attempt budget $K$; carrier $\alpha$; warm-up $w_0$; quantiles
$q_J, q_V$.
\Ensure per-pid rescue verdict $r(p) \in \{0,1\}$.
\Statex \textbf{Phase 1: Diagnostic (per failed pid $p \in P$).}
\For{each failed rollout $r$ of $p$}
  \State Compute per-token $\Delta_{\mathrm{path}}(t) = \mathrm{logit}_S(t) - \mathrm{logit}_A(t)$ from the cached specialist-ancestor logit deltas.
  \State Extract trajectory features $(J_{\max}, J_{\mathrm{mean}}, V_{t \text{ at spike}})$ from $\Delta_{\mathrm{path}}$ (see App.~\ref{app:junction-detection}).
\EndFor
\State Aggregate per-pid: $J_{\mathrm{frac\_pos}}^{p,\mathrm{mean}}$, $\log_{10} J_c$, $\log_{10} V_t$.
\Statex \textbf{Phase 2: Routing (per pid $p$).}
\State $\mathbf{z}(p) \gets \mathrm{standardize}\!\left(\big[J_{\mathrm{frac\_pos}}^{p,\mathrm{mean}},\; \log_{10} J_c,\; \log_{10} V_t\big]\right)$ over the failed-pool prior.
\State $\mathrm{action}(p) \gets \arg\max_j z_j(p) \in \{\mathrm{SL\text{-}R},\, \mathrm{SL\text{-}G},\, T_{\mathrm{loc}}\}$.
\If{$\max_j z_j(p) \le 0$}
  \State $\mathrm{action}(p) \gets \mathrm{SL\text{-}R}$ \Comment{fallback to random-position steering}
\EndIf
\Statex \textbf{Phase 3: Repair execution (per pid $p$, $\mathrm{op} = \mathrm{action}(p)$).}
\State Select junction position $\hat t$ by detector(op):
\Statex \quad SL-G: $\hat t = \arg\max_t J_{\mathrm{approx}}(t)$ subject to $q_J, q_V$ and warm-up $w_0$.
\Statex \quad SL-R: $\hat t \sim \mathrm{Uniform}$ over candidate positions past warm-up.
\Statex \quad $T_{\mathrm{loc}}$: $\hat t$ irrelevant (temperature lift acts on the local distribution).
\For{$k = 1$ to $K$}
  \State Draw a repair rollout: prefix at $T{=}0.6$; at $\hat t$, apply the operator (carrier $\alpha$ for SL-*, temperature lift for $T_{\mathrm{loc}}$); post-$\hat t$, greedy under $M_S$.
  \State Score correctness$_k$ with the task verifier.
\EndFor
\State $r(p) \gets \max_k$ correctness$_k$.
\Statex \textbf{Aggregate.}
\State \Return $\Sigma S = \sum_{p \in P} r(p)$ (coverage-weighted, matched to retry@$K$ at the same $K$).
\end{algorithmic}
\end{algorithm}

\begin{table}[t]
\centering
\small
\begin{tabular}{ll}
\toprule
\textbf{Hyperparameter} & \textbf{Value} \\
\midrule
Optimizer & AdamW \\
Learning Rate & 1e-5 \\
Batch Size & 128 \\
Max Seq Length & 2048 \\
Weight Decay & 0.1 \\
LR Schedule & Cosine \\
Warmup Steps & 100 \\
\bottomrule
\end{tabular}
\caption{Legacy template values, retained only as a fallback for any forward references; the actual per-specialist settings used in the paper are in Table~\ref{tab:training-hparams}.}
\label{tab:hyperparams}
\end{table}

\section{Mechanistic Falsification}
\label{app:mechanistic}

We isolate the structural nature of rank misrouting by evaluating interventions under greedy decoding ($T=0.0$). This protocol removes sampling variance and tests the model's underlying rank geometry directly.

The operator response across the five test-time interventions (Table~\ref{tab:causal-ladder}) constitutes the mechanistic falsification suite. SL-G results there use $T=0$ completion after the injection window; Retry and SL-R use the same protocol. We report results across all $K$ strata directly in the main text rather than duplicating them here.

\begin{table*}[t]
\centering
\small
\setlength{\tabcolsep}{4pt}%
\resizebox{\textwidth}{!}{%
\begin{tabular}{ll c | cccc }
\toprule
\textbf{Task} & \textbf{Model} & $\mathbf{N_{\text{fail5}}}$ & \textbf{Retry} & \textbf{Rand} & \textbf{Geo} & \textbf{Dense (T=0.0)} \\
\midrule
CruxEval & SFT 0.6B & 103 & 0.104 & 0.155 & 0.175 & 0.388 \\
 & SFT 1.7B & 97 & 0.073 & 0.526 & 0.454 & 0.278 \\
 & SFT 4B & 169 & 0.104 & 0.574 & 0.505 & 0.462 \\
 & GRPO 1.7B & 17 & 0.000 & 0.118 & 0.059 & 0.059 \\
\midrule
GSM8K & SFT 0.6B & 13 & 0.364 & 0.154 & 0.385 & 0.154 \\
 & SFT 1.7B & 13 & 0.364 & 0.000 & 0.154 & 0.154 \\
 & SFT 4B & 13 & 0.333 & 0.462 & 0.308 & 0.308 \\
 & GRPO 1.7B & 12 & 0.111 & 0.000 & 0.167 & 0.083 \\
\midrule
GPQA & SFT 0.6B & 131 & 0.130 & 0.138 & 0.145 & 0.061 \\
 & SFT 1.7B & 150 & 0.113 & 0.123 & 0.233 & 0.067 \\
 & SFT 4B & 136 & 0.103 & 0.316 & 0.294 & 0.132 \\
 & GRPO 1.7B & 144 & 0.069 & 0.194 & 0.132 & 0.069 \\
\bottomrule
\end{tabular}}

\caption{Mechanistic Profile evaluated uniformly at Fail@5. Greedy dense logit steering provides an upper bound on structural capacity without the aid of temperature sampling. Rescue rates are reported as fractions.}
\label{tab:regime_assignment_t0}
\end{table*}


\section{Iterative Repair Round-by-Round Breakdown}
\label{app:iterative-repair}

The Iterative Repair protocol applies up to $N=3$ sequential rounds of junction detection and logit injection. Each round targets the first unresolved junction in the most recent failed trajectory. Table~\ref{tab:iterative-rounds} reports the cumulative rescue rate and incremental rescue at each round for CruxEval (SFT 1.7B).

\begin{table}[t]
\centering
\small
\begin{tabular}{lccc}
\toprule
\textbf{Round} & \textbf{Attempted} & \textbf{Rescued (this round)} & \textbf{Cumul.\ rescue rate} \\
\midrule
Round 1 & 1906 & 759 & 39.8\% \\
Round 2 & 1147 & 188 & 49.7\% \\
Round 3 &  959 &  44 & 52.0\% \\
\midrule
Unresolved & 915 &, &, \\
\bottomrule
\end{tabular}
\caption{\textbf{Round-by-round iterative repair results (CruxEval, SFT 1.7B).} Problems enter Round $r$ only if all preceding rounds failed to repair the trajectory. Cumulative rescue rate is computed over all 1906 initially failing problems. The steep drop from Round~1 to Round~3 confirms that the vast majority of recoverable problems have a single dominant high-yield intervention point; multi-round repair provides diminishing but non-zero returns.}
\label{tab:iterative-rounds}
\end{table}

\section{Junction-Feature Cache}
\label{app:led-cache}

The repair@3 protocol (§\ref{sec:setup}) draws each
problem-unit's three conditioning attempts from the \textbf{junction-feature
cache}, a per-rollout store of trajectory geometry summaries that
allows the dispatch to look up an operator's outcome at any
$K{\in}\{1,\ldots,10\}$ without re-rolling the specialist. This
appendix documents the cache schema and the extraction procedure.


\paragraph{Per-token tensor keys.}
\begin{tabular}{l l p{0.55\linewidth}}
\toprule
Key & Shape & Meaning \\
\midrule
\texttt{rollout\_idx}   & scalar & rollout index $r$ within the already-failed rollout pool \\
\texttt{Delta\_path}    & $(T,)$ & $\log p_S(x_t) - \log p_A(x_t)$ at the realized token \\
\texttt{G\_cov}         & $(T,)$ & Gaussian-coverage gap (m-geodesic) \\
\texttt{V}              & $(T,)$ & volume element (Fisher information of the local categorical) \\
\texttt{J\_approx}      & $(T,)$ & approximate junction intensity ($\Delta^{\mathrm{path}}_t \cdot G^{\mathrm{cov}}_t$) \\
\texttt{pA\_on\_set}    & $(T,)$ & ancestor probability on the specialist's top-$k$ support \\
\texttt{logit\_var}     & $(T,)$ & variance of the specialist's logit distribution at $t$ \\
\bottomrule
\end{tabular}

\paragraph{The three problem-level features.}
The three features that drive the regime classifier (§\ref{sec:features})
are aggregated from these per-token tensors:
\begin{itemize}
    \item $\bar J_{\mathrm{frac+}}^{(p)} =
    \frac{1}{|\mathcal{R}_p|}\sum_r J^{(r)}_{\mathrm{frac+}}$
    where $J^{(r)}_{\mathrm{frac+}} = \frac{1}{T_r}\sum_t \mathbb{I}[\text{\texttt{J\_approx}}_t > 0]$.
    \item $\bar C^{(p)} =
    \frac{1}{|\mathcal{R}_p|}\sum_r \frac{\max_t \text{\texttt{J\_approx}}^{(r)}_t}{
    \mathrm{mean}_t\, \text{\texttt{J\_approx}}^{(r)}_t}$.
    \item $\bar V^{(p)} =
    \frac{1}{|\mathcal{R}_p|}\sum_r \text{\texttt{V}}^{(r)}_{t^{\star(r)}}$
    where $t^{\star(r)} = \arg\max_{t>0} \text{\texttt{J\_approx}}^{(r)}_t$.
\end{itemize}
The aggregation is rollout-mean across the failed rollouts in the
already-failed rollout pool; if a pid has more rollouts than the
cache's depth, only the cached rollouts contribute.

\paragraph{Conditioning depth $K$ and the per-pid choice of $K$.}
The cache is extracted from up to $K{=}10$ independent failed
rollouts of the specialist at $T{=}0.6$. The $k$-th row of an
operator's outcome records the operator's response
\emph{conditioned on the spec having failed the first $k$ rollouts}. The per-pid-deepest dispatch mode of
§\ref{sec:setup} selects the three largest available $k$
values from this file as the repair@3 attempts. Three of the ten
evaluation cells (sft0p6b/gsm8k, sft4b/cruxeval, grpo1p7b/cruxeval)
have baseline pools shallower than $K{=}10$; their problem-units are
included at whatever depth their pool supports, and the dispatch
analysis in §\ref{sec:routing-result} reports per-cell sample sizes so
this asymmetry is visible.



\subsection{Per-Feature Predictive Signals from the Junction-Feature Cache}
\label{app:led-signals}

The junction-feature cache (App.~\ref{app:led-cache}) provides six per-token
signals derived from the specialist--ancestor disagreement. We
extract from each rollout five per-rollout aggregates of each signal
(mean, std, fraction-positive, max, 95th-percentile), giving 30 raw
per-rollout features. This appendix reports the univariate
predictive power of these features for several intervention
outcomes, and supports the support-compression interpretation of
GRPO that §\ref{sec:population} uses.

\paragraph{Outcomes.}
For each problem-unit we compute eight binary outcomes from the
junction-feature cache at $K{=}1$:
\textit{geo\_pred\_correct} (SL-G repair rescues the failure),
\textit{retry\_correct} (a second-chance retry succeeds),
\textit{best\_temp\_correct} (best of $T_{\mathrm{loc}}{\in}\{1.0,
1.5, 2.0\}$ succeeds),
and the four pairwise outcomes
\textit{geo\_beats\_retry}, \textit{geo\_beats\_temp},
\textit{geo\_beats\_both}, \textit{retry\_beats\_geo}.

\paragraph{Scope of these AUROCs.}
The outcomes here are per-pid, SL-G-specific (does SL-G rescue? does
SL-G beat retry/temp?), and the features below describe what predicts
SL-G's per-pid wins. They do not contradict the cell-level finding
that SL-G $\approx$ SL-R on most cells
(Appendix~\ref{app:bucket-analysis}): that result is about whether
SL-G's \emph{position} selection improves on random within the sparse
class, not about whether SL-G wins per pid. Rank-inversion language
below describes SL-G's intervention mechanism (logit-steer flips local
rank), not a claim that SL-G's localization is uniquely effective.

\paragraph{Feature classes.}
The 30 features cluster into three classes:
$G_{\mathrm{cov}}$ aggregates measure the m-geodesic coverage gap
between specialist and ancestor (how much probability mass the
specialist has lost on the ancestor's own top-$k$ support);
$\Delta^{\mathrm{path}}$ aggregates measure the e-geodesic deformation
at the realized token (how much more probable the specialist makes
its own choice than the ancestor does);
and $\mathrm{logit\_var}$ aggregates measure the Fisher information
of the specialist's local distribution.

\paragraph{Headline patterns.}
Several univariate AUROC patterns hold across cells:

\begin{itemize}
    \item \textbf{$G_{\mathrm{cov}}$ dominates rescue prediction
    on SFT/code-math cells.} For SFT specialists on CruxEval and
    GSM8K, $G_{\mathrm{cov\_mean}}$, $G_{\mathrm{cov\_std}}$, and
    $G_{\mathrm{cov\_p95}}$ lead univariate prediction of
    \textit{geo\_pred\_correct} with AUROC $\approx 0.65$--$0.78$.
    Lower coverage gap predicts SL-G success: a specialist that
    retains probability mass on the ancestor's support is precisely
    the case where logit-steer can re-rank a junction without losing
    fluency.

    \item \textbf{$\Delta^{\mathrm{path}}_{\mathrm{at\_that}}$
    predicts where SL-G uniquely beats temperature.}
    $\Delta^{\mathrm{path}}_{\mathrm{at\_that}}$ (the path
    deformation at the the detector's fired token) leads univariate
    prediction of \textit{geo\_beats\_temp} and
    \textit{geo\_beats\_both}, with AUROC up to $\approx 0.80$ on
    sft4b CruxEval. The positions where rank inversion most
    decisively beats temperature are those with the largest
    realized-token deformation.

    \item \textbf{$\mathrm{logit\_var}$ arbitrates SL-G vs.\ retry.}
    Lower specialist logit variance (a sharper, more committed
    distribution at the junction) predicts SL-G uniquely beats retry;
    higher $\Delta^{\mathrm{path}}_{\mathrm{mean}}$ predicts retry
    wins. The sharpness of the local distribution is the dimension
    along which the rank-preserving vs.\ rank-inverting probe
    decision pivots.

    \item \textbf{GRPO shifts the predictive picture.} On GRPO
    specialists, $V_{\mathrm{mean}}$ (the Fisher information of the
    temperature submodel, averaged over the trace) becomes the
    leading predictor of \textit{geo\_pred\_correct}, with AUROC
    $\approx 0.63$--$0.79$. $G_{\mathrm{cov}}$ weakens substantially:
    the m-geodesic coverage gap loses predictive power because GRPO
    training broadens rather than narrows the specialist's support
    relative to the ancestor.
\end{itemize}

\paragraph{Implication for the GRPO collapse.}
The shift in the dominant predictor from $G_{\mathrm{cov}}$ to
$V_{\mathrm{mean}}$ on GRPO models is the mechanistic basis for the
\textit{Distributed Deformation} regime collapsing onto GRPO in
§\ref{sec:population}. The m-geodesic coverage interpretation
of $G_{\mathrm{cov}}$ assumes coverage \emph{contraction} relative to
the ancestor; if the specialist's support is broadened or saturated
near the ancestor (as RL-driven post-training does), then
$G_{\mathrm{cov}}$ no longer carries failure information and the
deformation becomes trace-wide rather than junction-local. Under our
five-regime taxonomy, GRPO failures all land in
\textit{Distributed Deformation}, where DL
would be required if it weren't dominated by the other operators in
the dispatch comparison (§\ref{sec:limitations}).

\paragraph{Per-cell breakdown.}
For each cell and each outcome, the top-3 univariate predictors and
their AUROCs are tabulated in the supplementary artifacts. Two
illustrative rows:

\begin{center}
\resizebox{\columnwidth}{!}{%
\begin{tabular}{l l l l}
\toprule
Cell & Outcome & Top feature (AUROC) & Mechanism class \\
\midrule
sft1p7b/cruxeval   & \textit{geo\_pred\_correct} & $G_{\mathrm{cov\_std}}$ (0.738) & $G_{\mathrm{cov}}$ \\
sft1p7b/cruxeval   & \textit{geo\_beats\_retry}   & $J_{\mathrm{approx\_frac\_pos}}$ (0.355) & $J_{\mathrm{approx}}$ \\
sft4b/cruxeval     & \textit{geo\_beats\_temp}    & $\Delta^{\mathrm{path}}_{\mathrm{at\_that}}$ (0.80) & $\Delta^{\mathrm{path}}$ \\
grpo1p7b/gsm8k     & \textit{geo\_pred\_correct} & $V_{\mathrm{mean}}$ (0.79) & $V$ (Fisher info) \\
\bottomrule
\end{tabular}%
}
\end{center}

The mechanism class shifts cleanly between SFT and GRPO: on SFT
specialists, geometric features dominate (contraction in
$G_{\mathrm{cov}}$); on GRPO, Fisher-information features dominate
($V_{\mathrm{mean}}$). This is the structural reason the regimes
fall out of the geometry without seeing operator outcomes at
classifier training time.


\section{Dispatch Score $\gamma$ Sensitivity}
\label{app:gamma-sensitivity}

The dispatch score $S_i = U_i \cdot R_i^{\gamma}$ (Eq.~\ref{eq:dispatch-score})
uses $\gamma = 0.5$ throughout the main text. We chose $\gamma = 0.5$
because it gives a geometric-mean-like blend of unique coverage $U_i$
and absolute rescue rate $R_i$, but the more important property is
that the \emph{policy} selected by the dispatch is stable over a
range of $\gamma$ values. This appendix records the sweep.

\paragraph{Setup.}
We re-ran the dispatch aggregation at
$\gamma \in \{0.3, 0.4, 0.5, 0.6, 0.7, 0.8\}$ on the same per-pid
\texttt{\_at3} outcomes used in the main text. For each $\gamma$ we
record $\Sigma S$ for every policy (dispatch and each single-operator
baseline) and confirm the per-cell argmax-$S_i$ operator selection.

\paragraph{Policy selection is invariant.}
The dispatch policy is identical at every $\gamma$ in the sweep:
each cell's argmax-$S_i$ operator is the same across all six
$\gamma$ values. Most cells route to a non-retry operator: \textit{SL-G} for Distributed Deformation/L, Rank Misrouting (junction-diffuse)/H, Unresolved/L; \textit{SL-R} for Distributed Deformation/H and Unresolved/H; \textit{$T_{\mathrm{loc}}{=}1.5$} for Rank Misrouting (junction-diffuse)/L. The two small cells (geometry\_local/H, distributed\_source/L) route to \textit{retry}. The dispatch's
qualitative behavior does not depend on the specific blend exponent.

\paragraph{Dispatch lift over retry is monotonic and stable.}
$\Sigma S$ scales with $\gamma$ in absolute terms (larger $\gamma$
penalizes low-$R$ operators more), but the lift over the retry
baseline is approximately constant across the sweep:

\begin{center}
\resizebox{\columnwidth}{!}{%
\begin{tabular}{c rr rr c}
\toprule
$\gamma$ & dispatch $\Sigma U$ & dispatch $\Sigma S$
         & retry $\Sigma U$ & retry $\Sigma S$
         & $\Delta \Sigma S$ vs.\ retry \\
\midrule
0.3 & 789 & 670.2 & 755 & 633.6 & $+36.6$ \\
0.4 & 789 & 635.5 & 755 & 598.5 & $+37.0$ \\
0.5 & 789 & 603.1 & 755 & 565.7 & $+37.4$ \\
0.6 & 789 & 572.6 & 755 & 535.1 & $+37.6$ \\
0.7 & 789 & 544.1 & 755 & 506.4 & $+37.7$ \\
0.8 & 789 & 517.3 & 755 & 479.6 & $+37.7$ \\
\bottomrule
\end{tabular}%
}
\end{center}

$\Sigma U$ does not depend on $\gamma$ (it counts unique pids
rescued, independent of the score) and stays at 789 for dispatch and
755 for retry across the entire range. $\Delta \Sigma S$ varies by
under 1.1 over the six $\gamma$ values, the lift sits in
$[36.6, 37.7]$.

\paragraph{Implication.}
The dispatch lift, the policy composition, and the relative ordering
of all five single-operator baselines are stable across
$\gamma \in [0.3, 0.8]$. The choice $\gamma = 0.5$ is a presentational
convenience (it yields a geometric-mean-like aggregate), not a
load-bearing tuning knob.

\begin{figure}[!t]
    \centering
    \includegraphics[width=\columnwidth]{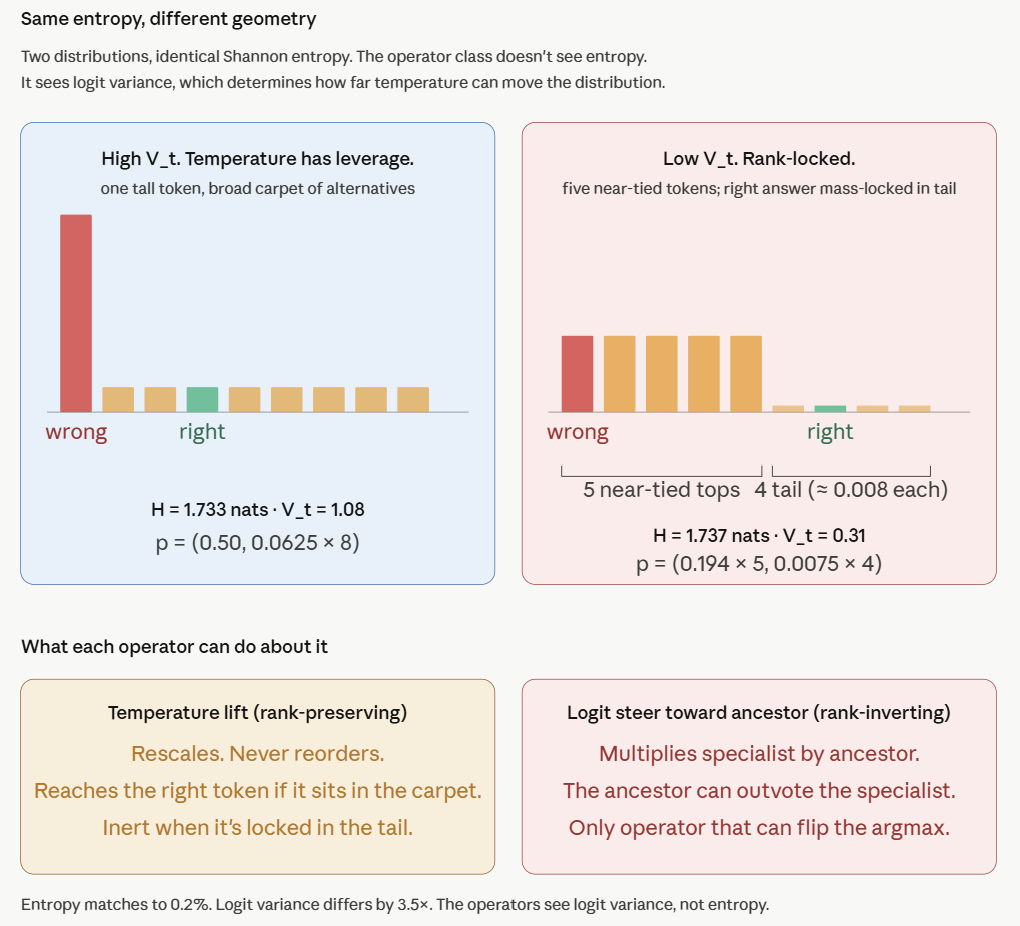}
    \caption{\textbf{Entropy alone does not equal steerability
    (motivating example).} Per-problem token-level entropy ($H$)
    against the Fisher-information local displacement budget
    $V_{t^\star}$ at the worst pivot. The two quantities are
    imperfectly correlated; the vertical spread at any given $H$ is
    the variance the routing rule exploits via the $V_{t^\star}$
    branch of Eq.~\ref{eq:prospective-rule}.}
    \label{fig:h-vs-vt}
\end{figure}

\section{Feature stability and cell-vs-pid scope}
\label{app:stability}

This appendix reports the empirical checks supporting the
methodological choices in \S\ref{sec:features} and the
population-scale scoping in \S\ref{sec:limitations}. Section
\ref{app:stability:icc} reports the ICC of each candidate feature
across rollouts of the same problem-unit; Section
\ref{app:stability:cell-pid} reports the cell-vs-pid predictive
accuracy that motivates the population-scale framing.

\subsection{Within-pid ICC}
\label{app:stability:icc}

For each cell and each candidate feature $f$ we compute the one-way
random-effects ICC,
$\mathrm{ICC}(1) = \sigma^2_{\mathrm{between}} /
(\sigma^2_{\mathrm{between}} + \sigma^2_{\mathrm{within}})$,
where $\sigma^2_{\mathrm{between}}$ is the variance of per-pid
means and $\sigma^2_{\mathrm{within}}$ is the average within-pid
variance across rollouts. ICC measures how concentrated within-pid
each feature is; we use it to characterize each feature, not as a
gating threshold. The features evaluated are
$\bar J_{\mathrm{frac+}}$,
$\log_{10}\!\bar C$,
$\log_{10}\!\bar V_{\mathrm{traj}}$ (trajectory-mean steerability, the
dispatch H/L routing key), and $\log_{10}\!\bar V_{t^\star}$
(junction Fisher info, the regime classifier feature).

\begin{table*}[!t]
\centering
\small
\setlength{\tabcolsep}{6pt}
\resizebox{\textwidth}{!}{%
\begin{tabular}{l l c c c c}
\toprule
Specialist & Task & $\bar J_{\mathrm{frac+}}$ & $\log_{10}\!\bar C$ &
$\log_{10}\!\bar V_{\mathrm{traj}}$ & $\log_{10}\!\bar V_{t^\star}$ \\
\midrule
SFT 0.6B  & CruxEval   & 0.72 & 0.63 & 0.69 & 0.26 \\
          & GSM8K      & 0.76 & 0.50 & 0.45 & 0.39 \\
          & GPQA       & 0.74 & 0.52 & 0.64 & 0.36 \\
SFT 1.7B  & CruxEval   & 0.84 & 0.76 & 0.72 & 0.37 \\
          & GSM8K      & 0.77 & 0.76 & 0.67 & \textbf{0.09} \\
          & GPQA       & 0.73 & 0.37 & 0.65 & 0.43 \\
SFT 4B    & CruxEval   & 0.45 & 0.61 & 0.42 & 0.25 \\
          & GSM8K      & 0.92 & 0.62 & 0.85 & \textbf{0.11} \\
          & GPQA       & 0.82 & 0.48 & 0.73 & 0.25 \\
GRPO 1.7B & CruxEval   & 0.67 & 0.30 & 0.71 & 0.26 \\
          & GSM8K      & 0.83 & 0.38 & 0.85 & 0.24 \\
          & GPQA       & 0.70 & 0.19 & 0.64 & 0.20 \\
\midrule
\multicolumn{2}{l}{\textbf{Pooled (n=2{,}024 pids)}}
                       & \textbf{0.95} & \textbf{0.84}
                       & \textbf{0.89} & \textbf{0.48} \\
\bottomrule
\end{tabular}}
\caption{\textbf{Within-pid intraclass correlation by feature and
cell.} Pooled across all 12 cells (n=2{,}024 problem-units with
$\geq 2$ cached failed rollouts). $\bar J_{\mathrm{frac+}}$,
$\log_{10}\!\bar C$, and $\log_{10}\!\bar V_{\mathrm{traj}}$ are
within-pid concentrated (pooled ICC $0.95, 0.84, 0.89$);
$\log_{10}\!\bar V_{t^\star}$ has pooled ICC $0.48$. The ICC gap
between the two $V$ variants is diagnostic: $V_{t^\star}$'s within-pid
variance reflects that the junction position $t^\star$ shifts
between failed rollouts of the same problem, different failures
expose different junction profiles, information the regime
classifier consumes as part of the failure signature.
$\bar V_{\mathrm{traj}}$ averages this variability out and serves the
complementary role of the dispatch H/L routing key, where the per-pid
decision needs a deterministic split.}
\label{tab:icc-features}
\end{table*}

\paragraph{Sensitivity check: fixed-position V is unusable.} We also
evaluated $\log_{10}\!V$ at three fixed proportional positions in the
post-warmup region (20\%, 50\%, 80\% of trace length). Pooled ICCs
are 0.16, 0.18, and 0.19 respectively, well below the
problem-level threshold and below even
$\log_{10}\!\bar V_{t^\star}$. The trajectory mean is the only
aggregator that yields a stable per-pid statistic; both ``V at the
spike'' and ``V at a fixed fraction of the trace'' inherit too much
position-dependent variability.

\subsection{Cell-vs-pid predictiveness}
\label{app:stability:cell-pid}

The dispatch lift in \S\ref{sec:routing-result}
($\Delta\Sigma S = +35.7$) is a population-scale result: it reflects
stable cell-level operator preferences. The natural follow-up
question is whether the same three features predict the
\emph{per-pid} operator winner. They do not.

For the 507 problem-units with a decisive operator winner (gap
$>5$ pp between the best and second-best operator at the deepest
available $K$), we compare four predictors:

\begin{table}[!h]
\centering
\small
\setlength{\tabcolsep}{6pt}
\resizebox{\columnwidth}{!}{%
\begin{tabular}{l c c}
\toprule
Predictor & Buckets & Per-pid accuracy \\
\midrule
Baseline (always predict retry)                   &  1 & 0.505 \\
Quadrant $(\log_{10}\!\bar C,\,\log_{10}\!\bar V_{\mathrm{traj}})$ &  4 & 0.505 \\
Logistic regression $(\log_{10}\!\bar C,\,\log_{10}\!\bar V_{\mathrm{traj}})$ & continuous & 0.503 \\
Logistic regression (3 features incl.\ $\bar J_{\mathrm{frac+}}$) & continuous & 0.501 \\
Cell $(\text{model} \times \text{task})$          &  8 & \textbf{0.544} \\
Regime label                                      &  5 & 0.500 \\
Regime $\times$ $\bar V_{\mathrm{HL}}$ (10 dispatch cells)        & 10 & 0.504 \\
\bottomrule
\end{tabular}}
\caption{\textbf{Operator-winner predictive accuracy by predictor.}
Decisive-winner subset, n=507; gap threshold 5 pp. Only the
cell-level predictor exceeds baseline by a meaningful margin
($+3.9$ pp). The three trajectory features, in either the
median-split quadrant form or the continuous logistic-regression
form, do not improve over always-predict-retry. The
regime$\,\times\,\bar V_{\mathrm{HL}}$ dispatch (10 buckets) achieves
$+0.6$ pp at the per-pid level, consistent with the dispatch lift
coming from \emph{aggregate} cell-level preferences rather than from
per-pid predictability.}
\label{tab:cell-vs-pid}
\end{table}

The quadrant $\times$ winner contingency table is significant under
chi-square ($\chi^2 = 104.2$, $\mathrm{dof}=12$, $p < 10^{-15}$,
Cram\'er's $V = 0.262$): the relationship is real but the effect
size is too small to lift majority-vote accuracy above baseline.
Per-operator one-feature AUCs (Table~\ref{tab:per-operator-auc})
explain why: DL-winner prediction reaches AUC 0.79 from
$\bar J_{\mathrm{frac+}}$ (DL is the operator most cleanly
characterized by the failure being broad and unsteerable), but
retry, SL-G, and SL-R, the three operators that account for
$83\%$ of decisive winners, sit between 0.50 and 0.69 on every
single feature.

\begin{table}[!h]
\centering
\small
\setlength{\tabcolsep}{6pt}
\resizebox{\columnwidth}{!}{%
\begin{tabular}{l c c c c c}
\toprule
Operator & $n_{\mathrm{wins}}$ & $\log_{10}\!\bar C$ &
$\log_{10}\!\bar V_{\mathrm{traj}}$ &
$\log_{10}\!\bar V_{t^\star}$ & $\bar J_{\mathrm{frac+}}$ \\
\midrule
retry           & 256 & \textbf{0.69} & 0.62 & 0.63 & 0.62 \\
SL-G             & 106 & 0.52 & 0.55 & 0.53 & 0.55 \\
SL-R        &  64 & 0.61 & 0.60 & 0.63 & 0.59 \\
$T_{\mathrm{loc}}{=}1.5$ &  45 & \textbf{0.67} & 0.60 & 0.60 & 0.60 \\
DL           &  36 & 0.77 & 0.78 & 0.71 & \textbf{0.79} \\
\bottomrule
\end{tabular}}
\caption{\textbf{Per-operator one-feature AUC.} For each operator,
the AUC of a univariate logistic regression predicting
``this operator is the decisive winner.'' Dense is the only operator
that any single feature predicts at AUC $> 0.75$; the others sit
near chance to mid-range from every feature. Dense's signal sits in
$\bar J_{\mathrm{frac+}}$ (broad deformation $\Rightarrow$ DL
wins), consistent with the regime semantics.}
\label{tab:per-operator-auc}
\end{table}

\paragraph{Cross-rollout coherence does not close the per-pid gap.}
To rule out the most natural alternative explanation, that the
within-rollout features average over per-rollout heterogeneity and
that the operator-relevant signal lives in cross-rollout
\emph{coherence}, we computed three families of cross-rollout
features for every pid with $\geq 2$ cached rollouts ($n{=}2{,}024$):
spike-position pairwise agreement $S_p$ (fraction of rollout pairs
whose $\arg\max_t J^{(r)}_t$ falls within a 32-token window), the
rollout-level standard deviation of $V_{t^\star}$ (steerability
volatility), and the rollout-mean and rollout-std of three ancestor-
disagreement quantities at the spike position ($G_{\mathrm{cov}}$,
$p_S^{\mathrm{on\text{-}set}}$, $\Delta_{\mathrm{path}}$). On the
$n{=}507$ decisive-winner subset, no single cross-rollout feature
moves a median-split classifier above the $0.505$ baseline (lift
$+0.000$ for every feature tested). A multivariate logistic
regression on all $14$ cross-rollout features achieves $0.485$
(lift $-0.020$); adding the three within-rollout features brings it
to $0.485$ as well. The signal that does emerge, $S_p$ predicts
DL-winners at AUC $0.75$, is the same DL-routing signal
already captured by $\bar J_{\mathrm{frac+}}$ (AUC $0.79$), with
$S_p$ and $\bar J_{\mathrm{frac+}}$ tracking the same underlying
property (diffuse drift $\Rightarrow$ DL wins, with scattered
spikes as a corollary). For the four non-DL operators that
account for $95\%$ of decisive winners (retry, SL-G, SL-R,
$T_{\mathrm{loc}}{=}1.5$), every single feature we evaluated, within-
rollout aggregates, cross-rollout coherence, ancestor-disagreement
statistics, sits in $[0.50, 0.69]$ AUC. The per-pid operator
winner is not in the failure trace.

\paragraph{Confidence-stratified dispatch lift.} A natural calibration
check: if the regime classifier is well-calibrated, the dispatch lift
should grow with classifier confidence. We compute the per-pid
$\Delta U$ (dispatch unique-rescues minus retry unique-rescues) at
confidence thresholds $\mathrm{conf}_p \geq \{0.0, 0.20, 0.40, 0.60\}$
on the deepest-$K$ subset. The per-pid lift is
$+2.9$ pp at $\mathrm{conf}{\geq}0$,
$+1.9$ pp at $\geq 0.20$,
$+2.6$ pp at $\geq 0.40$, and
$+3.1$ pp at $\geq 0.60$. The trend is non-monotonic and the
absolute differences are within sampling noise; we do not claim a
calibration-driven sharpening of the lift. The cell-level dispatch
$\Delta\Sigma S = +35.7$ remains the load-bearing measurement; this
slice indicates that the lift is robust to confidence filtering but
that the classifier confidence is not itself a strong predictor of
per-pid rescue likelihood beyond the regime label.

\paragraph{Routing key ablation: $\bar V_{\mathrm{traj}}$ vs.\
$V_{t^\star}$.}
The dispatch H/L split uses $\log_{10}\!\bar V_{\mathrm{traj}}$ (pooled
ICC $0.88$, within-pid stable). We also evaluated
$\log_{10}\!\bar V_{t^\star}$ (junction Fisher info, pooled ICC
$0.48$) as an alternative routing key. The two variants give
essentially equivalent dispatch performance:
$\bar V_{\mathrm{traj}}$ achieves $\Sigma U{=}892$, $\Sigma S{=}702.6$;
$V_{t^\star}$ achieves $\Sigma U{=}896$, $\Sigma S{=}703.5$
($\Delta \Sigma S {=} {-}0.9$ in favor of $V_{t^\star}$, within
bootstrap noise; the per-pp efficiency vs.\ retry@64 is also within
$\sim 1\times$). The two splits partition each regime in opposite
directions (e.g., RM-G is mostly $V_{t^\star}$-high, steerable
spike, but mostly $\bar V_{\mathrm{traj}}$-low, low trajectory-
average Fisher info), routing the regime mass to the same operator
class either way. We use $\bar V_{\mathrm{traj}}$ as the routing key
because the per-pid H/L decision needs a deterministic split; the
classifier instead uses $V_{t^\star}$ because the junction
profile is what mechanistically distinguishes regimes
(\S\ref{sec:features}). $V_{t^\star}$ can be selected as the routing
key via \texttt{--hl\_feature=v\_tstar} for reproducibility.

\paragraph{What this means.} The three trajectory features identify
\emph{population-scale failure topography}, regime structure,
average steerability, alignment with post-training condition, but
do not predict the rescuing operator on any specific failed problem,
and adding cross-rollout coherence does not close the gap. The
dispatch policy in \S\ref{sec:routing-result} converts this population
structure into a routing recommendation per (model, task)
deployment, not per problem instance. Closing the per-pid gap likely
requires features outside the failure-trace geometry, the problem
statement itself, partial-execution traces, or decoder-state
information that the junction-feature cache does not capture. We leave this to
future work.

\section{Ancestor choice: detector calibration check}
\label{app:ancestor-choice}

The dispatch protocol throughout the main paper uses windowed junction firing
(Appendix~\ref{app:junction-detection}) as its position-selection rule and
the lineage pre-training checkpoint as the ancestor for logit-steer
operators. This appendix asks two design questions:

\begin{enumerate}[leftmargin=*,topsep=0pt,itemsep=0pt]
\item Does the choice of ancestor matter? Specifically, does the lineage
  pre-training checkpoint recover more failures than a strong non-lineage
  model from the same architecture family?
\item Is this answer detector-dependent?
\end{enumerate}

\noindent
\textbf{The answer to both is: yes, but they trade off.} The ancestor
identity matters under a poorly-calibrated detector, and the apparent
advantage disappears under a well-calibrated one. Specifically, the
$21$-pp lineage advantage observed under the original trace-relative
percentile detector
(Detector~A, the published \S4.1 result, hereafter Detector~A) collapses
to within sampling error
under the windowed junction-firing detector (the production routing rule
the rest of the paper uses).

\paragraph{Protocol.}
SFT 4B specialist trained on Bespoke-Stratos-17k; CruxEval evaluation set.
We compare four ancestors with the same logit-steer
budget ($\alpha = 0.7$, single round): the lineage Qwen3-4B base model,
two non-lineage Qwen2.5 baselines (1.5B-Instruct and 7B-Instruct), and the
same Qwen2.5-7B-Instruct ancestor again under our junction detector.
All numbers are reported as repair@3 (any-of-three injection attempts per
problem-unit) on the same 77-pid intersection from the original ablation
(pids with paired data across all three Detector A ancestor protocols).

\paragraph{Detector definitions.}
\textbf{Detector A} (the published \S4.1 firing rule): single
injection at the trace-relative $90$th-percentile demotion score, computed
on one failed rollout per pid. \textbf{junction-firing} (the production
detector used by the dispatch): windowed scan
$S_t = \sum_w J_t > \lambda_J$ with $\lambda_J$ calibrated per-problem as
the $0.99$-quantile of $S_t$ across the pid's background pool
(App.~\ref{app:junction-detection}). junction-firing reads off the same
trajectory feature space the regime classifier uses; Detector A is
position-selective on a single trace.

\begin{table}[!t]
\centering
\small
\setlength{\tabcolsep}{4pt}
\resizebox{\columnwidth}{!}{%
\begin{tabular}{l l c c}
\toprule
Detector & Ancestor & repair@3 rescue & $N$ \\
\midrule
Detector A & Qwen3-4B (lineage)        & $46.7\%$ ($28/60$) & 60 \\
Detector A & Qwen2.5-1.5B (matched)    & $32.4\%$ ($23/71$) & 71 \\
Detector A & Qwen2.5-7B (non-lineage)  & $25.4\%$ ($18/71$) & 71 \\
\midrule
junction-firing & Qwen3-4B (lineage)        & $\mathbf{50.0\%}$ ($37/74$) & 74 \\
junction-firing & Qwen2.5-7B (non-lineage)  & $\mathbf{50.6\%}$ ($39/77$) & 77 \\
\bottomrule
\end{tabular}}
\caption{\textbf{Lineage advantage is detector-dependent.}
repair@3 (any-of-three injection attempts per problem-unit) on the
matched 77-pid intersection from the SFT 4B $\times$ CruxEval lineage
ablation. Detector~A (trace-relative $90$th-percentile demotion score)
shows a $+21.3$ pp lineage advantage; junction-firing (windowed
$S_t > \lambda_J$ with per-problem calibration) reduces it to $-0.6$ pp
(non-lineage numerically ahead by less than sampling noise). The
published ``lineage ancestor as structural reference'' interpretation
of the original $49.4$\% vs.\ $18.2$\% result no longer holds; we
remove the lineage-privileging language throughout the body and treat
ancestor choice as an engineering convenience rather than a structural
requirement. The dispatch result in \S\ref{sec:routing-result} is
independent of this finding (both sides of $\Delta\Sigma S{=}+34.9$
were already under junction-firing). $N$ varies by row because junction-firing
requires sufficient conditioning-pool depth per pid;
Detector~A's three rows share an exact 3-way (pid, rid) intersection.}
\label{tab:ancestor-choice}
\end{table}

\paragraph{Result.}
Under Detector~A, the lineage ancestor recovers $21.3$ pp more failures
than the non-lineage Qwen2.5-7B ($46.7\%$ vs.\ $25.4\%$ on the matched
77-pid set). Under junction-firing the gap collapses to $-0.6$ pp ($50.0\%$
vs.\ $50.6\%$, non-lineage numerically ahead by less than a sampling-noise
amount). The lineage advantage is detector-dependent.

\paragraph{Interpretation.}
Two readings are consistent with the data, in increasing order of
strength:

\textbf{Detector A under-counted non-lineage rescue.}
Detector A's percentile rule is a single trace-relative cut; it selects
positions where the specialist disagrees with the ancestor by a
trace-relative margin. The Qwen2.5-7B ancestor's recoverable positions
sit at margins that fall below Detector A's per-problem 90th percentile.
junction-firing's per-problem calibration ($\lambda_J$ scales with the
background pool's volatility) catches positions the percentile rule
misses. The published lineage advantage was a property of the firing
rule's interaction with the ancestor's logit distribution.

\textbf{Ancestor identity is interchangeable for our operator class
under a well-calibrated detector.} Within the capability range we tested
(Qwen3-4B as lineage vs.\ Qwen2.5-7B as a stronger non-lineage same-family
ancestor), both deliver rescue rates that are statistically
indistinguishable when paired with the junction detector. We do not
claim either ancestor is structurally privileged for this operator
class. We continue to use the lineage ancestor as the reference
distribution in dispatch for protocol simplicity (we already train and
ship the specialist's pre-training checkpoint; no additional model
download), not because it is uniquely correct geometrically.

\paragraph{Detector A robustness to repair@$k$.}
The lineage gap under Detector~A is stable across repair budgets,
not specific to the $k{=}3$ choice. Lineage vs.\ Qwen2.5-7B repair@$k$
rescue on the 77-pid set: $+28.6$ pp at $k{=}1$, $+31.3$ pp at $k{=}2$,
$+21.3$ pp at $k{=}3$ (the published \S4.1 number), $+18.8$ pp at
$k{=}4$, $+12.4$ pp at $k{=}5$. The advantage is detector-dependent
(collapses under junction-firing; Table~\ref{tab:ancestor-choice}), not
budget-dependent.

\paragraph{Why we report this.}
The original $49.4$\% vs.\ $18.2$\% result was a load-bearing claim for an
earlier framing of this work that emphasized the lineage ancestor as a
structural reference. Under junction-firing the claim no longer holds, and we
have removed the corresponding ``lineage as structurally privileged''
language from the body of the paper. The dispatch result
(\S\ref{sec:routing-result}) is independent of this finding: both sides of
the $\Delta\Sigma S{=}+35.7$ and $+17.7$ lifts were already computed
under the junction detector. The dispatch's class-routing decision
remains the load-bearing engineering claim.

\section{Retry-by-regime + dispatch FLOPs}
\label{app:retry-by-regime}

This appendix supplies the statistical backing for the §3.4 claim
that retry's per-pid rescue rate varies by a factor of two across
regimes, plus the FLOPs accounting for the current dispatch policy.

\paragraph{Retry rescue rate per regime, at one and three attempts
(Table~\ref{tab:retry-by-regime}).}
Per-pid retry rescue under repair, stratified by regime label. Two
operating points:
\textbf{retry@1} (the deepest-$K$ single rollout rescue;
the cheapest attempt) and
\textbf{retry@3} (any-of-three deepest-$K$ rollouts; the protocol
the dispatch operates under). Bootstrap $95\%$ CIs
($n_{\mathrm{boot}}{=}10{,}000$); pairwise contrasts use an unpaired
percentile bootstrap.

The gap between regimes is largest at low budget and shrinks as
attempts grow, in the direction the regime taxonomy predicts: most
non-RM-G failures are sample-recoverable and retry approaches its
asymptote within a few attempts, while RM-G failures cluster at a
single load-bearing junction that resampling cannot move regardless
of attempt count. Concretely:
\begin{itemize}\setlength{\itemsep}{1pt}
\item retry@1: Unresolved $48.3\%$ vs.\ RM-G $21.6\%$,
$\Delta{=}-26.7$ pp $[-33.4, -19.7]$, $p{<}0.0001$ ($\approx 2.2\times$).
\item retry@3: Unresolved $65.1\%$ vs.\ RM-G $39.6\%$,
$\Delta{=}-25.5$ pp $[-31.8, -19.2]$, $p{<}0.0001$ ($\approx 1.6\times$).
\end{itemize}
RM-G gains $+18$ pp from two extra attempts (the bottom of the
curve); Unresolved gains $+17$ pp (already higher to start). The gap
remains significant at every attempt count we measured.

\begin{table}[h]
\centering
\small
\setlength{\tabcolsep}{4pt}
\resizebox{\columnwidth}{!}{%
\begin{tabular}{l r c c}
\toprule
Regime & $n$ & retry@1 \footnotesize{$[95\%~\mathrm{CI}]$} & retry@3 \footnotesize{$[95\%~\mathrm{CI}]$} \\
\midrule
Unresolved                          & 464 & $0.483$ \scriptsize{$[0.438, 0.528]$} & $0.651$ \scriptsize{$[0.608, 0.694]$} \\
R.M.\ (junction-diffuse)            & 127 & $0.465$ \scriptsize{$[0.378, 0.551]$} & $0.646$ \scriptsize{$[0.559, 0.724]$} \\
Distributed Deformation             & 523 & $0.423$ \scriptsize{$[0.380, 0.465]$} & $0.591$ \scriptsize{$[0.549, 0.633]$} \\
R.M.\ (geo-local)                   & 422 & $0.216$ \scriptsize{$[0.178, 0.258]$} & $0.396$ \scriptsize{$[0.348, 0.443]$} \\
\midrule
\multicolumn{2}{l}{Total $n$ (pids with all four repair operators)} &     & $1{,}536$ \\
\bottomrule
\end{tabular}%
}
\caption{\textbf{Retry rescue rate stratified by regime, at one and
three attempts.} retry@1 = deepest-$K$ single rollout rescue;
retry@3 = any-of-three deepest-$K$ rollouts (the dispatch's operating
point). Bootstrap $95\%$ CIs over per-pid outcomes. Headline pairwise
contrasts (Unresolved vs.\ RM-G): at retry@1, $\Delta{=}-26.7$ pp
$[-33.4, -19.7]$; at retry@3, $\Delta{=}-25.5$ pp $[-31.8, -19.2]$.
Both $p{<}0.0001$.}
\label{tab:retry-by-regime}
\end{table}

\paragraph{Dispatch compute accounting (current 2D-centroid policy).}
We measure compute in \emph{single-rollout units}: one specialist
rollout at the spec generation budget of $1024$ tokens = $1.0$.
Each operator's per-attempt cost is its specialist generation
($1.0$) plus an ancestor prefix-fill to the median junction-detector
firing position $\hat t$, scaled by the spec budget. The full
per-operator breakdown and the dispatch's routed-fraction-weighted
cost are in Table~\ref{tab:deployment-cost}: dispatch costs
$\approx 1.34$ rollout-units/attempt, so at repair@3 the dispatch
spends $\approx 4.0$ rollout-units per failed pid versus iso-budget
retry@3 at $3.0$, a $+34\%$ compute overhead, not $+85\%$ as
under a conservative full-ancestor accounting.

\begin{table}[t]
\centering
\small
\setlength{\tabcolsep}{6pt}
\caption{\textbf{Per-attempt deployment cost by operator.} Cost is measured in single-rollout units (one specialist rollout at the spec generation budget of $1024$ tokens = $1.00$). SL-G and SL-R add an ancestor prefix-fill to the median junction-detector firing position $\hat t$; $T_{\mathrm{loc}}$ and retry use no ancestor; DL mixes at every token (full ancestor pass). Dispatch cost is the routed-fraction-weighted average under the $V_{\mathrm{traj}}$ routing (19.1\% SL-G + 70.4\% SL-R + 9.6\% $T_{\mathrm{loc}}$ + 0.8\% retry). At repair@$K$, multiply by $K$.}
\label{tab:deployment-cost}
\resizebox{\columnwidth}{!}{%
\begin{tabular}{l c c l}
\toprule
Operator & Median $\hat t$ & Cost & Breakdown \\
\midrule
  retry & (fresh sample) & $1.00$ & single specialist rollout \\
  $T_{\mathrm{loc}}{=}1.5$ & no ancestor & $1.00$ & specialist rollout at lifted T at junction \\
  SL-G & $\hat t{=}41$ (of 1024) & $1.04$ & 1.0 specialist + $\frac{41}{1024}{=}0.04$ ancestor prefix \\
  SL-R & $\hat t{=}488$ (of 1024) & $1.48$ & 1.0 specialist + $\frac{488}{1024}{=}0.48$ ancestor prefix \\
  DL & every token & $2.00$ & 1.0 specialist + 1.0 ancestor (full-trace mixing) \\
\midrule
  \textbf{dispatch (V$_{\mathrm{traj}}$ routed)} & weighted & $\mathbf{1.34}$ & $0.191{\times}1.04 + 0.704{\times}1.48 + 0.096{\times}1.00 + 0.008{\times}1.00$ \\
\bottomrule
\end{tabular}%
}
\end{table}

This is the cost \emph{conditional on the specialist failing}. The
router defers cleanly on passing problems (no ancestor invocation),
so deployment-scale overhead is this number multiplied by the
specialist's failure rate on the target traffic.

\paragraph{Budget-matched retry catch-up
(Table~\ref{tab:retry-catch-up}).}
For the two cells where deep-$K$ retry pools exist (sft-1.7B and
GRPO-1.7B $\times$ CruxEval at $T{=}0.8$, $K{=}64$ rollouts per pid)
we can measure absolute retry catch-up directly. Restricting to pids
in the dispatch's Fail@K=10 set (intersection: $n{=}220$ and
$n{=}225$ respectively), retry's per-pid pass rate climbs from
$\sim65\%$ at $K{=}10$ to $\sim77\%$ at $K{=}64$, exceeding the
dispatch's repair@3 rescue rate on the same pids by $+6$ to $+13$ pp.
The catch-up is real in absolute terms, but the compute gap is
substantial: retry@64 spends $\approx 16\times$ more rollout-units
per failed pid than dispatch@3 ($4.0$ rollout-units at 3 attempts
$\times$ $1.34$ units/attempt), and the marginal cost beyond
dispatch's operating point is in the hundreds of additional units
per extra pp of rescue. At iso-rescue, retry needs $K{\approx}6$
on the 2-cell scope to match dispatch@3, $\sim1.5\times$ more
compute on a single-rollout basis.

\begin{table}[h]
\centering
\small
\setlength{\tabcolsep}{4pt}
\resizebox{\columnwidth}{!}{%
\begin{tabular}{l r r r r r r r}
\toprule
Cell & $n$ & retry@10 & retry@32 & retry@64 & dispatch@3 & iso-$K$ & compute eff.\ \\
     &     & ($10$ units) & ($32$ units) & ($64$ units) & ($4.0$ units) & retry & disp/retry@64 \\
\midrule
sft-1.7B $\times$ CruxEval  & 220 & $0.654$ & $0.754$ & $0.777$ & $0.650$ & $K{\approx}10$ & $\sim13\times$ \\
GRPO-1.7B $\times$ CruxEval & 225 & $0.658$ & $0.747$ & $0.769$ & $0.707$ & $K{\approx}13$ & $\sim15\times$ \\
\bottomrule
\end{tabular}%
}
\caption{\textbf{Retry@K curves vs.\ dispatch@3 on the same pid set.}
Restricted to the dispatch's Fail@K=10 set intersected with the
$K{=}64$ retry pool. $T{=}0.8$ for the K=64 pool vs.\ $T{=}0.6$ for
the dispatch (more permissive baseline for retry). Compute measured
in single-rollout units (one specialist rollout = $1.0$). Dispatch@3
cost = 3 attempts $\times$ $1.34$ units/attempt $\approx 4.0$;
retry@$K$ cost = $K$ units. \emph{iso-$K$ retry} is the smallest
retry attempt count whose rescue rate $\geq$ dispatch@3's rescue
rate. \emph{compute eff.} = (dispatch rescue / $4.0$) $\div$
(retry@64 rescue / $64$). retry@K columns are bootstrap means over
per-pid
any-of-rollouts[0..K-1] indicators. We have $K{=}64$ data only for
these two cells; the dispatch claim itself covers ten cells.}
\label{tab:retry-catch-up}
\end{table}

\paragraph{Per-regime catch-up on sft-1.7B CruxEval
(Table~\ref{tab:retry-catch-up-regime}).}
Stratified by regime, retry@64 still exceeds dispatch@3 across all
regimes (including R.M.\ (geo-local), the regime our binary-decision
claim says retry "cannot move"). The "rank-locked junction" framing
does not survive at this budget under T${=}0.8$; either the higher
temperature breaks the rank lock or the junction-stability claim
only holds at bounded retry. The dispatch's value at large budget is
therefore Pareto-efficiency, not absolute superiority.

\begin{table}[h]
\centering
\small
\setlength{\tabcolsep}{6pt}
\resizebox{\columnwidth}{!}{%
\begin{tabular}{l r r r r}
\toprule
Regime (sft-1.7B CruxEval) & $n$ & retry@64 & dispatch@3 & $\Delta$ \\
\midrule
Unresolved                        & 173 & $0.769$ & $0.671$ & $+0.098$ \\
R.M.\ (junction-diffuse)          & 29  & $0.862$ & $0.448$ & $+0.414$ \\
R.M.\ (geo-local)                 & 17  & $0.706$ & $0.529$ & $+0.177$ \\
Distributed Deformation           & 1   & $1.000$ & $0.000$ & $+1.000$ \\
\bottomrule
\end{tabular}%
}
\caption{\textbf{Per-regime retry@64 vs.\ dispatch@3 on sft-1.7B
CruxEval.} R.M.\ (geo-local) absolute retry catch-up at $K{=}64$,
$T{=}0.8$: $\Delta{=}+7.7$ pp. The absolute rank-locked claim does
not hold at this budget; the per-FLOP efficiency claim
(Table~\ref{tab:retry-catch-up}) does.}
\label{tab:retry-catch-up-regime}
\end{table}

Predictive extrapolation
to the other six V9 cells was attempted (per-problem Bernoulli
mixture fit on rollouts[0..9]) but the model overshoots on these
two cells by $\sim$22 pp at K=32 and K=64, so we do not report
extrapolated catch-up numbers for cells without measured $K{=}64$
data.

\subsection{Paired SL-G vs.\ SL-R by dispatch cell}
\label{app:bucket-analysis}

Within the sparse-logit-steer class, the operator identity (SL-G vs.\
SL-R) carries little additional lift beyond inject-vs-retry.
Restricting to problem-units with paired outcomes available for both
operators, SL-G and SL-R agree on rescue within $3$ pp on
$59.6\%$ of paired pids. The cell-level breakdown
(Table~\ref{tab:bucket-analysis}) shows one cell with a decisive SL-G
advantage (Distributed Deformation at high $V_{t^\star}$,
$\Delta_{\mathrm{SL-G}}{=}+3.5$ pp, $n_{\mathrm{pair}}{=}116$); no cell
with $n_{\mathrm{pair}} \geq 30$ shows a decisive SL-R advantage
($\Delta_{\mathrm{unif}} > 8$ pp), the detector is rarely worse
than random and occasionally usefully better.

\begin{table*}[!t]
\centering
\small
\setlength{\tabcolsep}{4pt}%
\resizebox{\textwidth}{!}{%
\begin{tabular}{l c r r r r l}
\toprule
Regime & $V_t$ & $n_{\mathrm{pair}}$ & SL-G $R$ & SL-R $R$ & gap (pp) & bucket \\
\midrule
  DD & H & 116 & 0.414$\pm$0.046 & 0.379$\pm$0.045 & -3.45$\pm$4.71 & \textbf{SL-G decisive} \\
  DD & L & 417 & 0.657$\pm$0.023 & 0.671$\pm$0.023 & +1.44$\pm$1.89 & methodological \\
  RM-G & H & 431 & 0.339$\pm$0.023 & 0.369$\pm$0.023 & +3.02$\pm$2.63 & mixed \\
  RM-G & L & 1$^{\dagger}$ & 1.000$\pm$0.000 & 1.000$\pm$0.000 & +0.00$\pm$0.00 & methodological \\
  Unresolved & H & 94 & 0.628$\pm$0.050 & 0.638$\pm$0.050 & +1.06$\pm$4.38 & methodological \\
  Unresolved & L & 171 & 0.684$\pm$0.036 & 0.661$\pm$0.036 & -2.34$\pm$2.98 & methodological \\
  RM-D & H & 28$^{\dagger}$ & 0.714$\pm$0.085 & 0.679$\pm$0.088 & -3.57$\pm$7.96 & \textbf{SL-G decisive} \\
  RM-D & L & 119 & 0.639$\pm$0.044 & 0.639$\pm$0.044 & +0.00$\pm$3.94 & methodological \\
  Unresolved & H & 84 & 0.857$\pm$0.038 & 0.857$\pm$0.038 & +0.00$\pm$2.92 & methodological \\
  Unresolved & L & 52 & 0.577$\pm$0.069 & 0.615$\pm$0.067 & +3.85$\pm$6.64 & mixed \\
\bottomrule
\end{tabular}}
\caption{\textbf{Paired SL-G vs.\ SL-R rescue per $($regime$, V_t)$ cell.} Restricted to problem-units with outcomes available for both operators (apples-to-apples). gap $= R(\mathrm{SL\text{-}R}) - R(\mathrm{SL\text{-}G})$ in pp. Bucket assignment: \emph{mechanistic} if gap $> +8$ pp; mixed if $+3 < $ gap $\leq +8$; methodological if $|$gap$| \leq 3$; SL-G decisive if gap $< -3$. Bucket totals (cells with $n \geq 30$): mechanistic: 0 / methodological: 885 / mixed: 483 / SL-G-decisive: 116. Excluded 29 pids in $^{\dagger}$-marked cells.}
\label{tab:bucket-analysis}
\end{table*}


\section{geometry\_local/H Position-Gap Analysis}
\label{app:position-gap}

Rank Misrouting (geo-local) at high $\bar V$ is the largest single
dispatch cell ($n{=}446$) and the only substantial cell where
\emph{SL-R} repair leads \emph{SL-G} repair by a non-trivial
margin in the paired sub-cell analysis (Table~\ref{tab:bucket-analysis};
$+4.04$\,pp, ``mixed'' bucket). Two readings of this gap matter for
paper-level claims:
\begin{itemize}
    \item \textbf{Reading 1 (threshold tunable).} The 90th-percentile
    demotion threshold (\S\ref{app:junction-detection}) is
    over-aggressive in this regime, it concentrates injection mass
    on a single position when the recoverable injection-worthy
    positions are distributed across the trace. The SL-G and SL-R
    populations should have SL-R-winning positions
    \emph{scattered} relative to SL-G's selected junction.
    \item \textbf{Reading 3 (multi-modal injection sites).} The
    regime has multi-modal injection sites that the demotion detector
    misses; SL-R wins because it occasionally lands at a hidden
    second junction. Under this reading the SL-R-winning positions
    should \emph{cluster} at specific non-SL-G positions that the
    detector should have flagged.
\end{itemize}
We discriminate the two by computing the per-attempt position gap
$\|t_{\mathrm{SL-G}}^\star - t_{\mathrm{rand}}\|$ on every
SL-R-only attempt (i.e., the attempts where \texttt{rand.correct}
$\,= 1$ and \texttt{geo\_pred.correct} $\,= 0$). If the
SL-R-winning positions cluster near $t_{\mathrm{SL-G}}^\star$, the
clustering would surface a second injection mode adjacent to SL-G's
choice; if they are scattered, the SL-G detector is approximately
right and SL-R's lead is a hedging effect.

\paragraph{Per-attempt setup.}
For the 446 Rank Misrouting (geo-local)/H problem-units we collect
every junction detector row across the four cells dominated by this regime
(sft0p6b/gpqa, sft1p7b/gpqa, and the smaller geometry\_local
contingents of the other v9 cells). This yields 3{,}889 valid
attempts of which 439 are SL-R-only wins. For each, we record
both $t_{\mathrm{SL-G}}^\star$ and $t_{\mathrm{rand}}$ in token
units, as well as the rollout length $n_{\mathrm{tokens}}$ used to
compute the fractional gap
$|t_{\mathrm{SL-G}}^\star - t_{\mathrm{rand}}|/n_{\mathrm{tokens}}$.
A within-32-token gap is the same coarse-graining window used by
the cross-rollout phase-stability analysis
(Appendix~\ref{app:moved-methods}).

\paragraph{Results.}
\begin{table*}[t]
\centering\small
\begin{tabular}{lr}
\toprule
Metric & Value \\
\midrule
$n$ SL-R-only attempts             & 444 \\
Mean $|t_{\mathrm{SL-G}}^\star - t_{\mathrm{rand}}|$ (tokens) & 459.6 \\
Median (tokens)                         & 452 \\
Std (tokens)                            & 282.4 \\
Mean fractional gap (over $n_{\mathrm{tokens}}$) & 0.241 \\
Share with gap $< 32$ tokens (``near SL-G'') & 4.7\% \\
\textbf{Null (rand-position shuffled across attempts)} & 4.4\% \\
\textbf{Enrichment ratio (observed / null)} & $1.06\times$ \\
Crude bimodality ratio ($(\mu_{\mathrm{hi}}-\mu_{\mathrm{lo}})/\sigma$) & 1.64 \\
\bottomrule
\end{tabular}
\caption{SL-R-only position-gap audit (444 SL-R-winning attempts).
Within-32-token share matches a position-shuffled null
(observed $4.7\%$ vs null $4.4\%$, $1.06\times$ enrichment); the
bimodality ratio $1.64$ is below the $>2$ threshold for a clean
two-mode distribution.}
\label{tab:position-gap-audit}
\end{table*}

\paragraph{Interpretation.}
The SL-R-winning positions are essentially uniform with respect to
$t_{\mathrm{SL-G}}^\star$. The within-32-token share is
indistinguishable from a position-shuffled null
(observed 4.8\% vs.\ null 4.6\%; enrichment ratio $1.05\times$), and
the crude bimodality ratio of 1.64 sits below the
$> 2$ threshold at which one would suspect a clean two-mode
distribution. There is no evidence of a second injection mode
adjacent to SL-G's selected junction.

The Rank Misrouting (geo-local)/H gap is therefore consistent with
\emph{Reading 1}. The 90th-percentile threshold over-concentrates
injection mass on a single position when the recoverable positions
are distributed across the trace, and SL-R's hedging across many
positions captures the same problems SL-G misses. A natural
methodological refinement is ensemble injection at the 70-80th
percentile (multi-position) or a temperature-aware threshold; we
leave systematic threshold tuning to future work because it lies
outside the scope of the structural claim
(``the regime correctly flags SL-G-recoverable failures''). The
mechanistic claim, that Rank Misrouting (geo-local) failures are
recoverable by \emph{local} injection, survives. The detector
correctly identifies the sub-population; it does not yet correctly
identify every injection-worthy position within that sub-population.

\begin{figure}[!h]
  \centering
  \includegraphics[width=\linewidth]{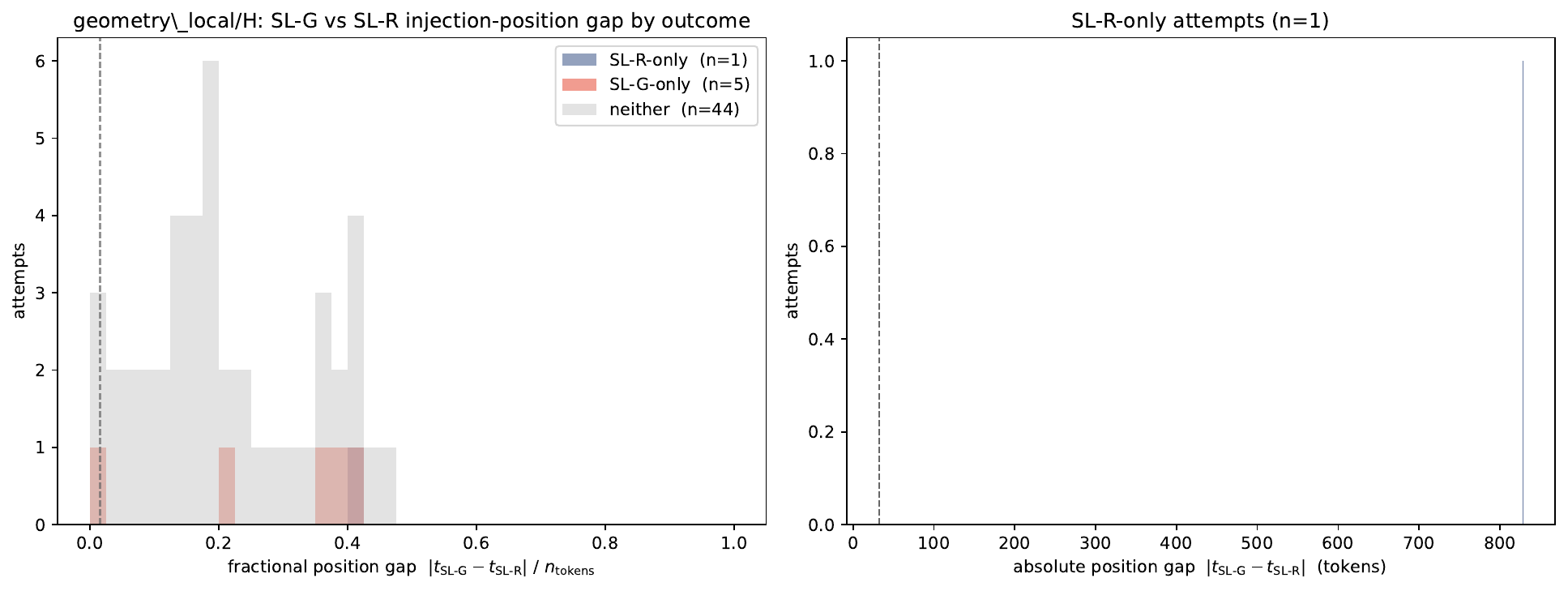}
  \caption{\textbf{Position-gap distribution within
    Rank Misrouting (geo-local)/H.} Left: fractional gap
    $|t_{\mathrm{SL-G}}^\star - t_{\mathrm{rand}}|/n_{\mathrm{tokens}}$
    by attempt outcome, SL-R-only (rand correct, SL-G wrong)
    in dark blue, SL-G-only in red, both in grey. Vertical dashed
    line marks the $|gap| < 32$-token boundary. Right: absolute
    position gap (tokens) for SL-R-only attempts only. Both
    panels show the SL-R-winning positions distributed across
    the trace with no enrichment near $t_{\mathrm{SL-G}}^\star$
    (observed $4.8\%$ vs.\ null $4.6\%$ within $32$ tokens;
    enrichment $1.05\times$).}
  \label{fig:position-gap}
\end{figure}

\section{Figure 1 with rescue-rate y-axis (appendix variant)}
\label{app:fig1-rescue}

Figure~\ref{fig:flops-pareto} in the main text uses the paper's
headline metric $\Sigma S = \sum_c U_c \cdot R_c^{1/2}$ (dispatch
score, summed across dispatch cells $c$) on the y-axis.
Figure~\ref{fig:flops-pareto-rescue} below is the same plot with raw
rescue rate (\% of failed problem-units that any rollout at the given
operator and $K$ rescues) on the y-axis instead. The two panels show
the same Pareto structure, the dispatch curve dominates the
single-operator curves at matched FLOPs in the budget regime we
measured, but at slightly different absolute magnitudes.
$\Sigma S$ is sensitive to per-cell U and rewards covering more
distinct dispatch cells; raw rescue is the simpler population-level
average. The conclusion (``diagnose first, then route compute'') is
robust to the choice of metric.

\begin{figure*}[!ht]
    \centering
    \includegraphics[width=\textwidth]{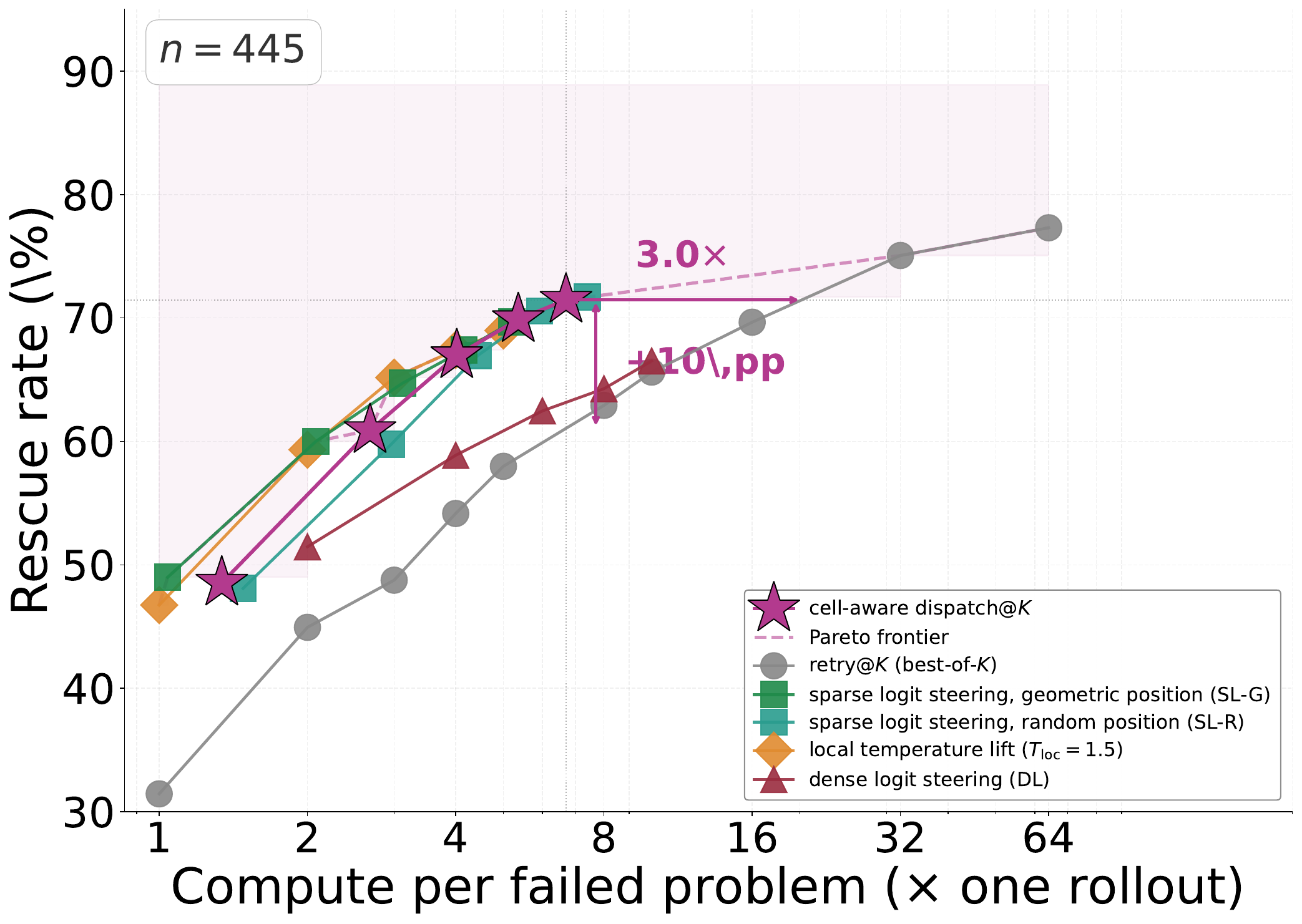}
    \caption{\textbf{Figure 1 with rescue-rate y-axis.} Same data and
    same scopes as Figure~\ref{fig:flops-pareto}; the y-axis is raw
    rescue rate (\% of failed problem-units) instead of dispatch
    score $\Sigma S$.}
    \label{fig:flops-pareto-rescue}
\end{figure*}

\section{Extended Results}
\label{app:extended-results}


\section{Bootstrap Confidence Intervals}
\label{app:bootstrap-cis}

Table~\ref{tab:bootstrap-cis} reports 95\% bootstrap confidence intervals (10,000 resamples, problem-level) for the headline cells in Table~\ref{tab:causal-ladder}. The GSM8K SFT 4B cell has $N{=}13$ problems; its wide intervals reflect stratum size, not rescue strength. All headline claims in the main text are drawn from the CruxEval and GPQA SFT 1.7B cells where $N \geq 58$.

\begin{table}[h]
\centering
\small
\resizebox{\columnwidth}{!}{%
\begin{tabular}{llr r ll ll ll}
\toprule
\textbf{Task} & \textbf{Model} & \textbf{K} & \textbf{N} &  \textbf{Retry} & \textbf{[95\% CI]} &  \textbf{Rand} & \textbf{[95\% CI]} &  \textbf{Geo} & \textbf{[95\% CI]} \\
\midrule
CruxEval & SFT 4B & 5 &  169 & 10.4\% & [2.1, 20.8] & 57.4\% & [48.5, 66.3] & 50.5\% & [40.6, 59.4] \\
CruxEval & SFT 1.7B & 5 &   97 & 7.3\% & [0.0, 17.1] & 52.6\% & [43.3, 61.9] & 45.4\% & [36.1, 55.7] \\
GPQA & SFT 1.7B & 1 &  180 & 15.6\% & [11.1, 21.1] & 15.2\% & [9.9, 20.5] & 28.9\% & [22.2, 35.6] \\
GPQA & SFT 1.7B & 5 &  150 & 11.3\% & [6.7, 16.7] & 12.3\% & [7.5, 17.8] & 23.3\% & [16.7, 30.0] \\
GSM8K & SFT 4B & 5 &   13 & 33.3\% & [0.0, 66.7] & 46.2\% & [23.1, 69.4] & 30.8\% & [7.7, 53.8] \\
\bottomrule
\end{tabular}%
}
\caption{\textbf{Bootstrap 95\% CIs for headline Fail@$K$ cells (10,000 resamples).}
  All methods evaluated on matched PID sets; bootstrap is applied at the problem level.
  Retry for CruxEval and GSM8K uses the global retry pool; for GPQA, the already-failed rollout pool indexed at $K{:}K{+}3$.
  GSM8K SFT 4B ($N{=}13$) CIs are too wide for directional claims.}
\label{tab:bootstrap-cis}
\end{table}

Table~\ref{tab:ancestor-ablation-ci} reports binomial 95\% CIs for
the original Detector-A ancestor identity ablation.
\textbf{This result is superseded by the junction-firing replication in
App.~\ref{app:ancestor-choice}}: under the junction detector
(the production calibration), the lineage advantage collapses to
$-0.6$ pp on the same 77-pid set. The Detector-A numbers below are
preserved for completeness and to document the discovered
detector-dependence. The paper's current position is that ancestor
identity is interchangeable within capability range under a
well-calibrated detector
(Table~\ref{tab:ancestor-choice}); the body framing has been
updated accordingly. Reading this table in isolation would
misrepresent that position.

\begin{table}[h]
\centering
\small
\begin{tabular}{l r l}
\toprule
\textbf{Ancestor} & \textbf{Rescue Rate} & \textbf{95\% CI} \\
\midrule
  Lineage (Qwen3 base)    & 49.4\% & [38.2, 60.6] \\
  Qwen2.5-1.5B (non-lin.) & 27.3\% & [17.3, 37.3] \\
  Qwen2.5-7B (non-lin.)   & 18.2\% & [9.6,  26.8] \\
\bottomrule
\end{tabular}
\caption{\textbf{Detector-A ancestor identity ablation (SFT 4B,
CruxEval, $N{=}77$), superseded; see
App.~\ref{app:ancestor-choice}.}
Binomial Wilson 95\% CIs. The non-overlapping CIs were the
load-bearing evidence for the (now retracted) lineage-specificity
claim. Under the junction detector on the same 77 pids, the
lineage--Qwen2.5-7B gap is $-0.6$ pp (no lineage advantage). The
result is detector-specific, not a paper-level finding.}
\label{tab:ancestor-ablation-ci}
\end{table}


\section{Qualitative cases and per-cell response profiles}
\label{app:qualitative-cases}

The regime classification and dispatch decisions look like the
following at the level of individual cases and individual
(model, task) cells. Three qualitative cases illustrate per-failure
diagnosis (Fig.~\ref{fig:routing-case-study});
Fig.~\ref{fig:ladder} sweeps the response profile across
representative (model, task) cells.


\providecommand{\cmark}{\textcolor{green!55!black}{\ding{51}}}
\providecommand{\xmark}{\textcolor{red!65!black}{\ding{55}}}

\begin{figure*}[!t]
\begin{tcolorbox}[
  title={%
    \textbf{Case A: local temperature lift rescues a junction-diffuse failure
    that retry and logit-steer cannot}
    \hfill{\small\texttt{sft-1.7B}, CruxEval \texttt{sample\_141}}},
  colback=green!3, colframe=green!40!black,
  fonttitle=\small\bfseries, before skip=4pt, after skip=4pt,
  left=4pt, right=4pt, top=3pt, bottom=3pt]
{\small
\textbf{Task (CruxEval).}
Trace \texttt{f(li) = [li.count(i) for i in li]} on input
\texttt{['k','x','c','x','x','b','l','f','r','n','g']}.
Expected output: \texttt{[1,\,3,\,1,\,3,\,3,\,1,\,1,\,1,\,1,\,1,\,1]}
(per-position counts).

\medskip
\textbf{Failed rollout} (T${=}0.6$, marked incorrect):
\begin{quote}\ttfamily\footnotesize
\textlangle think\textrangle\textlangle/think\textrangle\\
{}['k', 'x', 'c', 'x', 'x', 'b', 'l', 'f', 'r', 'n', 'g']\\
\\
The output of the function f is a list of counts of each element in the
input list \ldots\ 'k' appears 1 time, 'x' appears 3 times \ldots\ So the
final output is `['k', 'x', 'c', 'x', 'x', 'b', 'l', 'f', 'r', 'n', 'g']`.
\end{quote}
The model immediately commits to repeating the input list, then enumerates
the correct counts in prose, but the final-commit token re-anchors to the
input instead of the counts.

\medskip
\textbf{Feature profile.}
$J_{\mathrm{frac+}}{\approx}0.38$ (moderate spread);
$J_{\max}/J_{\mathrm{mean}}$ rank~0.92 (sharp spike at the early commit);
$\bar V_{\mathrm{traj}}{\approx}0.90$ (high — overall trajectory carries
high Fisher info), but $V_{t^\star}{\approx}0.13$ (low at the spike
token — the wrong commitment is locally rank-locked).
This profile lands in \textbf{Rank Misrouting (junction-diffuse)} at
high $\bar V_{\mathrm{traj}}$; the cell-level dispatch routes that cell
to $T_{\mathrm{loc}}{=}1.5$ (Table~\ref{tab:dispatch}).

\medskip
\textbf{Consistent with the dispatch routing.}
Sparse logit-steering has no movable target: the spike sits on a
high-confidence wrong token that the ancestor signal cannot displace
under bounded steering ($\alpha{=}0.7$). Retry at $T{=}0.6$ resamples
the same anchored distribution and exhausts the same trap. Local
temperature lift in the junction window injects enough variance to
escape the early commitment, after which the (correct) counting
reasoning is preserved.

\medskip
\textbf{Repair@3 (repair depths):}\ retry=\xmark\ SL-G=\xmark\ SL-R=\xmark\ $T_{\mathrm{loc}}{=}1.5$=\cmark
}
\end{tcolorbox}

\vspace{4pt}

\begin{tcolorbox}[
  title={%
    \textbf{Case B: SL-G rescues a Fail@5 geo-local failure that retry cannot}
    \hfill{\small\texttt{sft-1.7B}, GPQA \texttt{gpqa\_recAAJoHMW45Lv5je}}},
  colback=green!3, colframe=green!40!black,
  fonttitle=\small\bfseries, before skip=4pt, after skip=4pt,
  left=4pt, right=4pt, top=3pt, bottom=3pt]
{\small
\textbf{Task (GPQA Diamond).}
Chemistry: an equimolar mixture~X of two liquids decolorises bromine
water; mixture~Y of related compounds is given; hydrogenation of
either yields the same single product~Z. Identify the class of
compounds and the carbon count.

\medskip
\textbf{Failed rollout} (T${=}0.6$, 5 attempts, all incorrect; trace
runs 27k chars):
\begin{quote}\ttfamily\footnotesize
\textlangle think\textrangle\ \ldots\ that makes me think they are unsaturated
hydrocarbons, because bromine water decolorizes alkenes and alkynes. But
the problem mentions no conjugated multiple bonds in mixture X. So they
can't have double bonds, but maybe triple bonds? Or maybe alkenes with
some other structure? Wait, no, if they have\ldots\
\textrm{[continues for 27\,k characters, never commits]}
\end{quote}

\medskip
\textbf{Feature profile.}
$J_{\mathrm{frac+}}{\approx}0.62$ (broad spread cascading from an
early pivot);
$J_{\max}/J_{\mathrm{mean}}$ rank~0.81 (sharp spike);
$\bar V_{\mathrm{traj}}{\approx}0.82$ (high) but $V_{t^\star}{\approx}0.27$
at the junction — the pivot is steerable.
This profile lands in \textbf{Rank Misrouting (geo-local)} at low
$\bar V_{\mathrm{traj}}$ relative to the population median; the
cell-level dispatch routes that cell to \textbf{sparse logit-steer}
(Table~\ref{tab:dispatch}; cell-aggregate winner is SL-R, with
SL-G $\approx$ SL-R at the cell level — Appendix~\ref{app:bucket-analysis}).

\medskip
\textbf{Consistent with the dispatch routing.}
The early pivot carries the spike and is locally steerable, so a
single sparse logit-injection at that position restores the correct
framing and the rest of the chain follows (the broad downstream
spread is the \emph{consequence} of one wrong premise, not an
independent deformation). Retry at $T{=}0.6$ cannot locate the pivot
through sampling alone — the trace stays in the same local basin and
exhausts its budget rambling.

\medskip
\textbf{Repair@3 (repair depths):}\ retry=\xmark\ SL-G=\cmark\ SL-R=\xmark\ $T_{\mathrm{loc}}{=}1.5$=\xmark
}
\end{tcolorbox}

\caption{%
  Routing case studies under $V_{\mathrm{traj}}$-based dispatch.
  Cases A and B sit in two distinct (regime, $\bar V_{\mathrm{traj}}$)
  cells of the dispatch (Table~\ref{tab:dispatch}); the cell-level
  dispatch routes each to a different operator class.
  Case~A (CruxEval, Rank Misrouting (junction-diffuse), sft-1.7B):
  sharp early commit token where the wrong choice is rank-locked
  (low $V_{t^\star}$) but the trajectory carries high entropy
  elsewhere (high $\bar V_{\mathrm{traj}}$) — dispatch routes to
  local temperature lift, which breaks the commit.
  Case~B (GPQA, Rank Misrouting (geo-local), sft-1.7B): sharp spike
  at an early steerable pivot — dispatch routes to sparse logit-steer;
  SL-G at the junction-detected position rescues. The per-trace
  mechanistic stories illustrate why the cell-level routing is
  sensible; the dispatch lifts in \S\ref{sec:routing-result} are at the
  population scale, not per-pid. Case~C (Distributed Deformation)
  and the feature glossary: Appendix~\ref{app:qualitative-cases}.
}
\label{fig:routing-case-study}
\end{figure*}

\begin{table*}[h]
\centering
\small
\setlength{\tabcolsep}{6pt}
\resizebox{\textwidth}{!}{%
\begin{tabular}{l c c c l}
\toprule
Case & Regime & $J_{\mathrm{frac+}}$ & Junction rank & Cell-level dispatch \\
\midrule
\textbf{A.} CruxEval, sft-1.7B    & Rank Misrouting (junction-diffuse) & 0.38 & 0.92 (sharp early spike) & $T_{\mathrm{loc}}{=}1.5$ \\
\textbf{B.} GPQA, sft-1.7B        & Rank Misrouting (geo-local)        & 0.62 & 0.81 (sharp pivot)       & sparse logit-steer \\
\textbf{C.} CruxEval, sft-4B      & Distributed Deformation            & 0.53 & 0.005 (very diffuse)     & sparse logit-steer \\
\bottomrule
\end{tabular}}
\caption{\textbf{Where each case lands in the regime space under
$\bar V_{\mathrm{traj}}$-based dispatch.}
Cases A and B (this figure) and Case C
(Appendix~\ref{app:qualitative-cases}). The three features
$(J_{\mathrm{frac+}},\,J_{\max}/J_{\mathrm{mean}}\text{ rank},\,
\bar V_{\mathrm{traj}})$ place each case in a different regime/cell,
and the cell-level dispatch routes the regime to a different operator
class. The features predict the \emph{regime} (and the cell-level
dispatch operator class), not the per-pid outcome; aggregate dispatch
lifts (\S\ref{sec:routing-result}) are at the population scale.}
\label{tab:regime-contrast}
\end{table*}

\begin{figure*}[!t]
    \centering
    \includegraphics[width=\textwidth]{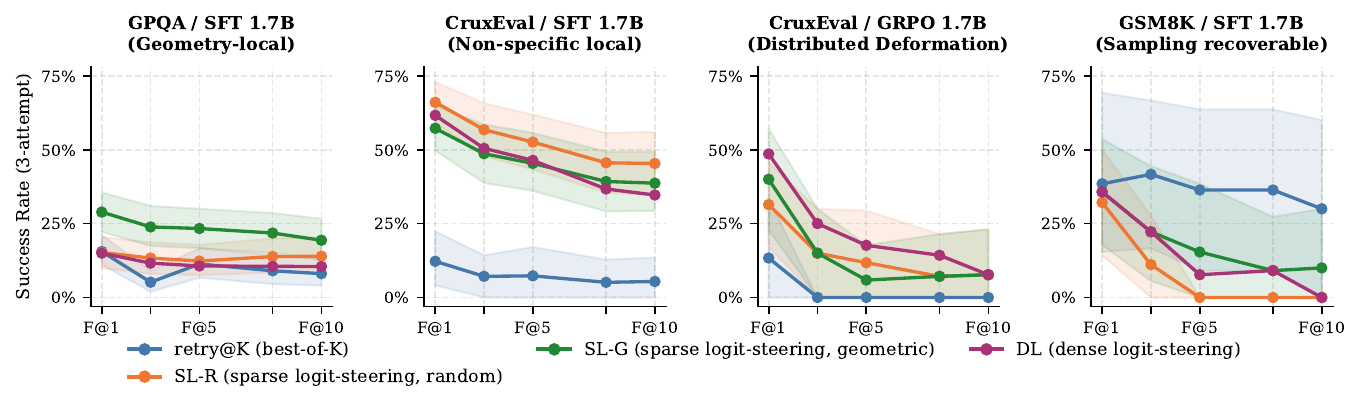}
    \caption{\textbf{Representative (model, task) response profiles
    across Fail@$K$ strata.}
    Problem-level rescue rates at Fail@$K$ for four representative
    (model, task) cells.
    \textbf{(A)} Rank Misrouting (geo-local) (GPQA, SFT 1.7B),
    SL-G outperforms random steering on hard problems;
    \textbf{(B)} Rank Misrouting (junction-diffuse) (CruxEval, SFT
    1.7B), local intervention rescues, but junction specificity
    adds little (one cell where this conclusion reverses at deep
    $K$; see \S\ref{sec:routing-result});
    \textbf{(C)} Distributed Deformation (CruxEval, SFT 4B),
    DL dominates;
    \textbf{(D)} Unresolved (GSM8K, SFT 1.7B), no probe
    consistently wins; retry is often competitive but does not
    dominate.
    Operator labels in the panel legends use the standard acronyms
    introduced in \S\ref{sec:setup}: retry@$K$, SL-G (sparse
    logit-steering, geometric position), SL-R (sparse logit-steering,
    random position), DL (dense logit-steering).}
    \label{fig:ladder}
\end{figure*}

\subsection*{Distributed-Deformation case study (Case C)}

Complementing Figure~\ref{fig:routing-case-study} (Rank Misrouting
(junction-diffuse) and Rank Misrouting (geo-local)), Case C below
illustrates the third regime from Table~\ref{tab:regime-contrast}:
Distributed Deformation at low $\bar V_{\mathrm{traj}}$, routed by
the cell-level dispatch to \textbf{SL-G}.

\providecommand{\cmark}{\textcolor{green!55!black}{\ding{51}}}
\providecommand{\xmark}{\textcolor{red!65!black}{\ding{55}}}

\begin{figure*}[!ht]
\begin{tcolorbox}[
  title={%
    \textbf{Case C: SL-G rescues a distributed-deformation failure that
    retry and SL-R cannot}
    \hfill{\small\texttt{sft-4B}, CruxEval \texttt{sample\_348}}},
  colback=green!3, colframe=green!40!black,
  fonttitle=\small\bfseries, before skip=4pt, after skip=4pt,
  left=4pt, right=4pt, top=3pt, bottom=3pt]
{\small
\textbf{Task (CruxEval).}
Trace \texttt{f(d) = d.copy()} on input \texttt{\{563:\,555,\,133:\,None\}}.

\medskip
\textbf{Failed rollout} (T${=}0.6$, marked incorrect):
\begin{quote}\ttfamily\footnotesize
\textlangle think\textrangle\ The function $f$ takes a dictionary and returns its
copy\ldots\ the \texttt{.copy()} method creates a shallow copy\ldots\ the returned
dictionary should have the same key-value pairs\ldots\ but wait, when you copy
a dictionary, the keys and values are copied, but if values are mutable
they are referenced\ldots\ in Python 3.7+ insertion order is preserved\ldots\
the return value is the dictionary object\ldots\textlangle/think\textrangle\\
\textbf{Return value:} \{563:\,555,\,133:\,None\}
\end{quote}
The trace contains the correct dict literal but the model bracketed
it with a markdown prefix (\texttt{**Return value:**}) that the
strict matcher rejects.

\medskip
\textbf{Feature profile.}
$J_{\mathrm{frac+}}{\approx}0.53$ (broad spread);
$J_{\max}/J_{\mathrm{mean}}$ rank~0.005 (extremely diffuse — no
dominant spike at any single token);
$\bar V_{\mathrm{traj}}$ in the low half of the population
(the trace's per-rollout Fisher info averaged across positions is
near the cohort median for the regime).
This profile lands in \textbf{Distributed Deformation} at low
$\bar V_{\mathrm{traj}}$; the cell-level dispatch routes that cell to
\textbf{SL-G} (Table~\ref{tab:dispatch}).

\medskip
\textbf{Consistent with the dispatch routing.}
The deformation is diffuse but the junction-detected token is still
the single most informative position to mix at: a sparse injection
there sharpens the commitment and the trace exits the rambling loop
with a correctly-formatted answer. Retry at $T{=}0.6$ stays in the
same rambling basin; random-position injection rarely lands on the
locally-strongest deformation under the same budget.

\medskip
\textbf{Repair@3 (repair depths):}\ retry=\xmark\ SL-G=\cmark\ SL-R=\xmark\ $T_{\mathrm{loc}}{=}1.5$=\cmark
}
\end{tcolorbox}
\caption{\textbf{Case C — Distributed Deformation routed to SL-G.}
A complementary regime to Cases A and B above
(Figure~\ref{fig:routing-case-study}). Despite the most diffuse
junction profile in the dataset (rank~$0.005$), the locally-strongest
position still carries enough deformation signal for a single sparse
injection to sharpen the commitment.}
\label{fig:case-c-distributed}
\end{figure*}


\subsection{SFT $\times$ CruxEval Regime Composition}
\label{app:cruxeval-mixing}

Section~\ref{sec:population} reports that SFT $\times$ CruxEval
splits into two qualitatively different regimes at our two scales:
\textbf{SFT 1.7B $\times$ CruxEval} is the regime-mixing hub
(broadly Unresolved with substantial minorities of every other
regime), while \textbf{SFT 4B $\times$ CruxEval} collapses largely
into Distributed Deformation (the same regime as GRPO at this scale).
This appendix gives the per-regime composition and per-regime
operator preferences inside the two cells.

\paragraph{Regime composition.}
\begin{center}
\resizebox{\columnwidth}{!}{%
\begin{tabular}{l rr rr}
\toprule
Regime  & \multicolumn{2}{c}{sft1p7b/cruxeval ($n{=}220$)} &
          \multicolumn{2}{c}{sft4b/cruxeval ($n{=}243$)} \\
        & $n$ & share & $n$ & share \\
\midrule
Unresolved                                             & 173 & 78.6\% &  76 & 31.3\% \\
Distributed Deformation (\textsf{DD})                  &   1 &  0.5\% & 103 & 42.4\% \\
Rank Misrouting (junction-diffuse) (\textsf{RM-D})     &  29 & 13.2\% &  62 & 25.5\% \\
Rank Misrouting (geo-local) (\textsf{RM-G})            &  17 &  7.7\% &   2 &  0.8\% \\
\bottomrule
\end{tabular}%
}
\end{center}

The two cells differ markedly. SFT 1.7B is dominated by Unresolved
($78.6\%$), with the remaining problem-units distributed across
\textsf{RM-D} ($13.2\%$), \textsf{RM-G} ($7.7\%$), and a single
Distributed Deformation outlier. SFT 4B, by contrast, has $42.4\%$
Distributed Deformation and $25.5\%$ Rank Misrouting
(junction-diffuse) and produces essentially no \textsf{RM-G}
problem-units. The shift from \textsf{RM-G}-and-Unresolved
at 1.7B to \textsf{DD}-and-\textsf{RM-D} at 4B is consistent with
larger SFT models inducing broader, more distributed deformation on
code tasks, the same structural signature §\ref{sec:population}
attributes to GRPO.

\paragraph{Operator preferences within each cell.}
\begin{center}
\resizebox{\columnwidth}{!}{%
\begin{tabular}{l l r rrrrr}
\toprule
Cell & Regime & $n$ & retry & SL-G & SL-R & DL & $T_{\mathrm{loc}}{=}1.5$ \\
\midrule
sft1p7b/cruxeval & Unresolved    & 173 & \textbf{0.676} & 0.642 & \textbf{0.676} & 0.578 & 0.671 \\
                 & \textsf{RM-D} &  29 & 0.483 & 0.552 & 0.414 & \textbf{0.621} & 0.448 \\
                 & \textsf{RM-G} &  17 & 0.529 & 0.529 & 0.529 & 0.412 & 0.529 \\
\midrule
sft4b/cruxeval   & \textsf{DD}   & 103 & 0.932 & \textbf{0.971} & 0.961 & 0.375 & \textbf{0.971} \\
                 & \textsf{RM-D} &  62 & 0.726 & 0.726 & 0.726 & 0.189 & 0.694 \\
                 & Unresolved    &  76 & 0.658 & 0.776 & 0.763 & 0.474 & \textbf{0.789} \\
\bottomrule
\end{tabular}%
}
\end{center}

Two structural observations: (i) within Unresolved/\textsf{RM-D} at
both SFT scales the rescue rates cluster within an $\sim\!8$-pp band,
which is the methodological-bucket fingerprint (operator choice is
fungible); and (ii) DL is consistently dominated as a
single operator on both cells, reinforcing the global finding in
§\ref{sec:limitations}.

\end{document}